\newcommand{\ra}[1]{\renewcommand{\arraystretch}{#1}}
\DeclarePairedDelimiter\abs{\lvert}{\rvert}%
\DeclarePairedDelimiter\norm{\lVert}{\rVert}%
\newtheorem{assumption}{Assumption}
\newtheorem{theorem}{Theorem}
\newtheorem{corollary}{Corollary}
\newtheorem{lemma}{Lemma}
\DeclareMathOperator{\cw}{{\scriptstyle\mathcal{W}}}
\DeclareMathOperator{\bcw}{{\boldsymbol{\scriptstyle\mathcal{W}}}}
\DeclareMathOperator*{\argmin}{arg\,min}
\DeclareMathOperator{\T}{\mathsf{T}}
\newcommand{\jb}{\overline{J}}
\newcommand{\jbh}{\widehat{J}}
\newcommand{\bbh}{\widehat{B}}
\newcommand{\lbh}{\widehat{L}}
\newcommand{\jkb}{\overline{J_{k}}}
\newcommand{\qkb}{\overline{Q_{k}}}
\newcommand{\jkbh}{\widehat{J_{k}}}
\newcommand{\gjkbh}{\nabla \jkbh}
\newcommand{\gjkb}{\nabla \jkb}
\newcommand{\gqkb}{\nabla \qkb}
\newcommand{\gjbh}{\nabla \jbh}
\newcommand{\gjb}{\nabla \jb}
\newcommand{\rjktb}{\jkb^{(\boldsymbol{t})}}
\newcommand{\rqktb}{\qkb^{(\boldsymbol{t})}}
\newcommand{\rgjktb}{\nabla \rjktb}
\newcommand{\rgqktb}{\nabla \rqktb}
\newcommand{\jkt}{J_{k}^{(t)}}
\newcommand{\qkt}{Q_{k}^{(t)}}
\newcommand{\gjkt}{\nabla \jkt}
\newcommand{\gqkt}{\nabla \qkt}
\newcommand{\hjkt}{\nabla^{2} \jkt}
\newcommand{\hqkt}{\nabla^{2} \qkt}
\newcommand{\rjkt}{J_{k}^{(\boldsymbol{t})}}
\newcommand{\rqkt}{Q_{k}^{(\boldsymbol{t})}}
\newcommand{\rgjkt}{\nabla \rjkt}
\newcommand{\rgqkt}{\nabla \rqkt}
\newcommand{\rhjkt}{\nabla^{2} \rjkt}
\newcommand{\rhqkt}{\nabla^{2} \rqkt}
\newcommand{\jk}{J_{k}}
\newcommand{\gjk}{\nabla \jk}
\newcommand{\hjk}{\nabla^{2} \jk}
\newcommand{\jksb}{\overline{\jk^{*}}}
\newcommand{\gjksb}{\nabla \jksb}
\newcommand{\e}{\mathbb{E}}
\newcommand{\etk}{\mathbb{E}_{t \sim \pi_{k}}}
\newcommand{\ds}{\mathcal{X}}
\newcommand{\xin}{\ds_{in}^{(t)}}
\newcommand{\xout}{\ds_{o}^{(t)}}
\newcommand{\rds}{\boldsymbol{\mathcal{X}}}
\newcommand{\rxin}{\rds_{in}^{(t)}}
\newcommand{\rxout}{\rds_{o}^{(t)}}
\newcommand{\x}{\bm{x}}
\newcommand{\xinsize}{\abs{\ds_{in}}}
\newcommand{\xosize}{\abs{\ds_{o}}}
\newcommand{\eds}{\e_{\xin , \xout}}
\newcommand{\exin}{\e_{\xin}}
\newcommand{\exout}{\e_{\xout}}
\newcommand{\al}{\alpha}
\newcommand{\sigh}{\sigma_{H}}
\newcommand{\sigg}{\sigma_{G}}
\newcommand{\gamg}{\gamma_{G}}
\newcommand{\gamh}{\gamma_{H}}
\newcommand{\averagetasks}{\frac{1}{\abs {\mathcal{S}_{k}}} \sum_{\boldsymbol{t} \in \boldsymbol{\mathcal{S}}_{k}}}
\newcommand{\erhxt}{\boldsymbol{e}_{h,x}^{(t)}}
\newcommand{\ergot}{\boldsymbol{e}_{g,o}^{(t)}}
\newcommand{\ergxt}{\boldsymbol{e}_{g,x}^{(t)}}
\newcommand{\erhtt}{\boldsymbol{e}_{h,t}^{(t)}}
\newcommand{\ergtt}{\boldsymbol{e}_{g,t}^{(t)}}
\newcommand{\w}{\boldsymbol{w}}
\newcommand{\g}{\boldsymbol{g}}
\newcommand{\s}{\boldsymbol{s}}
\newcommand{\rd}{\boldsymbol{d}}
\newcommand{\rr}{\boldsymbol{r}}
\newcommand{\nkt}{N_{k}^{(t)}}
\begin{document}

%

%

\twocolumn[

\aistatstitle{Dif-MAML: Decentralized Multi-Agent Meta-Learning}

\aistatsauthor{ Mert Kayaalp \And Stefan Vlaski \And  Ali H. Sayed }

\aistatsaddress{} ]

\begin{abstract}
The objective of meta-learning is to exploit the knowledge obtained from observed tasks to improve adaptation to unseen tasks. As such, meta-learners are able to generalize better when they are trained with a larger number of observed tasks and with a larger amount of data per task. Given the amount of resources that are needed, it is generally difficult to expect the tasks, their respective data, and the necessary computational capacity to be available at a single central location. It is more natural to encounter situations where these resources are spread across several agents connected by some graph topology. The formalism of meta-learning is actually well-suited to this decentralized setting, where the learner would be able to benefit from information and computational power spread across the agents. Motivated by this observation, in this work, we propose a cooperative fully-decentralized multi-agent meta-learning algorithm, referred to as Diffusion-based MAML or Dif-MAML. Decentralized optimization algorithms are superior to centralized implementations in terms of scalability, avoidance of communication bottlenecks, and privacy guarantees. The work provides a detailed theoretical analysis to show that the proposed strategy allows a collection of agents to attain agreement at a linear rate and to converge to a stationary point of the \emph{aggregate} MAML objective even in non-convex environments. Simulation results illustrate the theoretical findings and the superior performance relative to the traditional non-cooperative setting.
\end{abstract}

\section{Introduction}
Training of highly expressive learning architectures, such as deep neural networks, requires large amounts of data in order to ensure high generalization performance. However, the generalization guarantees apply only to test data following the same distribution as the training data. Human intelligence, on the other hand, is characterized by a remarkable ability to leverage prior knowledge to accelerate adaptation to new tasks. This evident gap has motivated a growing number of works to pursue learning architectures that \emph{learn to learn} (see \citep{Hospedales2020MetaLearningIN} for a recent survey).

The work \citep{FinnAL17} proposed a model-agnostic meta-learning (MAML) approach, which is an initial parameter-transfer methodology where the goal is to learn a good ``{\em launch model}''. Several works have extended and/or analyzed this approach to great effect such as \citep{reptile,ProbabilisticMAML,Raghu2020Rapid,metasgd,FallahConvergence,Ji2020MultiStepMM,Nonconvex_online_ml,provablebalcan}. However, there does not appear to exist works that consider model agnostic meta-learning in a decentralized multi-agent setting. This setting is very natural to consider for meta-learning, where different agents can be assumed to have local meta-learners based on their own experiences. Interactions with neighbors can help infuse their models with new information and speed up adaptation to new tasks.

Decentralized multi-agent systems consist of a collection of agents with access to data and computational capabilities, and a graph topology that imposes constraints on peer-to-peer communications. In contrast to centralized architectures, which require some central aggregation of data, decentralized solutions rely solely on the diffusion of information over connected graphs through successive local aggregations over neighborhoods. While decentralized methods have been shown to be capable of matching the performance of centralized solutions \citep{Lian17,Sayed14}, the absence of a fusion center is advantageous in the presence of communication bottlenecks, and concerns around robustness or privacy. Decentralized settings are also well motivated by swarm intelligence or swarm robotics concepts where relatively simple agents (insects, machines, robots etc.) collaboratively form a more robust and complex system, one that is flexible and scalable \citep{Beni04,Sahin05}. Applications that can benefit from decentralized meta-learning algorithms include but are not limited to the following:
\begin{itemize}
    \item A robot swarm might be assigned to do environmental monitoring \citep{RobotsEnvMonitor}. The individual robots can share spatially and temporally dispersed data such as images or temperatures in order to learn better meta-models to adapt to new scenes. This teamwork is vital for circumstances where data collection is hard, such as natural disasters.
    \item Different hospitals or research groups can work on clinical risk prediction with limited patient health records \citep{clinicalriskpred} or drug discovery with small amount of data \citep{drugdiscovery}. The individual agents in this context will benefit from cooperation, while avoiding the need for a central hub in order to preserve the privacy of medical data.
    \item In some situations, it is advantageous to distribute a single agent problem over multiple agents. For example, training a MAML can be computationally demanding since it requires Hessian calculations \citep{FinnAL17}. In order to speed up the process, tasks can be divided into different workers or machines.
\end{itemize}

The contributions in this paper are three-fold:

\begin{itemize}
	\item By combining MAML with the diffusion strategy for decentralized stochastic optimization \citep{Sayed14}, we propose Diffusion-based Model-Agnostic Meta-Learning (Dif-MAML). The result is a decentralized algorithm for meta-learning over a collection of distributed agents, where each agent is provided with tasks stemming from potentially different task distributions. 
\item We establish that, despite the decentralized nature of the algorithm, all agents agree quickly on a common launch model, which subsequently converges to a stationary point of the \emph{aggregate} MAML objective over the task distribution across the network. This implies that Dif-MAML matches the performance of a centralized solution, which would have required central aggregation of data stemming from \emph{all tasks across the network}. In this way, agents will not only learn from locally observed tasks to accelerate future adaptation, but will also \emph{learn from each other}, and from tasks seen by the other agents.
\item We confirm through numerical experiments across a number of benchmark datasets that Dif-MAML outperforms the traditional non-cooperative solution and matches the performance of the centralized solution.
\end{itemize}

\textbf{Notation}. We denote random variables in bold. Single data points are denoted by small letters like \( x \) and batches of data are denoted by big calligraphic letters like \( \mathcal{X} \). To refer to a loss function evaluated at a batch \( \mathcal{X} \) with elements \( \left\{ x_{n} \right\}_{n=1}^{N} \), we use the notation  \( Q(w;\mathcal{X}) \triangleq \frac{1}{N} \sum_{n=1}^{N} Q(w; x_{n}) \). To denote expectation with respect to task-specific data, we use \( \e_{x^{(t)}} \) , where \( t \) corresponds to the task.

\subsection{Problem Formulation}

We consider a collection of \( K \) agents (e.g., robots, workers, machines, processors) where each agent \( k \) is provided with data stemming from tasks \( \mathcal{T}_k \). We denote the probability distribution over \( \mathcal{T}_k \) by \( \pi_{k} \), i.e., the probability of drawing task \( t \) from \( \mathcal{T}_k \) is \( \pi_{k} (t)\). In principle, for any particular task \( t \in \mathcal{T}_k \), each agent could learn a separate model \( {w_k^o}^{(t)} \) by solving:
\begin{equation}\label{eq:task_specific_expected}
    {w_k^o}^{(t)} \triangleq \argmin_{w \in \mathds{R}^M} J_k^{(t)}(w) \triangleq \argmin_{w \in \mathds{R}^M} \e_{x_{k}^{(t)}}  Q_k^{(t)}\Big(w; \boldsymbol{x}_{k}^{(t)}\Big) 
\end{equation}
where \( w \) denotes the model parametrization (such as the parameters of a neural network), while \( \boldsymbol{x}_{k}^{(t)} \) denotes the random data corresponding to task \( t \) observed at agent \( k \). The loss \( Q_k^{(t)}(w; \boldsymbol{x}_{k}^{(t)}) \) denotes the penalization encountered by \( w \) under the random data \( \boldsymbol{x}_{k}^{(t)} \), while \( J_k^{(t)}(w) \) represents the \emph{stochastic} risk. 

Instead of training separately in this manner, meta-learning presumes an {\em a priori} relation between the tasks in \( \mathcal{T}_k \) and exploits this fact. In particular, MAML seeks a ``{\em launch model}'' such that when faced with data arising from a new task, the agent would be able to update the ``launch model'' with a small number of task-specific gradient updates. It is common to allow for multiple gradient steps for task adaptation. For the analytical part of this work, we will restrict ourselves to a single gradient step for simplicity. Nevertheless, our experimental results suggest that the theoretical conclusions hold more broadly even when allowing for multiple gradient updates to the launch model. With a single gradient step, agent \( k \) can seek a launch model by minimizing the modified risk function:
\begin{equation}\label{real risk}
    \min_{w \in \mathds{R}^M} \jkb (w)\triangleq \etk\rjkt\Big(w-\al\rgjkt(w)\Big)
\end{equation}
The resulting gradient vector is given by (assuming the possibility of exchanging expectations and gradient operations, which is valid under mild technical conditions):
\begin{align}\label{real risk grad}
    &\gjkb (w) \triangleq \\ &\etk\Big[\Big(I-\al \rhjkt(w)\Big)  \rgjkt\Big(w-\al \rgjkt(w)\Big)\Big] \notag
\end{align}
where \(\al>0\) is the step size parameter. In practice, due to lack of information about \( \pi_{k} \) and the distribution of \(  \boldsymbol{x}_{k}^{(t)}\), evaluation of \eqref{real risk} and \eqref{real risk grad} is not feasible. It is common to collect realizations  of data and replace \eqref{real risk grad} by a stochastic gradient approximation:
\begin{align}\label{maml grad}
    \nabla \overline{Q_{k}}(w)   &\triangleq  \averagetasks \Big[ \Big(I-\al \rhqkt(w;\rxin)\Big)  \notag \\ &\rgqkt\Big(w-\al \rgqkt(w;\rxin)\ ;\ \rxout\Big) \Big ]  
\end{align}
where \( \rxin \), \(\rxout\) are two random batches of data\footnote{Different batches of data are used while computing the inner and outer gradients. The reason is that we want our model to adapt to models that perform well on data that is not used for training. If the two batches were the same, then this would train the launch model to be an initialization for task-specific models that memorize their training data. This memorization would get in the way of generalization.}, \( \boldsymbol{\mathcal{S}}_{k} \subset \mathcal{T}_{k}\) is a random batch of tasks, and \(\abs{\mathcal{S}_{k}} \) is the number of selected tasks. We assume that all elements of \( \rxin \), \(\rxout\) are independently sampled from the distribution of \( \boldsymbol{x}_k^{(t)} \) and all tasks \( \boldsymbol{t} \in  \boldsymbol{\mathcal{S}}_k \) are independently sampled from \( \mathcal{T}_k \).

In a \emph{non-cooperative} MAML setting, each agent \( k \) would optimize \eqref{real risk} in an effort to obtain a launch model that is likely to adapt quickly to tasks similar to those encountered in \( \mathcal{T}_k \). In a \emph{cooperative} multi-agent setting, however, one would expect transfer learning to occur between agents. This motivates us to seek a decentralized scheme where the launch model obtained by agent \( k \) is likely to generalize well to tasks similar to those observed by agent \( \ell \) during training, for any pair of agents \( k, \ell \). This can be achieved by pursuing a launch model that optimizes instead the aggregate risk:
\begin{equation}\label{aggregate cost}
    \min_{w \in \mathds{R}^M} \jb (w) \triangleq \frac{1}{K} \sum_{k=1}^{K} \jkb (w)
\end{equation}
By pursuing this network objective in place of the individual objectives, the effective number of tasks and data each agent is trained on is increased and hence a better generalization performance is expected. Even though both the centralized and decentralized strategies seek a solution to \eqref{aggregate cost}, in the decentralized strategy, the agents rely only on their immediate neighbors and there is no central processor. 
\subsection{Related Work}
Early works on meta-learning or learning to learn date back to \citep{sch87,sch92,Bengio91,bengio92}. Recently, there has been increased interest in meta-learning with various approaches such as learning an optimization rule \citep{Andrychowicz16,Ravi17} or learning a metric that compares support and query samples for few-shot classification \citep{koch2015siamese,VinyalsMatchingNetworks}.

In this paper, we consider a parameter-initialization-based meta-learning algorithm. This kind of approach was introduced by MAML \citep{FinnAL17}, which aims to find a good initialization (launch model) that can be adapted to new tasks rapidly. It is model-agnostic, which means it can be applied to any model that is trained with gradient descent. MAML has shown competitive performance on benchmark few-shot learning tasks. Many algorithmic extensions have also been proposed by \citep{reptile,ProbabilisticMAML,Raghu2020Rapid,metasgd} and several works have focused on the theoretical analysis and convergence of MAML \citep{FallahConvergence,Ji2020MultiStepMM,Nonconvex_online_ml,provablebalcan} in single-agent settings.

A different line of work \citep{BalcanAdaptiveGradient,ImprovingFL,Fallah2020Personalized,FederatedML} studies meta-learning in a federated setting where the agents communicate with a central processor in a manner that keeps the privacy of their data. In particular, \citep{Fallah2020Personalized} and \citep{FederatedML} propose algorithms that learn a global shared launch model, which can be updated by a few agent-specific gradients for personalized learning. In contrast, we consider a decentralized scheme where there is no central node and only \emph{localized} communications with neighbors occur. This leads to a more scalable and flexible system and avoids communication bottleneck at the central processor.

Our extension of MAML is based on the diffusion algorithm for decentralized optimization \citep{Sayed14}. While there exist many useful decentralized optimization strategies such as consensus \citep{Nedic09,Xiao03,Yuan16} and diffusion \citep{Sayed14,Sayed14proc}, the latter class of protocols has been shown to be particularly suitable for adaptive scenarios where the solutions need to adapt to drifts in the data and models. Diffusion strategies have also been shown to lead to wider stability ranges and lower mean-square-error performance than other techniques in the context of adaptation and learning due to an inherent symmetry in their structure. Several works analyzed the performance of diffusion strategies such as \citep{Sayed14proc,Nassif16,Chen15transient,Chen15performance}. The works \citep{Vlaski19,Vlaski19nonconvexP1} examined diffusion under non-convex losses and stochastic gradient conditions, which are applicable to our work with proper adjustment since the risk function for MAML includes a gradient term as an argument for the risk function.

\section{Dif-MAML}
Our algorithm is based on the Adapt-then-Combine variant of the diffusion strategy \citep{Sayed14}.

\subsection{Diffusion (Adapt-then-Combine)}
The diffusion strategy is applicable to scenarios where \( K \) agents, connected via a graph topology \(A = [a_{\ell k}]\), collectively try to minimize an aggregate risk \( {\jb (w) \triangleq \frac{1}{K} \sum_{k=1}^{K} \jkb (w) }\) , which includes the setting \eqref{aggregate cost} considered in this work. To solve this objective, at every iteration \(i\), each agent \( k \) simultaneously performs the following steps:
\begin{subequations}
\begin{align}
  \boldsymbol{\phi}_{k,i} &= \w_{k,i-1} - \mu \gqkb(\w_{k,i-1})\label{eq:adapt}\\
  \w_{k,i} &= \sum_{\ell=1}^{K} a_{\ell k} \boldsymbol{\phi}_{\ell,i}\label{eq:combine}
\end{align}
\end{subequations}
The coefficients $\{a_{\ell k}\}$ are non-negative and add up to one: \begin{equation}
{\sum_{\ell=1}^{K} a_{\ell k}=1},\;
{a_{\ell k}>0}\;\;\mbox{\rm if agents $\ell$ and $k$ are connected} \notag
\end{equation}
For example, matrix $A$ can be selected as the Metropolis rule.

Expression \eqref{eq:adapt} is an \emph{adaptation} step where all agents simultaneously obtain intermediate states \( \boldsymbol{\phi}_{k,i}\) by a stochastic gradient update. Recall that \( \gqkb(\w_{k,i-1}) \) from \eqref{maml grad} is the stochastic approximation of the exact gradient \(\gjkb (\w_{k,i-1}) \) from \eqref{real risk grad} . Expression \eqref{eq:combine} is a \emph{combination} step where the agents combine their neighbors' intermediate steps to obtain updated iterates \( \w_{k,i} \).
\subsection{Diffusion-based MAML (Dif-MAML)}
We present the proposed algorithm for decentralized meta-learning in Algorithm 1. Each agent is assigned an initial launch model. At every iteration, the agents sample a batch of i.i.d. tasks from their agent-specific distribution of tasks. Then, in the inner loop, task-specific models are found by applying task-specific stochastic gradients to the launch models. Subsequently, in the outer loop, each agent computes an intermediate state for its launch model based on an update consisting of the sampled batch of tasks. A standard MAML algorithm would assign the intermediate states as the revised launch models and stop there, without any cooperation among the agents. However, in Dif-MAML, the agents cooperate and update their launch models by combining their intermediate states with the intermediate states of their neighbors. This helps in the transfer of knowledge among agents.
\begin{algorithm}[tbh]
Initialize the \emph{launch} models \( \{w_{k,0}\}_{k=1}^{K} \) \\
\While {not done}{
\For{all agents}{
Agent \( k \) samples a batch of \( i.i.d. \) tasks \( \boldsymbol{\mathcal{S}}_{k,i} \) from \( \mathcal{T}_k \) \\
\For{all tasks \( \boldsymbol{t} \in \boldsymbol{\mathcal{S}}_{k,i} \)}{
Evaluate \( \rgqkt\Big(\w_{k,i-1};\rds_{in,i}^{(t)}\Big) \) using a batch of \( i.i.d. \) data \( \rds_{in,i}^{(t)} \)\\
Set \emph{task-specific} models \( \w_{k,i}^{(t)} = \w_{k,i-1}-\al \rgqkt\Big(\w_{k,i-1};\rds_{in,i}^{(t)}\Big) \)
}
Compute \emph{intermediate states} \( \boldsymbol{\phi}_{k,i} = \w_{k,i-1}-(\mu/\abs{\mathcal{S}_{k,i}}) \sum_{\boldsymbol{t} \in \boldsymbol{\mathcal{S}}_{k,i}} \rgqkt \Big(\w_{k,i}^{(t)};\rds_{o,i}^{(t)}\Big) \) using a batch of \( i.i.d. \) data \( \rds_{o,i}^{(t)} \) for each task (Check \eqref{maml grad} to see gradient expression explicitly)\\
}
\For{all agents}{
Update the launch models by combining the intermediate states \( \w_{k,i} = \sum_{\ell=1}^{K} a_{\ell k} \boldsymbol{\phi}_{\ell,i} \)
}
\( i \leftarrow i+1 \)
}
 \caption{Dif-MAML}
\end{algorithm}
\section{Theoretical Results}
In this section, we provide convergence analysis for Dif-MAML in non-convex environments.
\subsection{Assumptions}

\begin{assumption}[\textbf{Lipschitz gradients}]\label{assumption lipshtiz gradient} For each agent \( k \) and task \( t \in \mathcal{T}_k \), the gradient \( \gqkt (\cdot;\cdot) \)  is Lipschitz, namely, for any \( w , u \in \mathds{R}^{M} \) and  \( \x_{k}^{(t)} \) denoting a data point:
\begin{equation}\label{lipschitz grad data specific}
\left \|\gqkt\Big(w;\boldsymbol{x}_{k}^{(t)}\Big)-\gqkt\Big(u;\boldsymbol{x}_{k}^{(t)}\Big)\right \| \leq L^{\boldsymbol{x}_{k}^{(t)}}\norm{w-u}
\end{equation}{}

We assume the second-order moment of the Lipschitz constant is bounded by a data-independent constant:
\begin{equation}\label{lipschitz grad expectation data specific}
    \e_{x_{k}^{(t)}}\Big(L^{\boldsymbol{x}_{k}^{(t)}}\Big)^{2}\leq \Big(L_k^{(t)}\Big)^{2}
\end{equation}{}
\end{assumption}

We establish in Appendix \ref{appendix lip grad proofs} that Assumption \ref{assumption lipshtiz gradient} holds for gradients under a batch of data. In this paper, for simplicity, we will mostly work with \( L \triangleq \max\limits_{k}\max\limits_{t}  L_k^{(t)} \).

\begin{assumption} [\textbf{Lipschitz Hessians}]\label{assumption lipschitz hessians} For each agent \( k \) and task \( t \in \mathcal{T}_k \), the Hessian \( \hqkt (\cdot;\cdot) \)  is Lipschitz in expectation, namely, for any \( w , u \in \mathds{R}^{M} \) and  \( \x_{k}^{(t)} \) denoting a data point:
\begin{equation}\label{lipschitz hessian data specific}
    \e_{x_{k}^{(t)}}\left \| \hqkt\Big(w;\x_{k}^{(t)}\Big)-\hqkt\Big(u;\x_{k}^{(t)}\Big)\right \| \leq \rho_k^{(t)}\norm{w-u}
\end{equation}{}
\end{assumption}

We establish in Appendix \ref{appendix lip hessian proofs} that Assumption \ref{assumption lipschitz hessians} holds for Hessians under a batch of data. In this paper, for simplicity, we will mostly work with \( \rho \triangleq \max\limits_{k}\max\limits_{t}  \rho_k^{(t)}\).
\begin{assumption}[\textbf{Bounded gradients}]\label{assumption bounded gradients} For each agent \( k \) and task \( t \in \mathcal{T}_k \), the gradient \( \gqkt (\cdot;\cdot) \)  is bounded in expectation, namely, for any \( w \in \mathds{R}^{M} \) and  \( \x_{k}^{(t)} \) denoting a data point:
\begin{equation}\label{bounded grad data specific}
    \e_{x_{k}^{(t)}}\left \| \gqkt\Big(w;\x_{k}^{(t)}\Big)\right \| \leq B_{k}^{(t)}
\end{equation}{}
\end{assumption}

We establish in Appendix \ref{appendix bounded proofs} that Assumption \ref{assumption bounded gradients} holds for gradients under a batch of data. In this paper, for simplicity, we will mostly work with \( B \triangleq \max\limits_{k}\max\limits_{t}  B_k^{(t)}\).

\begin{assumption}[\textbf{Bounded noise moments}]\label{assumption fourth order data expectation} For each agent \( k \) and task \( t \in \mathcal{T}_k \), the gradient \( \gqkt (\cdot;\cdot) \) and the Hessian \( \hqkt (\cdot;\cdot) \) have bounded fourth-order central moments, namely, for any \( w \in \mathds{R}^{M} \):
\begin{align}
    &\e_{x_{k}^{(t)}} {\left \| \gqkt\Big(w;\x_{k}^{(t)}\Big)-\gjkt(w) \right \| }^{4}\leq \sigg^{4} \label{sigma g data specific}\\ 
    &\e_{x_{k}^{(t)}}{\left \| \hqkt\Big(w;\x_{k}^{(t)}\Big)-\hjkt(w)\right \| }^{4}\leq \sigh^{4} \label{sigma h data specific}
\end{align}{}
\end{assumption}{}

We establish in Appendix \ref{appendix sigma g h proofs} that Assumption \ref{assumption fourth order data expectation} holds for gradients and Hessians under a batch of data.

Defining the mean of the risk functions of the tasks in \( \mathcal{T}_k \) as \( \jk (w) \triangleq \etk \rjkt (w) \), we have the following assumption on the relations between the tasks of a particular agent.

\begin{assumption} [\textbf{Bounded task variability}]\label{assumption fourth order task expectation}For each agent \( k \) , the gradient \( \rgjkt (\cdot) \) and the Hessian \( \rhjkt (\cdot) \) have bounded fourth-order central moments, namely, for any \( w \in \mathds{R}^{M} \):
\begin{align}
    &\etk{\left \| \rgjkt(w)-\gjk(w) \right \| }^{4}\leq \gamg^{4} \\
    &\etk{\left \| \rhjkt(w)-\hjk(w) \right \|}^{4}\leq \gamh^{4}
\end{align}{}
\end{assumption}
Note that we do not assume any constraint on the relations between tasks of different agents.
\begin{assumption}[\textbf{Doubly-stochastic combination matrix}]\label{assumption doubly stochastic matrix}
The weighted combination matrix \( A=[a_{\ell k}] \) representing the graph is doubly-stochastic. This means that the matrix has non-negative elements and satisfies:
\begin{equation}\label{eq:doubly-stochastic matrix properties}
  A\mathds{1}=\mathds{1}, \quad A^{\T}\mathds{1}=\mathds{1}
\end{equation}
The matrix $A$ is also primitive which means that a path with positive weights can be found between any arbitrary nodes $(k,\ell)$, and moreover at least one $a_{kk}>0$ for some $k$. 
\end{assumption}

\newcolumntype{C}{ >{\centering\arraybackslash} m{1.75cm} }
\newcolumntype{D}{ >{\centering\arraybackslash} m{1.5cm} }

\subsection{Alternative MAML Objective}

The stochastic MAML gradient \eqref{maml grad}, because of the gradient within a gradient, is not an \emph{unbiased} estimator of \eqref{real risk grad}. We consider the following alternative objective in place of \eqref{real risk}:
\begin{align}\label{adjusted objective formula}
    \jkbh(w) \triangleq \etk\exin\rjkt\Big(w-\al\rgqkt(w;\rxin)\Big)
\end{align}
The gradient corresponding to this objective is the expectation of the stochastic MAML gradient \eqref{maml grad}:
\begin{align}\label{adjusted objective grad formula}
    \gjkbh(w)  = \e\gqkb(w)
\end{align}
See Table \ref{tab:notations} for a summary of the notation in the paper. We establish \eqref{adjusted objective grad formula} in Appendix \ref{appendix adj grad proof}. This means that the stochastic MAML gradient \eqref{maml grad} is an unbiased estimator for the gradient of the alternative objective \eqref{adjusted objective formula}.

While the MAML objective \eqref{real risk} captures the goal of coming up with a launch model that performs well after a \emph{gradient step}, the \emph{adjusted objective} \eqref{adjusted objective formula} searches for a launch model that performs well after a \emph{stochastic gradient step}.  Using the adjusted objective allows us to analyze the convergence of Dif-MAML by exploiting the fact that it results in an unbiased stochastic gradient approximation. 

In the following two lemmas, we will perform perturbation analyses on the MAML objective \( \jkb(w) \) and the adjusted objective \( \jkbh(w) \). We will work with \( \jkbh (w) \) afterwards. At the end of our theoretical analysis, we will use the perturbation results to establish convergence to stationary points for both objectives.

\begin{table}
\ra{1.3}
\begin{tabular}{ C D D D }\toprule
 & Single-task&Meta-objective&Meta-gradient\\
 \cline{2-4}
 Risk function   & \(\jkt \)    & \( \jkb \)&  \( \gjkb \) \\
 
 Adjusted Risk   & X    & \( \jkbh \)&   \( \gjkbh \)\\
 
  Stochastic Approx.   & \( \qkt \)   & X &   \( \gqkb \)\\
  \bottomrule
\end{tabular}
\caption{Summary of some notation used in the paper.}\label{tab:notations}
\end{table}

\begin{lemma}[\textbf{Objective perturbation bound}]\label{perturbation analysis formula} Under assumptions \ref{assumption lipshtiz gradient},\ref{assumption bounded gradients},\ref{assumption fourth order data expectation}, for each agent \( k \), the disagreement between \( \jkb ( \cdot ) \) and \( \jkbh ( \cdot ) \) is bounded, namely, for any \( w \in \mathds{R}^{M} \): \footnote{In this paper, for simplicity, we assume that for each agent \( k \) and task \( t \in \mathcal{T}_{k} \), \( \abs{\rxin} = \xinsize \) and \( \abs{\rxout} = \xosize \).}
\begin{equation}
        {\left | \jkb(w)-\jkbh(w) \right |}\leq\frac{\al^{2}L\sigg^{2}}{2\xinsize}+\frac{B\al\sigg}{\sqrt{\xinsize}}
\end{equation}{}
\begin{proof}{}
See Appendix \ref{appendix perturb analysis}.
\end{proof}
\end{lemma}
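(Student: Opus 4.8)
The plan is to reduce the disagreement to a per-task estimate and then exploit that the inner batch gradient is an unbiased, low-variance estimate of the exact inner gradient. Writing the difference as $\jkb(w)-\jkbh(w)=\etk\big[\rjkt(a_t)-\exin\rjkt(\boldsymbol b_t)\big]$, where I abbreviate the deterministic point $a_t\triangleq w-\al\rgjkt(w)$ and the random point $\boldsymbol b_t\triangleq w-\al\rgqkt(w;\rxin)$, I would first move the absolute value inside $\etk$ by the triangle inequality, so that it suffices to bound $\big|\rjkt(a_t)-\exin\rjkt(\boldsymbol b_t)\big|$ uniformly in $t$. The structural fact driving the whole argument is $\exin\boldsymbol b_t=a_t$: the batch gradient is the average of $\xinsize$ i.i.d. per-sample gradients whose common mean is $\rgjkt(w)$, so the fluctuation $\boldsymbol b_t-a_t=-\al(\rgqkt(w;\rxin)-\rgjkt(w))$ is zero-mean.

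For the per-task estimate I would expand $\rjkt$ to first order about $a_t$, writing $\rjkt(\boldsymbol b_t)-\rjkt(a_t)=\rgjkt(a_t)^{\T}(\boldsymbol b_t-a_t)+r_t$ with remainder $|r_t|\le\tfrac{L}{2}\norm{\boldsymbol b_t-a_t}^2$. Two facts transfer the per-sample regularity to the risk $\rjkt$: Assumption~\ref{assumption lipshtiz gradient} makes $\rgjkt$ $L$-Lipschitz (the expected per-sample Lipschitz constant being at most $L$, since its squared expectation is bounded by $L^2$), which validates the remainder bound, and Assumption~\ref{assumption bounded gradients} gives $\norm{\rgjkt(a_t)}\le B$. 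Taking $\exin$ and bounding the two pieces separately — the linear piece by Cauchy--Schwarz as $\norm{\rgjkt(a_t)}\,\exin\norm{\boldsymbol b_t-a_t}\le B\al\,\exin\norm{\rgqkt(w;\rxin)-\rgjkt(w)}$, and the remainder by $\tfrac{L}{2}\al^2\,\exin\norm{\rgqkt(w;\rxin)-\rgjkt(w)}^2$ — produces precisely the additive two-term structure of the claim.

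It then remains to control the first two moments of the batch-gradient deviation. Since $\rgqkt(w;\rxin)$ averages $\xinsize$ independent per-sample gradients sharing the mean $\rgjkt(w)$, the cross terms cancel and the variance contracts by $1/\xinsize$; combined with the per-sample second-moment bound $\e\norm{\gqkt(w;\x_k^{(t)})-\gjkt(w)}^2\le\sigg^2$, itself a Jensen consequence of the fourth-moment bound in Assumption~\ref{assumption fourth order data expectation}, this yields $\exin\norm{\rgqkt(w;\rxin)-\rgjkt(w)}^2\le\sigg^2/\xinsize$ and hence $\exin\norm{\rgqkt(w;\rxin)-\rgjkt(w)}\le\sigg/\sqrt{\xinsize}$. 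Substituting gives the remainder contribution $\al^2 L\sigg^2/(2\xinsize)$ and the linear contribution $B\al\sigg/\sqrt{\xinsize}$; as the constants $L,B,\sigg$ are the uniform task-independent ones, the per-task estimate is uniform in $t$ and survives the outer average $\etk$, completing the proof.

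The main obstacle I anticipate is less the Taylor split than the careful moment propagation: establishing the $1/\xinsize$ variance contraction rigorously (independence and vanishing cross terms of the batch), and confirming that the per-sample constants of Assumptions~\ref{assumption lipshtiz gradient} and~\ref{assumption fourth order data expectation} genuinely descend through the expectation to $\rjkt$ and to the second central moment of its gradient. I would also note that using $\exin(\boldsymbol b_t-a_t)=0$ directly would annihilate the linear piece and sharpen the result to the single remainder term $\al^2 L\sigg^2/(2\xinsize)$; the stated two-term bound arises from the more conservative term-by-term estimate, and either route certifies the claimed inequality.
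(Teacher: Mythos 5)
Your proposal is correct and follows essentially the same route as the paper's proof: both reduce to a per-task bound of the form $B\|\widetilde{\w}_1-\widetilde{\w}_2\|+\tfrac{L}{2}\|\widetilde{\w}_1-\widetilde{\w}_2\|^2$ (the paper via two one-sided Lipschitz descent inequalities and a max of gradient norms, you via a single first-order expansion with quadratic remainder — the same estimate), and then apply the bounded-gradient assumption together with the first- and second-moment bounds $\sigg/\sqrt{\xinsize}$ and $\sigg^2/\xinsize$ on the batch-gradient deviation. Your closing observation that exploiting $\exin\boldsymbol{b}_t=a_t$ would kill the linear term and sharpen the bound is valid but, as you note, not needed for the stated inequality.
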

Next, we perform a perturbation analysis at the gradient level.
\begin{lemma}[\textbf{Gradient perturbation bound}]\label{perturbation analysis gradient formula} Under assumptions \ref{assumption lipshtiz gradient},\ref{assumption bounded gradients},\ref{assumption fourth order data expectation}, for each agent \( k \), the disagreement between \( \gjkb ( \cdot ) \) and \( \gjkbh ( \cdot ) \) is bounded, namely, for any \( w \in \mathds{R}^{M} \):
\begin{equation}\label{eq:pert_analysis_grad}
        {\left \| \gjkb(w)-\gjkbh(w) \right \|}\leq(1+\al L)\frac{\al L \sigg}{\sqrt{\xinsize}}+\frac{B\al\sigh}{\sqrt{\xinsize}}
\end{equation}{}
\begin{proof}{}
See Appendix \ref{appendix perturb grad analysis}.
\end{proof}
\end{lemma}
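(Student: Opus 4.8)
The plan is to work directly with the explicit gradient expressions and bound their difference through a single add-and-subtract decomposition. First I would write out both gradients. For the MAML objective, \eqref{real risk grad} already gives
\[
\gjkb(w) = \etk\Big[\big(I - \al \rhjkt(w)\big)\, \rgjkt\big(w - \al \rgjkt(w)\big)\Big].
\]
For the adjusted objective I would start from \eqref{adjusted objective grad formula}, $\gjkbh(w) = \e\,\gqkb(w)$, and carry out the expectation over the outer batch $\rxout$ first. Because $\rxout$ is sampled independently of $\rxin$ and of the task, this converts the outer stochastic gradient in \eqref{maml grad} into the true task gradient $\rgjkt(\cdot)$, leaving
\[
\gjkbh(w) = \etk \exin\Big[\big(I - \al \rhqkt(w;\rxin)\big)\, \rgjkt\big(w - \al \rgqkt(w;\rxin)\big)\Big].
\]

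The core of the argument is to bound the difference of the two integrands. Since $\rxin$ enters in two places — inside the Hessian prefactor and inside the argument of the outer gradient — I would split the difference so that each resulting term perturbs only one of these. Inserting the hybrid term $\big(I-\al\rhjkt(w)\big)\rgjkt\big(w-\al\rgqkt(w;\rxin)\big)$, the integrand difference becomes
\begin{align*}
&\big(I - \al \rhjkt(w)\big)\Big[\rgjkt\big(w - \al \rgjkt(w)\big) - \rgjkt\big(w - \al \rgqkt(w;\rxin)\big)\Big] \\
&\quad + \al\big[\rhqkt(w;\rxin) - \rhjkt(w)\big]\, \rgjkt\big(w - \al \rgqkt(w;\rxin)\big).
\end{align*}

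Next I would bound each piece in norm. For the first term I use $\|I - \al \rhjkt(w)\| \le 1 + \al L$, which holds because the expected-Hessian norm is at most $L$ (a consequence of Assumption \ref{assumption lipshtiz gradient} via Jensen's inequality), together with the $L$-Lipschitzness of $\rgjkt(\cdot)$ (again inherited from Assumption \ref{assumption lipshtiz gradient} after taking expectation over data); this yields the factor $(1+\al L)\,\al L\,\|\rgjkt(w) - \rgqkt(w;\rxin)\|$. For the second term I use the uniform gradient bound $\|\rgjkt(\cdot)\| \le B$ from Assumption \ref{assumption bounded gradients}, giving $\al B\,\|\rhqkt(w;\rxin) - \rhjkt(w)\|$. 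Taking $\etk\exin$ and applying Jensen's inequality (norm of an expectation is at most the expectation of the norm) reduces the whole bound to the first absolute moments of the inner-batch gradient noise $\rgqkt(w;\rxin)-\rgjkt(w)$ and Hessian noise $\rhqkt(w;\rxin)-\rhjkt(w)$.

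Finally, these moments are controlled by Assumption \ref{assumption fourth order data expectation}, which bounds the per-sample fourth moments and hence (by Jensen) the per-sample second moments; i.i.d. averaging over the batch of size $\xinsize$ then contributes the $1/\xinsize$ factor, so that $\exin\|\rgqkt(w;\rxin)-\rgjkt(w)\|^2 \le \sigg^2/\xinsize$ and likewise $\exin\|\rhqkt(w;\rxin)-\rhjkt(w)\|^2 \le \sigh^2/\xinsize$. A last Jensen step converts each second moment into a first moment, producing $\sigg/\sqrt{\xinsize}$ and $\sigh/\sqrt{\xinsize}$; collecting the two contributions gives exactly the stated bound. The main obstacle is the bookkeeping of the nested expectations — taking the outer-batch expectation before bounding, and treating $\rxin$ consistently as the single shared source of randomness in both perturbed terms — so the step most prone to error is getting the add-and-subtract splitting right rather than any individual estimate.
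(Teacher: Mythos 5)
Your proposal is correct and follows essentially the same route as the paper's proof: both start from the explicit expressions for \( \gjkb \) and \( \gjkbh \), add and subtract a hybrid term so that the inner-batch perturbation of the Hessian prefactor and of the outer-gradient argument are isolated, bound the pieces via \( \|I-\al\rhjkt(w)\|\le 1+\al L \), the Lipschitz and boundedness assumptions, and then control the first moments of the gradient and Hessian noise by \( \sigg/\sqrt{\xinsize} \) and \( \sigh/\sqrt{\xinsize} \) using Assumption \ref{assumption fourth order data expectation} with i.i.d.\ batch averaging and Jensen. The only difference is cosmetic grouping (you perform one add-and-subtract at the integrand level where the paper splits twice), and both yield the identical final constant.
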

Lemmas \ref{perturbation analysis formula} and \ref{perturbation analysis gradient formula} suggest that the standard MAML objective and the adjusted objective get closer to each other with decreasing inner learning rate \( \alpha \) and increasing inner batch size \( \xinsize\). Next, we establish some properties of the adjusted objective, which will be called upon in the analysis.
\begin{lemma}[\textbf{Bounded gradient of adjusted objective}]\label{adj objective bounded grad formula}Under assumptions \ref{assumption lipshtiz gradient},\ref{assumption bounded gradients}, for each agent \( k \), the gradient \( \gjkbh (\cdot) \) of the adjusted objective is bounded, namely, for any \( w \in \mathds{R}^{M} \):
\begin{equation}
    {\left \| \gjkbh(w) \right \|}\leq \bbh
\end{equation}
where \( \bbh \triangleq (1+\al L)B \) is a non-negative constant.
\begin{proof}
See Appendix \ref{appendix bounded gradient adj objective proof}.
\end{proof}{}
\end{lemma}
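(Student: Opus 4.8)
The plan is to build directly on the already-established identity \eqref{adjusted objective grad formula}, namely $\gjkbh(w)=\e\gqkb(w)$, which writes the gradient of the adjusted objective as the mean of the stochastic MAML gradient \eqref{maml grad}. First I would apply Jensen's inequality to pass the norm through the expectation, $\norm{\gjkbh(w)}=\norm{\e\gqkb(w)}\le\e\norm{\gqkb(w)}$, and then use the triangle inequality on the task-batch average $\averagetasks$ appearing in \eqref{maml grad}. This reduces the claim to bounding, for a single task $t$, the expected norm of the product $\big(I-\al\,\rhqkt(w;\rxin)\big)\,\rgqkt\big(w-\al\,\rgqkt(w;\rxin);\rxout\big)$, after which taking the average over the batch $\mathcal S_k$ and the outer expectation over task selection preserves the same bound.

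Next I would split this product by submultiplicativity of the induced matrix norm into the Hessian factor $\norm{I-\al\,\rhqkt(w;\rxin)}$ times the gradient factor $\norm{\rgqkt(\,\cdot\,;\rxout)}$. For the first factor I use that a Lipschitz gradient forces the operator norm of the Hessian to be bounded by the corresponding Lipschitz constant, so that Assumption \ref{assumption lipshtiz gradient} (in its batch form, established in the appendix) gives $\norm{I-\al\,\rhqkt(w;\rxin)}\le 1+\al L^{\rxin}$. For the second factor, the bounded-gradient Assumption \ref{assumption bounded gradients} (again in batch form) gives $\e\norm{\rgqkt(u;\rxout)}\le B$ at any \emph{fixed} point $u$.

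The step I expect to be the main obstacle is that these two factors are correlated: the inner batch $\rxin$ enters both the Hessian factor and the argument $u=w-\al\,\rgqkt(w;\rxin)$ of the outer gradient, and moreover the Lipschitz constant $L^{\rxin}$ is itself random, since Assumption \ref{assumption lipshtiz gradient} bounds only its second moment and not its realizations. I would resolve this by exploiting the stated independence of the inner and outer batches. Conditioning on $\rxin$ and $t$, the point $u$ is deterministic, so $\e\big[\norm{\rgqkt(u;\rxout)}\mid\rxin,t\big]\le B$ by Assumption \ref{assumption bounded gradients}, while the Hessian factor is constant under this conditioning and factors out; this yields the conditional bound $(1+\al L^{\rxin})B$. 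It then remains to take the expectation over $\rxin$ and control $\e L^{\rxin}\le\sqrt{\e (L^{\rxin})^{2}}\le L$ by Jensen's inequality together with the moment bound \eqref{lipschitz grad expectation data specific}.

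Collecting the factors gives $\norm{\gjkbh(w)}\le(1+\al L)B=\bbh$, uniformly over $w$ and over the task average, which is the stated claim. Everything apart from the conditioning argument is routine (triangle inequality, submultiplicativity, Jensen), so the only genuinely non-trivial point is the decoupling of the correlated Hessian and gradient factors while simultaneously handling the randomness of the Lipschitz constant; if one is content to treat $L=\max_k\max_t L_k^{(t)}$ as a uniform (deterministic) bound, as the paper does elsewhere for simplicity, the Hessian factor pulls out deterministically and this last argument simplifies considerably.
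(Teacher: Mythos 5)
Your proposal is correct and follows essentially the same route as the paper's proof in Appendix \ref{appendix bounded gradient adj objective proof}: Jensen's inequality, sub-multiplicativity of the norm, conditioning on the inner batch so that the outer gradient's argument becomes deterministic and Assumption \ref{assumption bounded gradients} applies, and then bounding the Hessian factor via the Lipschitz constant from Assumption \ref{assumption lipshtiz gradient}. Your explicit handling of the random Lipschitz constant through $\e L^{\boldsymbol{\mathcal{X}}} \le \sqrt{\e (L^{\boldsymbol{\mathcal{X}}})^2} \le L$ is a slightly more careful spelling-out of the paper's final step (e), but not a different argument.
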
{}

\begin{lemma}[\textbf{Lipschitz gradient of adjusted objective}]\label{adj objective lipschitz formula} Under assumptions \ref{assumption lipshtiz gradient}-\ref{assumption bounded gradients}, for each agent \( k \), the gradient \( \gjkbh (\cdot) \) of adjusted objective is Lipschitz, namely, for any \( w,u \in \mathds{R}^{M} \):
\begin{equation}
    {\left \| \gjkbh(w)-\gjkbh(u) \right \|}\leq \lbh \norm{w-u}
\end{equation}
where \( \lbh \triangleq (L(1+\al L)^{2}+\al \rho B) \) is a non-negative constant.
\begin{proof}
See Appendix \ref{appendix adj obj lipschitz}.
\end{proof}
\end{lemma}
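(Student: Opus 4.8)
The plan is to leverage the unbiasedness identity \eqref{adjusted objective grad formula}, which writes \( \gjkbh(w) \) as the expectation of the per-task, per-sample integrand in \eqref{maml grad}. Abbreviating that integrand as
\[
G(w) \triangleq \Big(I - \al \rhqkt(w;\rxin)\Big)\,\rgqkt\Big(w - \al\rgqkt(w;\rxin)\,;\,\rxout\Big),
\]
so that \( \gjkbh(w) = \e[G(w)] \), Jensen's inequality gives \( \norm{\gjkbh(w)-\gjkbh(u)} \le \e\,\norm{G(w)-G(u)} \). It therefore suffices to bound the expected variation of \( G \), after which the constant \( \lbh \) is read off directly.

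The core step is a product-rule decomposition. Writing \( G(w)=A(w)B(w) \) with \( A(w)\triangleq I-\al\rhqkt(w;\rxin) \) and \( B(w)\triangleq \rgqkt(w-\al\rgqkt(w;\rxin);\rxout) \), I would split
\[
G(w)-G(u) = A(w)\big(B(w)-B(u)\big) + \big(A(w)-A(u)\big)B(u)
\]
and bound the two terms separately. For the first term, Assumption \ref{assumption lipshtiz gradient} controls \( \norm{A(w)}\le 1+\al L \) (the Hessian norm is bounded by the gradient's Lipschitz constant), and applying the same Lipschitz property twice — once to the outer gradient and once through the inner update \( w\mapsto w-\al\rgqkt(w;\rxin) \) — yields \( \norm{B(w)-B(u)}\le L(1+\al L)\norm{w-u} \); their product contributes the \( L(1+\al L)^2 \) piece. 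For the second term, \( A(w)-A(u)=-\al\big(\rhqkt(w;\rxin)-\rhqkt(u;\rxin)\big) \), whose expected norm is \( \le \al\rho\norm{w-u} \) by the Lipschitz-Hessian Assumption \ref{assumption lipschitz hessians}, while \( \norm{B(u)} \) is controlled in expectation by \( B \) through the bounded-gradient Assumption \ref{assumption bounded gradients}; this contributes \( \al\rho B \). Summing gives \( \lbh = L(1+\al L)^2 + \al\rho B \).

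The main obstacle is that \( A \) and \( B \) are not independent — both depend on the inner batch \( \rxin \) — so these products cannot be factored into products of expectations directly. I would resolve this by iterated expectation, conditioning on \( \rxin \) and integrating over the outer batch \( \rxout \) first: since \( A(w) \) and \( A(w)-A(u) \) depend only on \( \rxin \), the bounded-gradient estimate \( \e_{\rxout}\norm{B(u)}\le B \) holds uniformly in the argument — hence also at the random point \( u-\al\rgqkt(u;\rxin) \) — and can be pulled out before integrating over \( \rxin \). A secondary technical point is that the constants in Assumptions \ref{assumption lipshtiz gradient}--\ref{assumption bounded gradients} are themselves data-dependent; the independence of the inner and outer batches together with the second-moment bound \eqref{lipschitz grad expectation data specific} (so that \( \e[L^{x}]\le(\e[(L^{x})^2])^{1/2}\le L \)) lets me replace the random Lipschitz constants by the uniform \( L \) after taking expectations, which is precisely the simplification flagged after Assumption \ref{assumption lipshtiz gradient}.
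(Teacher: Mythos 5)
Your proposal is correct and follows essentially the same route as the paper's proof in Appendix \ref{appendix adj obj lipschitz}: your product-rule split \( A(w)(B(w)-B(u)) + (A(w)-A(u))B(u) \) produces exactly the same three terms the paper obtains by first peeling off the identity and then adding and subtracting \( \rhqkt(w;\rxin)\rgqkt(\widetilde{\boldsymbol{u}}_2;\rxout) \), and you handle the two genuine subtleties (conditioning on \( \rxin \) before integrating out \( \rxout \), and replacing the data-dependent Lipschitz constants by \( L \) via the second-moment bound) just as the paper does.
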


\begin{lemma}[\textbf{Gradient noise for adjusted objective}]\label{adj objective noise bound} Under assumptions \ref{assumption lipshtiz gradient}-\ref{assumption fourth order task expectation}, the gradient noise defined as \( \gqkb (w)-\gjkbh (w) \) is bounded for any \( w \in \mathds{R}^{M} \), namely:
\begin{equation}
    \e{\left \| \gqkb (w)-\gjkbh (w) \right \|}^{2}\leq C^{2}
\end{equation}
for a non-negative constant \( C^{2} \), whose expression is given in \eqref{eq:appendix c expression} in Appendix \ref{appendix adj obj noise process}.
\begin{proof}
See Appendix \ref{appendix adj obj noise process}.
\end{proof}
\end{lemma}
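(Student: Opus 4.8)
The plan is to read the target as a variance: by the unbiasedness relation \eqref{adjusted objective grad formula} we have $\gjkbh(w)=\e\,\gqkb(w)$, so $\gqkb(w)-\gjkbh(w)$ is zero-mean and $\e\norm{\gqkb(w)-\gjkbh(w)}^{2}$ is exactly the variance of the stochastic MAML gradient. The randomness enters through three independent sources: the sampled task batch $\boldsymbol{\mathcal{S}}_{k}$, the inner data $\rxin$, and the outer data $\rxout$. First I would write $\gqkb(w)=\tfrac{1}{\abs{\mathcal{S}_{k}}}\sum_{\boldsymbol{t}\in\boldsymbol{\mathcal{S}}_{k}}\boldsymbol{g}^{(t)}(w)$ with per-task stochastic gradient
\[
\boldsymbol{g}^{(t)}(w)\triangleq\Big(I-\al\,\rhqkt(w;\rxin)\Big)\,\rgqkt\Big(w-\al\,\rgqkt(w;\rxin);\,\rxout\Big),
\]
and introduce its data-conditional mean $\bar{\boldsymbol{g}}^{(t)}(w)\triangleq\e[\boldsymbol{g}^{(t)}(w)\mid\boldsymbol{t}]$, which is the per-task version of the adjusted gradient so that $\gjkbh(w)=\etk\bar{\boldsymbol{g}}^{(t)}(w)$.

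Next I would split the noise into a data component and a task component:
\begin{align*}
\gqkb(w)-\gjkbh(w)
={}& \frac{1}{\abs{\mathcal{S}_{k}}}\sum_{\boldsymbol{t}\in\boldsymbol{\mathcal{S}}_{k}}\Big(\boldsymbol{g}^{(t)}(w)-\bar{\boldsymbol{g}}^{(t)}(w)\Big)\\
& {}+\frac{1}{\abs{\mathcal{S}_{k}}}\sum_{\boldsymbol{t}\in\boldsymbol{\mathcal{S}}_{k}}\Big(\bar{\boldsymbol{g}}^{(t)}(w)-\gjkbh(w)\Big).
\end{align*}
Conditioned on $\boldsymbol{\mathcal{S}}_{k}$, the first (data) term is zero-mean while the second (task) term is $\boldsymbol{\mathcal{S}}_{k}$-measurable, so by the tower property the two are uncorrelated and the total variance is the sum of their second moments. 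Using the i.i.d. task sampling and the fact that, given the tasks, the per-task data batches are mutually independent (with $\abs{\mathcal{S}_{k}}$ treated as fixed per the standing footnote), both variances pick up the usual $1/\abs{\mathcal{S}_{k}}$ averaging factor and reduce to a single per-task second moment: $\tfrac{1}{\abs{\mathcal{S}_{k}}}\etk\norm{\bar{\boldsymbol{g}}^{(t)}(w)-\gjkbh(w)}^{2}$ for the task term and $\tfrac{1}{\abs{\mathcal{S}_{k}}}\etk\,\e_{\mathrm{data}}\norm{\boldsymbol{g}^{(t)}(w)-\bar{\boldsymbol{g}}^{(t)}(w)}^{2}$ for the data term.

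For the task term I would invoke the bounded task-variability Assumption \ref{assumption fourth order task expectation}. Since $\bar{\boldsymbol{g}}^{(t)}$ depends on the task only through $\rgjkt$-type and $\rhjkt$-type quantities (after the inner data expectation), its deviation from the task average is driven by $\norm{\rgjkt(\cdot)-\gjk(\cdot)}$ and $\norm{\rhjkt(\cdot)-\hjk(\cdot)}$; propagating these through the product structure $(I-\al\,\rhjkt)\,\rgjkt$ with Cauchy--Schwarz and the bounded/Lipschitz constants of Assumptions \ref{assumption lipshtiz gradient}--\ref{assumption bounded gradients} produces a bound in terms of $\gamg$ and $\gamh$. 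The fourth-order form of Assumption \ref{assumption fourth order task expectation} is exactly what is needed, since the second moment of a product of two fluctuating factors forces a Cauchy--Schwarz split into two square-root-of-fourth-moment terms.

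The main obstacle is the data term $\e_{\mathrm{data}}\norm{\boldsymbol{g}^{(t)}(w)-\bar{\boldsymbol{g}}^{(t)}(w)}^{2}$, because of the gradient-within-a-gradient structure: the outer factor $\rgqkt(\cdot;\rxout)$ is evaluated at the adapted point $w-\al\,\rgqkt(w;\rxin)$, which itself fluctuates with the inner data, so the inner noise couples simultaneously into the Hessian prefactor and into the argument of the outer gradient. I would control it via $\e\norm{X-\e X}^{2}\le\e\norm{X-Y}^{2}$ for a convenient reference $Y$ (replacing each stochastic factor by its mean in stages) and then add and subtract intermediate quantities to isolate three effects: (i) the outer-gradient data fluctuation at a fixed point, bounded through $\sigg$ by Assumption \ref{assumption fourth order data expectation}; (ii) the displacement of the outer-gradient argument induced by the inner-gradient fluctuation, bounded by $\al$ times the inner fluctuation via the Lipschitz gradient Assumption \ref{assumption lipshtiz gradient}; and (iii) the Hessian prefactor fluctuation, bounded through $\sigh$ by Assumption \ref{assumption fourth order data expectation}, multiplying the bounded outer gradient $B$. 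The resulting cross terms are products of a zero-mean fluctuation and a bounded factor, and are tamed by Cauchy--Schwarz, which is precisely why the \emph{fourth}-moment Assumption \ref{assumption fourth order data expectation} (rather than mere second moments) is required. Collecting all contributions with their $1/\abs{\mathcal{S}_{k}}$, $1/\xinsize$, and $1/\xosize$ factors yields the explicit constant $C^{2}$.
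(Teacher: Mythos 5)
Your proposal is correct and would establish the lemma, but it takes a genuinely different top-level route from the paper's. You split the noise into an orthogonal sum of a data-fluctuation term and a task-variability term via the conditional mean \(\bar{\boldsymbol{g}}^{(t)}(w)=\e[\boldsymbol{g}^{(t)}(w)\mid \boldsymbol{t}]\), so the law of total variance gives exact additivity and the \(1/\abs{\mathcal{S}_k}\) factor with no constant loss at the top level. The paper instead inserts two \emph{deterministic} reference points --- the exact per-task meta-gradient \(\rgjktb(w)=(I-\al\rhjkt(w))\rgjkt(w-\al\rgjkt(w))\) and its task-averaged counterpart \(\gjksb(w)=(I-\al\hjk(w))\gjk(w-\al\gjk(w))\) --- and uses the triangle inequality plus \((\sum_{i=1}^{3}x_i)^2\le 3\sum_{i}x_i^2\), paying a factor of \(3\) but gaining a clean separation of which assumption drives which constant: Lemma~\ref{lemma:c1} is pure data noise (\(\sigg,\sigh\)), Lemma~\ref{lemma:c2} is pure task variability (\(\gamg,\gamh\)), and Lemma~\ref{lemma:c3} is a deterministic bias. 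Your decomposition is tighter at the top but defers the same work, and the one place your sketch is too optimistic is the claim that the task term \(\etk\norm{\bar{\boldsymbol{g}}^{(t)}(w)-\gjkbh(w)}^2\) is controlled by \(\gamg,\gamh\) alone: because the inner-data expectation does not factor through the product \((I-\al\rhqkt(w;\rxin))\rgjkt(w-\al\rgqkt(w;\rxin))\), the conditional mean \(\bar{\boldsymbol{g}}^{(t)}\) still entangles task identity with data-noise statistics, and bounding its spread over tasks forces you to re-introduce essentially the same intermediate references \(\rgjktb\) and \(\gjksb\) inside that leg, so the resulting constant picks up \(\sigg/\sqrt{\xinsize}\)- and \(\sigh/\sqrt{\xinsize}\)-type contributions there as well. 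Since the lemma only asserts the existence of a finite \(C^2\), this changes the bookkeeping rather than the validity; your treatment of the data term (staged replacement of each stochastic factor by its mean, Cauchy--Schwarz on the cross products, hence the genuine need for the fourth-moment assumption) is exactly the argument of the paper's Lemma~\ref{lemma:c1}.
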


\subsection{Evolution Analysis}

In this section, we analyze the Dif-MAML algorithm over the network. The analysis is similar to \citep{Vlaski19,Vlaski19nonconvexP1}. We first prove that agents cluster around the network centroid in \( O(\log \mu) = o(1/\mu)\) iterations, then show that this centroid reaches an \( O(\mu)\)-mean-square-stationary point in at most \( O(1/\mu^2)\) iterations. Figure \ref{FIG:summary of analysis} summarizes the analysis.

 \begin{figure*}[h!]
 \centering
  \includegraphics[width=0.75\textwidth]{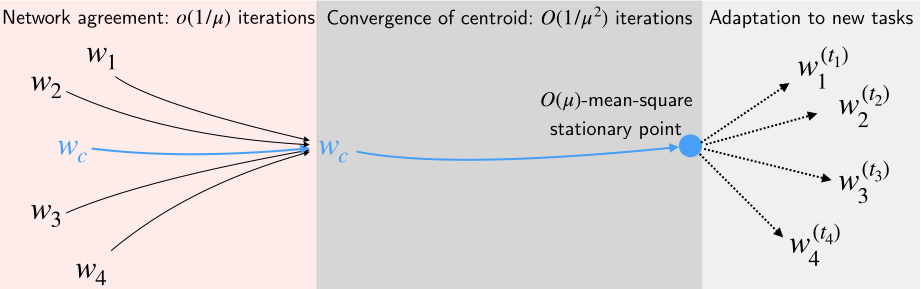}
 \caption{Diagram of the analysis. Agents cluster around a common network centroid, and this centroid reaches a stationary point of the MAML objective during meta-training. Subsequently, agents can use this launch model in order to adapt to new tasks.}\label{FIG:summary of analysis}%
 \end{figure*}
 
The network centroid is defined as \( {\w_{c, i} \triangleq \frac{1}{K} \sum_{k=1}^K \w_{k, i}} \). It is an average of the agents' parameters. In the following theorem, we study the difference between the centroid launch model and the launch model for each agent \( k \).

\begin{theorem}[\textbf{Network disagreement}]\label{lemma network disagreement} Under assumptions \ref{assumption lipshtiz gradient}-\ref{assumption doubly stochastic matrix}, network disagreement between the centroid launch model and the launch models of each agent \( k \) is bounded after \(O(\log \mu)=o(1/\mu) \) iterations, namely:
\begin{align}
   \frac{1}{K} \sum_{k=1}^K \e {\left \| \w_{k,i} - \w_{c, i} \right \|}^2 \le& \mu^2 \frac{\lambda_2^2}{{(1-\lambda_2)}^2} \left( \bbh^2 + C^2 \right) \notag \\ &+ O(\mu^3)
\end{align}
for
\begin{equation}
    i \ge \frac{3\log \mu }{\log \lambda_2}+O(1)=o(1/\mu)
\end{equation}
where \( \lambda_2\) is the mixing rate of the combination matrix \( A \), i.e., it is the spectral radius of \( A^{\T} - \frac{1}{K} \mathds{1}_K \mathds{1}_K^{\T}\).
\end{theorem}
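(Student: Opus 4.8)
The plan is to lift the per-agent recursions to a network-level description and isolate the consensus (centroid) and disagreement components through the eigenstructure of $A$. Stacking the iterates into $\w_i \triangleq \mathrm{col}\{\w_{1,i},\dots,\w_{K,i}\}$ and writing $\mathcal{A}\triangleq A\otimes I_M$, the adapt-then-combine steps \eqref{eq:adapt}--\eqref{eq:combine} collapse to $\w_i = \mathcal{A}^{\T}\big(\w_{i-1}-\mu\,\nabla\mathcal{Q}(\w_{i-1})\big)$, where $\nabla\mathcal{Q}(\w_{i-1})$ collects the stochastic MAML gradients $\gqkb(\w_{k,i-1})$. By Assumption \ref{assumption doubly stochastic matrix} the Perron eigenvalue $1$ of $A$ is simple with left and right eigenvector $\mathds{1}_K$, so the consensus projector $\mathcal{P}\triangleq\tfrac{1}{K}(\mathds{1}_K\mathds{1}_K^{\T})\otimes I_M$ commutes with $\mathcal{A}^{\T}$ and obeys $\mathcal{A}^{\T}\mathcal{P}=\mathcal{P}$. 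Defining the disagreement vector $\check{\w}_i\triangleq(I-\mathcal{P})\w_i$, whose squared norm equals $\sum_k\|\w_{k,i}-\w_{c,i}\|^2$, and setting $\mathcal{R}\triangleq(I-\mathcal{P})\mathcal{A}^{\T}=\big(A^{\T}-\tfrac1K\mathds{1}\mathds{1}^{\T}\big)\otimes I_M$ with $\mathcal{R}\mathcal{P}=0$, I would derive the closed recursion $\check{\w}_i=\mathcal{R}\,\check{\w}_{i-1}-\mu\,\mathcal{R}\,\nabla\mathcal{Q}(\w_{i-1})$, noting that $\rho(\mathcal{R})=\lambda_2$.

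Next I would turn this into a scalar energy recursion. Applying Jensen's inequality to the two-term right-hand side with a free weight $t\in(0,1)$ gives $\e\|\check{\w}_i\|^2\le\tfrac{1}{t}\,\e\|\mathcal{R}\check{\w}_{i-1}\|^2+\tfrac{\mu^2}{1-t}\,\e\|\mathcal{R}\nabla\mathcal{Q}(\w_{i-1})\|^2$; bounding each $\|\mathcal{R}\cdot\|$ by $\lambda_2\|\cdot\|$ and choosing $t=\lambda_2$ yields the contraction $\e\|\check{\w}_i\|^2\le\lambda_2\,\e\|\check{\w}_{i-1}\|^2+\tfrac{\mu^2\lambda_2^2}{1-\lambda_2}\,\e\|\nabla\mathcal{Q}(\w_{i-1})\|^2$. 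The driving term is controlled by splitting $\gqkb(\w_{k,i-1})=\gjkbh(\w_{k,i-1})+\big(\gqkb-\gjkbh\big)(\w_{k,i-1})$; by \eqref{adjusted objective grad formula} the second summand is zero-mean conditioned on the past, so the cross term vanishes and the uniform-in-$w$ bounds of Lemma \ref{adj objective bounded grad formula} and Lemma \ref{adj objective noise bound} give $\e\|\gqkb(\w_{k,i-1})\|^2\le\bbh^2+C^2$ for any realization of $\w_{k,i-1}$. Summing over $k$ yields $\e\|\nabla\mathcal{Q}(\w_{i-1})\|^2\le K(\bbh^2+C^2)$.

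Finally I would iterate the scalar recursion $a_i\le\lambda_2 a_{i-1}+\mu^2 d$ with $a_i\triangleq\tfrac1K\e\|\check{\w}_i\|^2$ and $d=\tfrac{\lambda_2^2}{1-\lambda_2}(\bbh^2+C^2)$, obtaining $a_i\le\lambda_2^{\,i}a_0+\tfrac{\mu^2 d}{1-\lambda_2}$. The second term equals $\mu^2\tfrac{\lambda_2^2}{(1-\lambda_2)^2}(\bbh^2+C^2)$, the claimed leading order. The transient $\lambda_2^{\,i}a_0$, with $a_0=O(1)$ fixed by the initialization, is pushed to $O(\mu^3)$ as soon as $\lambda_2^{\,i}\le\mu^3$, i.e. $i\ge 3\log\mu/\log\lambda_2+O(1)$, which because $\log\mu$ is sub-polynomial is indeed $o(1/\mu)$.

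I expect the main obstacle to be the step ``bound $\|\mathcal{R}\cdot\|$ by $\lambda_2\|\cdot\|$,'' which is false in the Euclidean norm whenever $A$ is non-symmetric, since $\mathcal{R}$ is then non-normal and its spectral radius $\lambda_2$ may lie strictly below its induced $2$-norm. The rigorous remedy is to pass to the (extended) Jordan basis of $A$, using the $\epsilon$-scaled Jordan form whose superdiagonal entries are $\epsilon$, so that the transformed mixing operator has induced norm at most $\lambda_2+O(\epsilon)$, and to run the energy argument in that weighted norm. The delicate point is that the similarity transform contributes condition-number factors and $\epsilon$-dependent corrections, which must be shown to perturb only the $O(\mu^3)$ remainder and not the leading constant $\tfrac{\lambda_2^2}{(1-\lambda_2)^2}$; this bookkeeping, rather than any single inequality, is the part requiring care, and it is precisely where the analysis parallels \citep{Vlaski19,Vlaski19nonconvexP1}.
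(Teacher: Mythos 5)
Your proposal is correct and follows essentially the same route as the paper: the same stacked recursion $\bcw_i=\mathcal{A}^{\T}(\bcw_{i-1}-\mu\widehat{\g}(\bcw_{i-1}))$, the same closed recursion for the disagreement via the projector $\frac{1}{K}\mathds{1}_K\mathds{1}_K^{\T}\otimes I$, the same weighted Jensen step with weight $\lambda_2$, the same bias/noise split yielding the $K(\bbh^2+C^2)$ driving term, and the same geometric iteration giving the $O(\mu^3)$ transient after $i\ge 3\log\mu/\log\lambda_2+O(1)$. Regarding the non-normality obstacle you flag at the end, the paper sidesteps it by simply \emph{defining} $\lambda_2$ in the proof as the induced $2$-norm $\bigl\|\mathcal{A}^{\T}-\frac{1}{K}\mathds{1}_K\mathds{1}_K^{\T}\otimes I\bigr\|<1$ (despite the theorem statement calling it the spectral radius), so no Jordan-basis bookkeeping appears there; your observation is a fair point about that mismatch rather than a divergence in method.
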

\begin{proof}
See Appendix \ref{appendix proof network disagreement lemma}.
\end{proof}

In Theorem \ref{lemma network disagreement}, we proved that the disagreement between the centroid launch model and agent-specific launch models is bounded after sufficient number of iterations. Therefore, we can use the centroid model as a deputy for all models and establish the convergence properties on that.

\begin{theorem}[\textbf{Stationary points of adjusted objective}]\label{COR:STATIONARY_POINTS}
	In addition to assumptions \ref{assumption lipshtiz gradient}-\ref{assumption doubly stochastic matrix}, assume that \( \jbh(w) \) is bounded from below, i.e., \( \jbh(w) \ge \jbh^o \). Then, the centroid launch model \( \w_{c, i} \) will reach an \( O(\mu) \)-mean-square-stationary point in at most \( O\left(1/\mu^2\right) \) iterations. In particular, there exists a time instant \( i^{\star} \) such that:
	\begin{equation}\label{eq:stationary_points}
		\e {\left \|\gjbh(\w_{c, i^{\star}})\right\|}^2 \le 2 \mu \lbh C^2 + O(\mu^2)
	\end{equation}
	and
	\begin{equation}\label{eq:istar number of iterations}
		i^{\star} \le \left(\frac{2(\jbh(w_0) - \jbh^o)}{\lbh C^2}\right) 1/\mu^2 + O(1/\mu)
	\end{equation}
\end{theorem}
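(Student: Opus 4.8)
The plan is to collapse the network recursion onto the centroid \( \w_{c,i} \) and then run the classical non-convex descent-plus-telescoping argument for inexact stochastic gradient descent on the aggregate adjusted objective \( \jbh \). First I would use the double-stochasticity of Assumption \ref{assumption doubly stochastic matrix}: summing the combination step \eqref{eq:combine} over \( k \) and invoking \( A\mathds{1}=\mathds{1} \) cancels the mixing weights and yields the exact centroid recursion
\begin{equation*}
  \w_{c,i} = \w_{c,i-1} - \frac{\mu}{K}\sum_{\ell=1}^{K}\gqkb(\w_{\ell,i-1}).
\end{equation*}
The driving term is the average of the local stochastic MAML gradients evaluated at the \emph{local} iterates \( \w_{\ell,i-1} \). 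Averaging Lemmas \ref{adj objective bounded grad formula} and \ref{adj objective lipschitz formula} shows \( \jbh \) has gradient bounded by \( \bbh \) and \( \lbh \)-Lipschitz gradient, while \eqref{adjusted objective grad formula} makes each summand conditionally unbiased for \( \gjkbh(\w_{\ell,i-1}) \) given the past.

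Next I would apply the descent inequality for the \( \lbh \)-smooth \( \jbh \) along the centroid step and take conditional expectations. The first-order term produces \( -\mu\big\langle \gjbh(\w_{c,i-1}),\,\tfrac1K\sum_\ell \gjkbh(\w_{\ell,i-1})\big\rangle \); writing the second argument as \( \gjbh(\w_{c,i-1}) + d_i \) with deviation \( d_i = \tfrac1K\sum_\ell\big(\gjkbh(\w_{\ell,i-1})-\gjkbh(\w_{c,i-1})\big) \), Lipschitzness gives \( \norm{d_i}\le \tfrac{\lbh}{K}\sum_\ell\norm{\w_{\ell,i-1}-\w_{c,i-1}} \), so Theorem \ref{lemma network disagreement} controls \( \e\norm{d_i}^2 \) to be \( O(\mu^2) \) once the transient has elapsed. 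For the second-order term I would bound \( \e\norm{\tfrac1K\sum_\ell\gqkb(\w_{\ell,i-1})}^2 \) by splitting into conditional mean plus zero-mean noise; Jensen's inequality together with the per-agent noise bound \( C^2 \) of Lemma \ref{adj objective noise bound} keeps the fluctuation part below \( C^2 \), and the mean part is \( O(1) \) by \( \bbh \).

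The crucial manipulation is to absorb the cross term \( -\mu\langle \gjbh(\w_{c,i-1}),d_i\rangle \) with Young's inequality, which halves the descent coefficient to \( \mu/2 \) and relegates the disagreement to a per-step residual \( \tfrac{\mu}{2}\e\norm{d_i}^2 = O(\mu^3) \). The one-step inequality then reads
\begin{equation*}
  \e\,\jbh(\w_{c,i}) \le \e\,\jbh(\w_{c,i-1}) - \frac{\mu}{2}\,\e\norm{\gjbh(\w_{c,i-1})}^2 + \frac{\lbh\mu^2}{2}C^2 + O(\mu^3).
\end{equation*}
Telescoping from the end of the transient up to a horizon \( T \) and using \( \jbh\ge\jbh^o \) bounds the average squared gradient by \( \tfrac{2(\jbh(w_0)-\jbh^o)}{\mu T} + \lbh\mu C^2 + O(\mu^2) \); a pigeonhole argument then locates an index \( i^{\star}\le T \) at which \( \e\norm{\gjbh(\w_{c,i^{\star}})}^2 \) is no larger than this average. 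Choosing \( T = O(1/\mu^2) \) as in \eqref{eq:istar number of iterations} balances the two \( O(\mu) \) terms — each equal to \( \lbh\mu C^2 \) — to yield the factor-two leading term \( 2\mu\lbh C^2 \) of \eqref{eq:stationary_points}, with the \( O(\mu^2) \) residual collecting the disagreement and higher-order contributions.

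The main obstacle, and the reason this is not a verbatim copy of single-agent SGD analysis, is the coupling introduced because the gradient is sampled at the scattered local iterates \( \w_{\ell,i-1} \) rather than at \( \w_{c,i-1} \). The descent argument is only clean after Theorem \ref{lemma network disagreement} guarantees \( O(\mu^2) \) mean-square disagreement, which holds only past the \( O(\log\mu)=o(1/\mu) \) transient; one must therefore start the telescoping after the warm-up and verify these \( o(1/\mu) \) steps are negligible against the \( O(1/\mu^2) \) horizon. The accompanying bookkeeping — ensuring every disagreement- and noise-induced residual remains \( O(\mu^2) \) after summation, so as not to corrupt the \( O(\mu) \) leading behavior — is the remaining technical subtlety.
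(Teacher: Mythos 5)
Your proposal follows essentially the same route as the paper: collapse the combination step onto the centroid via double-stochasticity, decompose the drift into the aggregate gradient at the centroid plus a disagreement term and a zero-mean noise term, control the disagreement through Theorem \ref{lemma network disagreement} and the noise through Lemma \ref{adj objective noise bound}, and run an \( \lbh \)-smooth descent recursion. Your telescoping-plus-pigeonhole extraction of \( i^{\star} \) is just the contrapositive of the paper's contradiction argument, so that difference is cosmetic.

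The one concrete issue is your treatment of the second-order term \( \mu^2 \tfrac{\lbh}{2}\, \e \| \tfrac{1}{K}\sum_{\ell} \gqkb(\w_{\ell,i-1}) \|^2 \). You propose to bound its conditional-mean part by \( O(1) \) via \( \bbh \). Doing so leaves a residual of order \( \mu^2 \lbh \bbh^2 \) per step, which after telescoping over \( T = O(1/\mu^2) \) iterations contributes an additional \( O(\mu \lbh \bbh^2) \) to the stationarity level; you would then prove \( O(\mu) \)-stationarity but with constant roughly \( 2\mu\lbh(C^2 + \bbh^2) \) rather than the stated \( 2\mu\lbh C^2 + O(\mu^2) \). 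Your displayed one-step inequality, which carries only \( \tfrac{\lbh \mu^2}{2} C^2 + O(\mu^3) \) as the residual, does not follow from the bounding you describe. The fix, which is what the paper does, is to keep the mean part as \( \|\gjbh(\w_{c,i-1}) + \rd_{i-1}\|^2 \) and absorb the resulting \( \mu^2 \lbh \|\gjbh(\w_{c,i-1})\|^2 \) into the descent coefficient, producing the factor \( \tfrac{\mu}{2}(1 - 2\mu\lbh) \); that correction then washes into the \( O(\mu^2) \) remainder of \eqref{eq:stationary_points} while keeping the leading constant at \( 2\mu\lbh C^2 \). With that one adjustment your argument matches the paper's proof.
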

\begin{proof}
See Appendix \ref{appendix stationary points corollary proof}.
\end{proof}
Next, we prove that the same analysis holds for the standard MAML objective, using the gradient perturbation bound for the adjusted objective (Lemma \ref{perturbation analysis gradient formula}).

\begin{corollary}[\textbf{Stationary points of MAML objective}]\label{COR:STATIONARY_POINTS_MAML_OBJ)}
Assume that the same conditions of Theorem \ref{COR:STATIONARY_POINTS} hold. Then, the centroid launch model \( \w_{c, i} \) will reach an \( O(\mu) \)-mean-square-stationary point, up to a constant, in at most \( O\left(1/\mu^2\right) \) iterations. Namely, for time instant \( i^{\star} \) defined in \eqref{eq:istar number of iterations}:
\begin{align}\label{eq:stationary_points_maml_obj}
		\e {\left \|\gjb (\w_{c, i^{\star}})\right\|}^2 \le &4 \mu \lbh C^2 + O(\mu^2) \\ &+ 2 \left ( (1+\al L)\frac{\al L \sigg}{\sqrt{\xinsize}}+\frac{B\al\sigh}{\sqrt{\xinsize}} \right )^2 \notag
	\end{align}
\end{corollary}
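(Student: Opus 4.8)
The plan is to control the true aggregate MAML gradient $\gjb(\w_{c, i^{\star}})$ by comparing it against the adjusted aggregate gradient $\gjbh(\w_{c, i^{\star}})$, whose mean-square stationarity is already guaranteed by Theorem \ref{COR:STATIONARY_POINTS}, and then to absorb the residual mismatch using the gradient perturbation bound of Lemma \ref{perturbation analysis gradient formula}. Crucially, I would evaluate everything at the \emph{same} time instant $i^{\star}$ supplied by Theorem \ref{COR:STATIONARY_POINTS}, so that no new iteration-complexity term arises and \eqref{eq:istar number of iterations} continues to apply verbatim.

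First I would lift the per-agent perturbation bound of Lemma \ref{perturbation analysis gradient formula} to the aggregate objectives. Since $\gjb(w) = \frac{1}{K}\sum_{k=1}^{K} \gjkb(w)$ and $\gjbh(w) = \frac{1}{K}\sum_{k=1}^{K} \gjkbh(w)$, the triangle (Jensen) inequality gives, for every $w \in \mathds{R}^{M}$,
\[
    \norm{\gjb(w) - \gjbh(w)} \le \frac{1}{K}\sum_{k=1}^{K}\norm{\gjkb(w) - \gjkbh(w)} \le (1+\al L)\frac{\al L \sigg}{\sqrt{\xinsize}} + \frac{B\al\sigh}{\sqrt{\xinsize}},
\]
where the right-hand side is a \emph{deterministic} constant, independent of $w$ and of the iteration index. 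This is the key structural observation: because the bias of the stochastic MAML gradient is uniformly bounded per agent, it remains uniformly bounded after averaging over the network.

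Next I would split the target quantity at $w = \w_{c, i^{\star}}$ by adding and subtracting $\gjbh(\w_{c, i^{\star}})$ and applying the elementary inequality $\norm{a+b}^2 \le 2\norm{a}^2 + 2\norm{b}^2$, obtaining
\[
    \norm{\gjb(\w_{c, i^{\star}})}^2 \le 2\norm{\gjbh(\w_{c, i^{\star}})}^2 + 2\norm{\gjb(\w_{c, i^{\star}}) - \gjbh(\w_{c, i^{\star}})}^2.
\]
Taking expectations, the first term is bounded by $2\bigl(2\mu\lbh C^2 + O(\mu^2)\bigr) = 4\mu\lbh C^2 + O(\mu^2)$ via Theorem \ref{COR:STATIONARY_POINTS}, while the second term requires no expectation and equals $2$ times the square of the deterministic constant above. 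Summing the two contributions reproduces \eqref{eq:stationary_points_maml_obj} exactly.

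I do not anticipate a genuine obstacle here—the argument is a one-line bias–variance split combined with two results already in hand. The only point worth flagging is conceptual rather than technical: the second term is an \emph{irreducible} error floor that does \emph{not} vanish as $\mu \to 0$, which is why the MAML objective is only shown to be stationary up to a constant. Making this honest requires emphasizing that this floor is controlled by the inner step size $\al$ and the inner batch size $\xinsize$ (shrinking as $\al \to 0$ or $\xinsize \to \infty$), rather than by the outer step size $\mu$, which is precisely the message of Lemmas \ref{perturbation analysis formula} and \ref{perturbation analysis gradient formula}.
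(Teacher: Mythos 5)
Your proposal is correct and follows essentially the same route as the paper: add and subtract \( \gjbh(\w_{c,i^{\star}}) \), apply \( \norm{a+b}^2 \le 2\norm{a}^2 + 2\norm{b}^2 \), and insert \eqref{eq:stationary_points} together with the gradient perturbation bound \eqref{eq:pert_analysis_grad}. Your explicit lifting of the per-agent bound of Lemma \ref{perturbation analysis gradient formula} to the network average via Jensen's inequality is a step the paper leaves implicit, and is a welcome clarification.
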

\begin{proof}
See Appendix \ref{appendix maml stationary points corollary proof}.
\end{proof}
Corollary \ref{COR:STATIONARY_POINTS_MAML_OBJ)} states that the centroid launch model can reach an \( O(\mu) \)-mean-square-stationary point for sufficiently small inner learning rate \( \alpha \) and for sufficiently large inner batch size \( \xinsize \), in at most \( O\left(1/\mu^2\right) \) iterations. Note that as \( \mu \rightarrow 0 \), the number of iterations required for network agreement (\( O(\log \mu) = o (1/\mu) \)) becomes negligible compared to the number of iterations necessary for convergence \( (O(1/\mu^2)) \).

\begin{figure*}[!t]
\centering
\subfloat[a][]{
  \includegraphics[width=.3\linewidth]{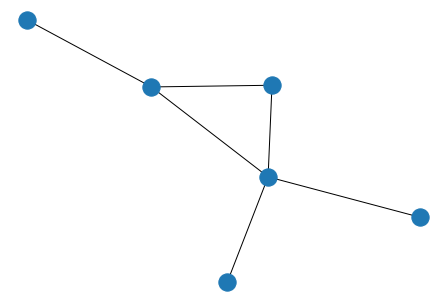}\label{fig:original network}%
}
\hfil
\subfloat[b][]{
  \includegraphics[width=.3\linewidth]{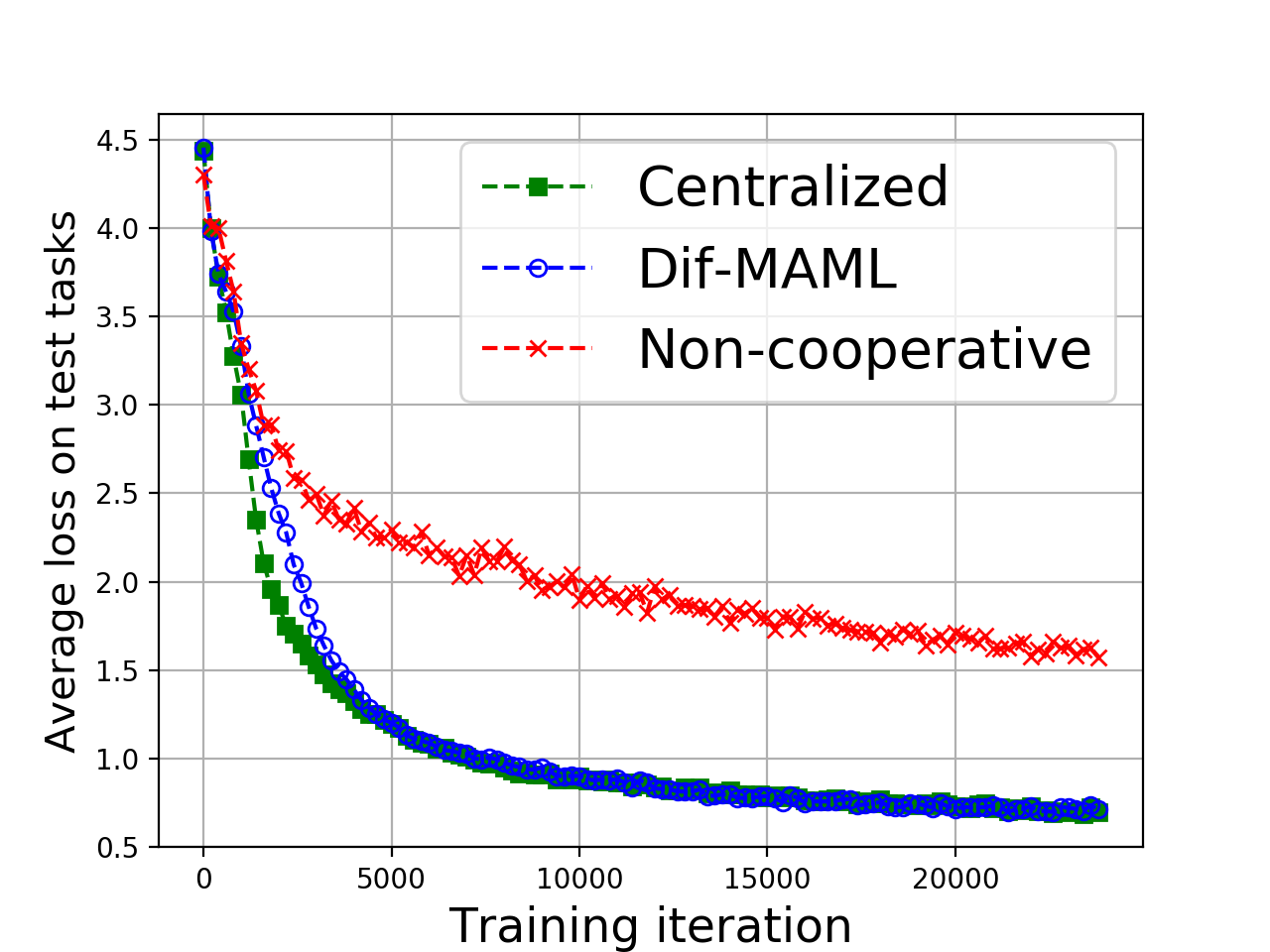}\label{fig:reg_training}%
}
\hfil
\subfloat[c][]{
  \includegraphics[width=.3\linewidth]{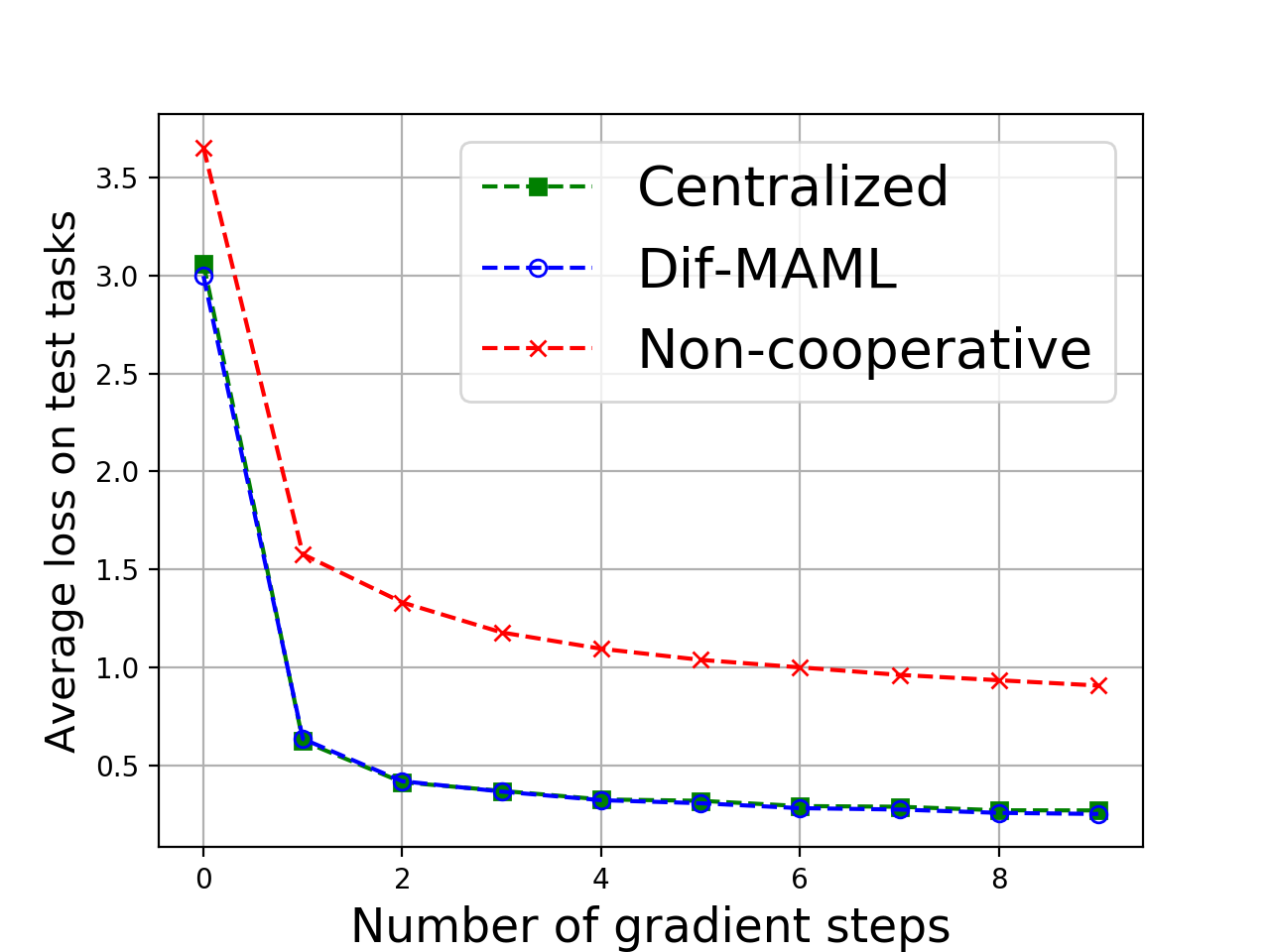}\label{fig:reg_test}
}
\caption{(a) Graph underlying the network (b) Regression-test losses during training - Adam (c) Regression-test losses with respect to number of gradient steps after training - Adam}\label{fig:1}
\end{figure*}

\begin{figure}[h]
\centering
\includegraphics[width=0.8\linewidth]{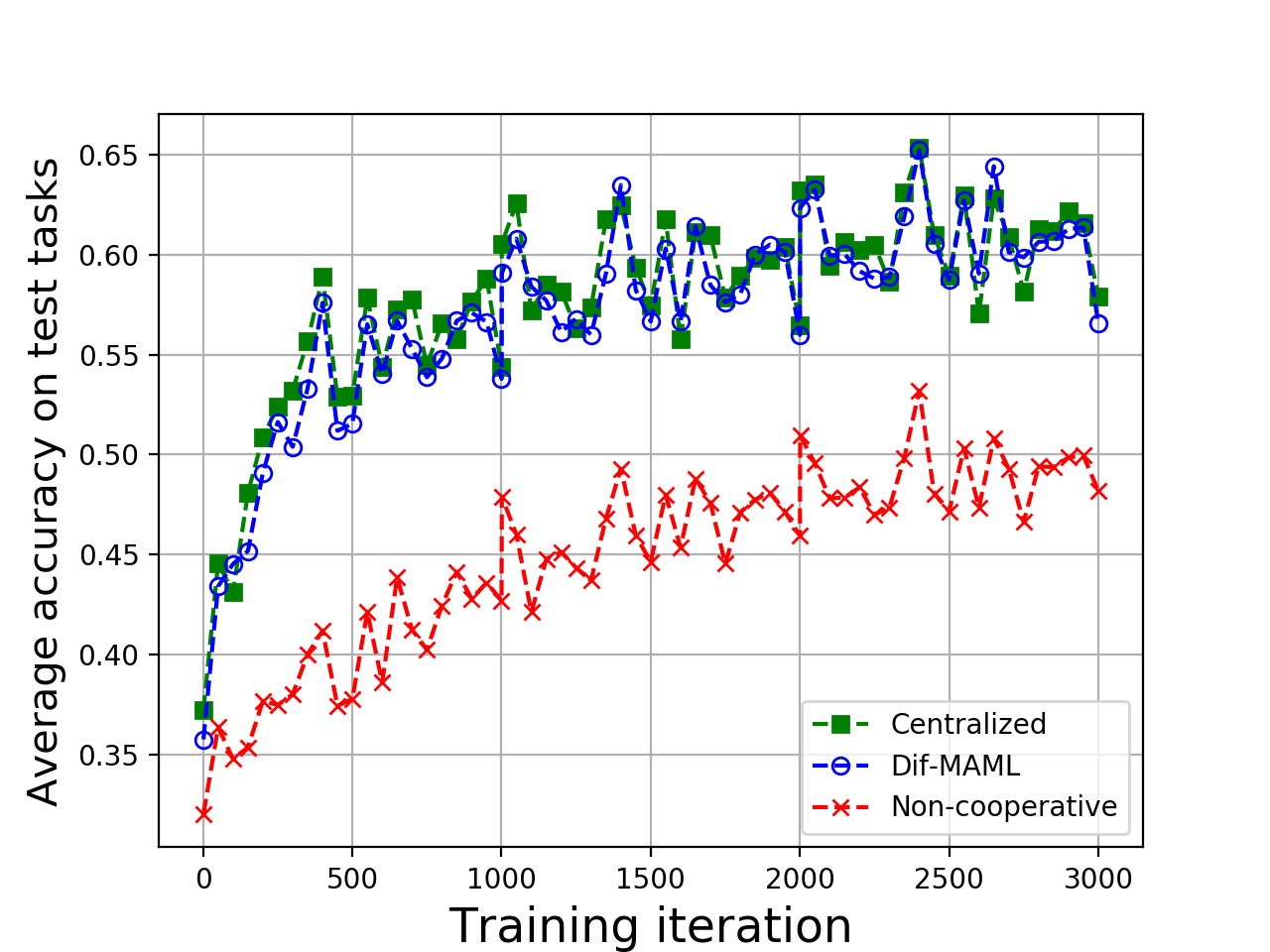}
\caption{MiniImagenet test accuracies during training process: 5-way 5-shot Adam}
\end{figure}\label{fig:miniimagenet 5w5s}

\section{Experiments}
In this section, we provide experimental evaluations. In particular, we present comparisons between the centralized, diffusion-based decentralized, and non-cooperative strategies. Our demonstrations cover both regression and classification tasks. Even though our theoretical analysis is general with respect to various learning models, for the experiments, our focus is on neural networks. 

For all cases, we consider the network with the underlying graph in Figure \ref{fig:original network} with \( K=6 \) agents. The centralized strategy corresponds to a central processor that has access to all data and tasks. Note that this is equivalent to having a network with a fully-connected graph. The non-cooperative strategy represents a solution where agents do not communicate with each other. In other words, they all try to learn separate launch models. 

\subsection{Regression}

For regression, we consider the benchmark from \citep{FinnAL17}. In this setting, each task requires predicting the output of a sine wave from its input. Different tasks have different amplitudes and phases. Specifically, the phases are varied between \( [0,\pi] \) for each agent. However, the agents have access to different task distributions since the amplitude interval \( [0.1,5.0] \) is evenly partitioned into \( K=6 \) different intervals and each agent is equipped with one of them. The outer-loop optimization is based on Adam \citep{Adam}. See Appendix \ref{appendix regression details} for additional details and Appendix \ref{appendix additional plots} for results of an experiment with SGD. 

Every 200th iteration, the agents are tested over 1000 tasks. All agents are evaluated with the same tasks, which stem from the intervals \( [0.1,5.0] \) for amplitude and \( [0,\pi] \) for phase. The results are shown in Figure \ref{fig:reg_training}. It can be seen that Dif-MAML quickly converges to the centralized solution and clearly outperforms the non-cooperative solution throughout the training. This suggests that cooperation helps even when agents have access to different task distributions. Moreover, we also test the performance after training with respect to number of gradient updates for adaptation in Figure \ref{fig:reg_test}. It is visible that the match between the centralized and decentralized solutions does not change and the performance of the non-cooperative solution is still inferior. Note that this plot is also showing the average performance on 1000 tasks.

\subsection{Classification}

For classification, we consider widely used few-shot image recognition tasks on the Omniglot \citep{Omniglot} and MiniImagenet \citep{Ravi17} datasets (see Appendix \ref{appendix dataset details} for dataset details). In contrast to the regression experiment, in these simulations, all agents have access to the same tasks and data. However, in the centralized and decentralized strategies, the effective number of samples is larger as we limit the number of data and tasks processed in one agent. See Appendix \ref{appendix classification details} for details on the architecture and hyperparameters. Average accuracy on test tasks at every 50th training iteration is shown in Fig. \ref{fig:miniimagenet 5w5s} for MiniImageNet 5-way 5-shot setting trained with Adam. See Appendix \ref{appendix additional plots} for additional experiments on 5-way 1-shot MiniImagenet, 5-way 1-shot and 20-way 1-shot Omniglot and SGD variants. Similar to the regression experiment, the decentralized solution matches the centralized solution and is substantially better than the non-cooperative solution.

Moreover, we observe that batch normalization \citep{Batch_Normalization} is necessary for applying Dif-MAML, and diffusion in general, on neural networks since the combination step \eqref{eq:combine} reduces the variance of the weights due to averaging.

\section{Conclusion}

In this paper, we proposed a decentralized algorithm for meta-learning. Our theoretical analysis establishes that the agents' launch models cluster quickly in a small region around the centroid model and this centroid model reaches a stationary point after sufficient iterations. We illustrated by means of extensive experiments on regression and classification problems that the performance of Dif-MAML indeed consistently coincides with the centralized strategy and surpasses the non-cooperative strategy significantly.

\acknowledgments

The authors would like to thank Yigit Efe Erginbas for helpful discussions on the experiments.

\bibliography{multi_maml}


\clearpage
\appendix

\onecolumn

\newpage
\begin{center}
 \Large\textbf{APPENDIX}
\end{center}
 
\section{The Implications of Assumptions for Batches of Data}

In Appendices \ref{appendix lip grad proofs}-\ref{appendix sigma g h proofs} we denote a batch of data by \(  \boldsymbol{\mathcal{X}}_{k}^{(t)} \), its size by \( N_k^{(t)} \), and its elements by \( \left\{ \boldsymbol{x}_{k, n}^{(t)} \right\}_{n=1}^{N_k^{(t)}} \)

\subsection{The Implication of Assumption \ref{assumption lipshtiz gradient}}\label{appendix lip grad proofs}
Assumption \ref{assumption lipshtiz gradient} implies for the stochastic gradient constructed using a batch:
\begin{equation}\label{lipschitz grad dataset specific}
\left \| \gqkt\Big(w;\boldsymbol{\mathcal{X}}_{k}^{(t)}\Big)-\gqkt\Big(u;\boldsymbol{\mathcal{X}}_{k}^{(t)}\Big)\right \|\leq L^{\boldsymbol{\mathcal{X}}_{k}^{(t)}} \norm{w-u}
\end{equation}
where \(L^{\boldsymbol{\mathcal{X}}_{k}^{(t)}} \triangleq \frac{1}{N_{k}^{(t)}}\sum_{n=1}^{N_{k}^{(t)}}L^{\boldsymbol{x}_{k,n}^{(t)}} \). Moreover,
\begin{equation}\label{lipschitz grad expectation}
    \e_{\mathcal{X}_{k}^{(t)}}\Big(L^{\boldsymbol{\mathcal{X}}_{k}^{(t)}}\Big)^{2} \leq \Big(L_k^{(t)}\Big)^{2}
\end{equation}\qed
\begin{proof}
For the stochastic gradients under a batch of data:
\begin{align}
    {\left \| \gqkt(w;\boldsymbol{\mathcal{X}}_{k}^{(t)})-\gqkt(u;\boldsymbol{\mathcal{X}}_{k}^{(t)}) \right \|}&=\left \| \frac{1}{N_{k}^{(t)}} \sum_{n=1}^{N_{k}^{(t)}}\Big(\rgqkt(w;\x_{k,n}^{(t)})-\gqkt(u;\x_{k,n}^{(t)})\Big) \right \| \notag \\
    &\stackrel{(a)}{\leq}\frac{1}{N_{k}^{(t)}}\sum_{n=1}^{N_{k}^{(t)}}{\left \| \rgqkt(w;\x_{k,n}^{(t)})-\gqkt(u;\x_{k,n}^{(t)}) \right \|} \notag \\
    &\stackrel{(b)}{\leq}\frac{1}{N_{k}^{(t)}}\sum_{n=1}^{N_{k}^{(t)}}L^{\x_{k,n}^{(t)}} \norm{w-u} \notag \\
    &=L^{\boldsymbol{\mathcal{X}}_{k}^{(t)}} \norm{w-u} 
\end{align}
where \( (a) \) follows from Jensen's inequality, and \( (b) \) follows from \eqref{lipschitz grad data specific}. Likewise,
\begin{align}
    \e_{\mathcal{X}_{k}^{(t)}}(L^{\boldsymbol{\mathcal{X}}_{k}^{(t)}})^{2} &= \e_{\mathcal{X}_{k}^{(t)}} \left( \frac{1}{N_{k}^{(t)}}\sum_{n=1}^{N_{k}^{(t)}}L^{\x_{k,n}^{(t)}} \right)^{2} \notag \\
    &\stackrel{(a)}{\leq}\frac{1}{N_{k}^{(t)}}\e_{\mathcal{X}_{k}^{(t)}}\sum_{n=1}^{N_{k}^{(t)}}(L^{\x_{k,n}^{(t)}})^{2} \notag \\
    &\stackrel{(b)}{\leq}(L_{k}^{(t)})^{2}
\end{align}
where \( (a) \) follows from Jensen's inequality, and \( (b) \) follows from \eqref{lipschitz grad expectation data specific}.
\end{proof}
\subsection{The Implication of Assumption \ref{assumption lipschitz hessians}}\label{appendix lip hessian proofs}
Assumption \ref{assumption lipschitz hessians} implies for the loss Hessian under a batch of data:
\begin{equation}\label{lipschitz hessian dataset specific}
      \e_{\mathcal{X}_{k}^{(t)}}\left \| \hqkt\Big(w;\boldsymbol{\mathcal{X}}_{k}^{(t)}\Big)-\hqkt\Big(u;\boldsymbol{\mathcal{X}}_{k}^{(t)}\Big)\right \| \leq \rho_k^{(t)}\norm{w-u}
\end{equation}\qed
\begin{proof}
For the loss Hessians under a batch of data:
\begin{align}
   \e_{\mathcal{X}_{k}^{(t)}}{\left \| \hqkt(w;\boldsymbol{\mathcal{X}}_{k}^{(t)})-\hqkt(u;\boldsymbol{\mathcal{X}}_{k}^{(t)})\right \|}&=\e_{\mathcal{X}_{k}^{(t)}}\left \|\frac{1}{N_{k}^{(t)}} \sum_{n=1}^{N_{k}^{(t)}}\Big(\hqkt(w;\x_{k,n}^{(t)})-\hqkt(u;\x_{k,n}^{(t)})\Big)\right \| \notag \\
    &\stackrel{(a)}{\leq}\frac{1}{N_{k}^{(t)}}\sum_{n=1}^{N_{k}^{(t)}}\e_{x_{k,n}^{(t)}}{\left \| \hqkt(w;\x_{k,n}^{(t)})-\hqkt(u;\x_{k,n}^{(t)})\right \|} \notag \\
    &\stackrel{(b)}{\leq}\frac{1}{N_{k}^{(t)}}\sum_{n=1}^{N_{k}^{(t)}}\rho_k^{(t)} \norm{w-u} \notag \\
    &=\rho_k^{(t)} \norm{w-u} 
\end{align}
where \( (a) \) follows from Jensen's inequality, and \( (b) \) follows from \eqref{lipschitz hessian data specific}.
\end{proof}
\subsection{The Implication of Assumption \ref{assumption bounded gradients}}\label{appendix bounded proofs}
Assumption \ref{assumption bounded gradients} implies for the loss gradient under a batch of data
\begin{equation}\label{bounded grad dataset specific}
    \e_{\mathcal{X}_{k}^{(t)}}\left \| \gqkt\Big(w;\boldsymbol{\mathcal{X}}_{k}^{(t)}\Big)\right \| \leq  B_{k}^{(t)}
\end{equation}\qed
\begin{proof}
The bound for the norm of the stochastic gradients constructed using a batch is derived as follows:
\begin{align}
    \e_{\mathcal{X}_{k}^{(t)}}{\left \| \gqkt(w;\boldsymbol{\mathcal{X}}_{k}^{(t)})\right \|} &=\e_{\mathcal{X}_{k}^{(t)}} \left \|\frac{1}{N_{k}^{(t)}} \sum_{n=1}^{N_{k}^{(t)}}\gqkt(w;\x_{k,n}^{(t)})\right \| \notag \\
    &\stackrel{(a)}{\leq}\frac{1}{N_{k}^{(t)}}\sum_{n=1}^{N_{k}^{(t)}}\e_{x_{k,n}^{(t)}}{\left \| \gqkt(w;\x_{k,n}^{(t)})\right \|} \notag \\
    &\stackrel{(b)}{\leq}\frac{1}{N_{k}^{(t)}}\sum_{n=1}^{N_{k}^{(t)}}B_k^{(t)} \notag \\
    &=B_k^{(t)} 
\end{align}
where \( (a) \) follows from Jensen's inequality, and \( (b) \) follows from \eqref{bounded grad data specific}.
\end{proof}
\subsection{The Implication of Assumption \ref{assumption fourth order data expectation}}\label{appendix sigma g h proofs}
Assumption \ref{assumption fourth order data expectation} implies for the gradient and the Hessian under a batch of data:
\begin{align}
    &\e_{\mathcal{X}_{k}^{(t)}} {\left \| \gqkt\Big(w;\boldsymbol{\mathcal{X}}_{k}^{(t)}\Big)-\gjkt(w) \right \|}^{4}\leq \frac{3\sigg^{4}}{(N_{k}^{(t)})^2} \label{sigma g dataset}\\
    &\e_{\mathcal{X}_{k}^{(t)}} {\left \| \hqkt\Big(w;\boldsymbol{\mathcal{X}}_{k}^{(t)}\Big)-\hjkt(w) \right \|}^{4}\leq \frac{3\sigh^{4}}{(N_{k}^{(t)})^{2}} \label{sigma h dataset}
\end{align}\qed
\begin{proof}
We apply induction on \( \nkt \) \citep{Vlaski2020SecondOrder}. For \( \nkt=1 \), expression \eqref{sigma g dataset} trivially holds since \eqref{sigma g data specific} is a tighter bound than \eqref{sigma g dataset}. Now assume that \eqref{sigma g dataset} holds for \( \nkt-1 \). Define:
\begin{align}\label{r definitions}
    \rr(w)&\triangleq \gqkt(w;\x_{k}^{(t)})-\gjkt(w) \notag \\
    \rr_{\nkt}(w)&\triangleq\gqkt(w;\boldsymbol{\mathcal{X}}_{k}^{(t)})-\gjkt(w) 
    =\frac{1}{\nkt}\sum_{n=1}^{\nkt}\gqkt(w;\x_{k,n}^{(t)})-\gjkt(w) 
\end{align}
Then, we get:
\begin{align}
    \e\norm{\rr_{\nkt}(w)}^{4} =& \e { \left \|\frac{\nkt-1}{\nkt}\rr_{\nkt-1}(w)+\frac{1}{\nkt}\rr(w)\right \|}^{4} \notag \\
    =&\e\left({\left \|\frac{\nkt-1}{\nkt}\rr_{\nkt-1}(w)+\frac{1}{\nkt}\rr(w)\right \|}^{2}\right)^{2} \notag \\
    =&\e\left(\frac{(\nkt-1)^{2}}{(\nkt)^{2}}\norm{\rr_{\nkt-1}(w)}^{2}+\frac{1}{(\nkt)^{2}}\norm{\rr(w)}^{2}+2\frac{\nkt-1}{(\nkt)^{2}}\rr_{\nkt-1}(w)^{\T} \rr(w)\right)^{2} \notag \\
    \stackrel{(a)}{=}&\frac{(\nkt-1)^{4}}{(\nkt)^{4}}\e\Big[\norm{\rr_{\nkt-1}(w)}^{4}\Big]+\frac{1}{(\nkt)^{4}}\e\Big[\norm{\rr(w)}^{4}\Big]+\frac{4(\nkt-1)^{2}}{(\nkt)^{4}}\e\Big[(\rr_{\nkt-1}(w)^{\T}\rr(w))^{2}\Big] \notag \\ &+\frac{2(\nkt-1)^{2}}{(\nkt)^{4}} \e\Big[\norm{\rr_{\nkt-1}(w)}^{2}\norm{\rr(w)}^{2}\Big] \notag \\
    \stackrel{(b)}{\leq}&\frac{(\nkt-1)^{4}}{(\nkt)^{4}}\e\Big[\norm{\rr_{\nkt-1}(w)}^{4}\Big]+\frac{1}{(\nkt)^{4}}\e\Big[\norm{\rr(w)}^{4}\Big]+\frac{6(\nkt-1)^{2}}{(\nkt)^{4}}\e\Big[\norm{\rr_{\nkt-1}(w)}^{2}\norm{\rr(w)}^{2}\Big] \notag \\
    \stackrel{(c)}{=}&\frac{(\nkt-1)^{4}}{(\nkt)^{4}}\e\Big[\norm{\rr_{\nkt-1}(w)}^{4}\Big]+\frac{1}{(\nkt)^{4}}\e\Big[\norm{\rr(w)}^{4}\Big]+\frac{6(\nkt-1)^{2}}{(\nkt)^{4}}\e\Big[\norm{\rr_{\nkt-1}(w)}^{2}\Big]\e\Big[\norm{\rr(w)}^{2}\Big] \notag \\
    \stackrel{(d)}{\leq}&\frac{(\nkt-1)^{4}}{(\nkt)^{4}}\frac{3\sigg^{4}}{(\nkt-1)^{2}}+\frac{1}{(\nkt)^{4}}\sigg^{4}+\frac{6(\nkt-1)^{2}}{(\nkt)^{4}}\frac{\sigg^{2}}{(\nkt-1)}\sigg^{2} \notag \\
    =&\frac{\sigg^{4}}{(\nkt-1)^{2}}\left(\frac{3(\nkt-1)^{2}}{(\nkt)^{2}}+\frac{1}{(\nkt)^{2}}+\frac{6(\nkt-1)}{(\nkt)^{2}} \right) \notag \\
    =&\frac{\sigg^{4}}{(\nkt)^{2}}\left(\frac{3(\nkt)^{2}-2}{(\nkt)^{2}} \right) \notag \\
    \leq& \frac{3\sigg^{4}}{(\nkt)^{2}} 
\end{align}
where \( (a) \) follows from expansion of the square and dropping the cross-terms that are zero due to the independence assumption on the data,\( (b) \) follows from Cauchy-Schwarz, \( (c) \) follows from independence assumption on the data, and \( (d) \) follows from the induction hypothesis, \eqref{sigma g data specific}, and the following variance reduction formula:
\begin{equation}
    \e\norm{\rr_{\nkt-1}(w)}^{2}=\frac{1}{\nkt-1}\e\norm{\rr(w)}^{2}
\end{equation}
For proving \eqref{sigma h dataset}, just replacing the gradients with the Hessians in \eqref{r definitions} is enough.
\end{proof}
\section{Alternative MAML Objective Proofs}
\subsection{Proof of \eqref{adjusted objective grad formula}}\label{appendix adj grad proof}
Recall the definition of the adjusted objective:
\begin{align}
    \jkbh(w) = \etk\exin\rjkt\Big(w-\al\rgqkt(w;\rxin)\Big)
\end{align}
The gradient corresponding to this objective is:
\begin{align}
    \gjkbh(w) = \etk\exin \left [ \Big(I-\al\rhqkt(w;\rxin)\Big)\rgjkt\Big(w-\al\rgqkt(w;\rxin)\Big) \right ]
\end{align}
Expectation of the stochastic MAML gradient is given by:
\begin{align}
    \e \nabla \overline{Q_{k}}(w)   &= \e \Big [ \averagetasks \Big(I-\al \rhqkt(w;\rxin)\Big)\rgqkt\Big(w-\al \rgqkt(w;\rxin)\ ;\ \rxout\Big) \Big ] \notag \\
    &\stackrel{(a)}{=} \e \Big [\Big(I-\al \rhqkt(w;\rxin)\Big)\rgqkt\Big(w-\al \rgqkt(w;\rxin)\ ;\ \rxout\Big) \Big ] \notag \\
    &\stackrel{(b)}{=}\etk \Big[ \exin \Big[ \exout \Big [ \Big(I-\al \rhqkt(w;\rxin)\Big)\rgqkt\Big(w-\al \rgqkt(w;\rxin)\ ;\ \rxout\Big) \  | \ \rxin \Big] \Big] \Big] \notag \\
    &=\etk \Big[ \exin \Big[  \Big(I-\al \rhqkt(w;\rxin)\Big) \exout \Big [ \rgqkt\Big(w-\al \rgqkt(w;\rxin)\ ;\ \rxout\Big)   \  | \ \rxin \Big] \Big] \Big] \notag \\
    &\stackrel{(c)}{=} \etk \Big[ \exin \left [ \Big(I-\al\rhqkt(w;\rxin)\Big)\rgjkt\Big(w-\al\rgqkt(w;\rxin)\Big) \right ] \Big] 
\end{align}
where \( (a) \) follows from the \emph{i.i.d.} assumption on the batch of tasks, \( (b) \) follows from conditioning on \( \rxin \), and \( (c) \) follows from the relation between loss functions and stochastic risks.
\subsection{Proof of Lemma \ref{perturbation analysis formula} }\label{appendix perturb analysis}
The disagreement between \eqref{real risk} and \eqref{adjusted objective formula} is:
\begin{align}
        {\left | \jkb(w)-\jkbh(w) \right |}&=\left | \e\left[\rjkt(\widetilde{\w}_{1})-\rjkt(\widetilde{\w}_{2})\right]\right | \stackrel{(a)}{\leq}\e{\left | \rjkt(\widetilde{\w}_{1})-\rjkt(\widetilde{\w}_{2}) \right |} 
\end{align}

where $\widetilde{\w}_{1} \triangleq w-\al \rgjkt(w)$ , $\widetilde{\w}_{2} \triangleq w-\al \rgqkt(w;\rxin)$ and \( (a) \) follows from Jensen's inequality. Lipschitz property of the gradient (Assumption \ref{assumption lipshtiz gradient}) implies: 
\begin{align}
    \rjkt(\widetilde{\w}_{1})- \rjkt(\widetilde{\w}_{2}) &\leq (\rgjkt(\widetilde{\w}_{2}))^{\T}(\widetilde{\w}_{1}-\widetilde{\w}_{2})+\frac{L}{2}\norm{\widetilde{\w}_{1}-\widetilde{\w}_{2}}^{2} \\
    \rjkt(\widetilde{\w}_{2})- \rjkt(\widetilde{\w}_{1}) &\leq (\rgjkt(\widetilde{\w}_{1}))^{\T}(\widetilde{\w}_{2}-\widetilde{\w}_{1})+\frac{L}{2}\norm{\widetilde{\w}_{1}-\widetilde{\w}_{2}}^{2}
\end{align}

Combining the inequalities yields:

\begin{align}
     \e{\left | \rjkt(\widetilde{\w}_{1})- \rjkt(\widetilde{\w}_{2}) \right |}&\leq\e\left[\frac{L}{2}\norm{\widetilde{\w}_{1}-\widetilde{\w}_{2}}^{2}+\max\Big\{\norm{\rgjkt(\widetilde{\w}_{1})},\norm{\rgjkt(\widetilde{\w}_{2})}\Big\}\norm{\widetilde{\w}_{1}-\widetilde{\w}_{2}}\right] \notag \\
     &\stackrel{(a)}{\leq} \e\left[\frac{L}{2}\norm{\widetilde{\w}_{1}-\widetilde{\w}_{2}}^{2}+B\norm{\widetilde{\w}_{1}-\widetilde{\w}_{2}}\right] \notag \\
     &\stackrel{(b)}{\leq} \frac{\al^{2}L}{2}\e\left[{\left \| \rgjkt(w)-\rgqkt(w;\rxin) \right \|}^{2}\right]+B\al\e\left[{\left \| \rgjkt(w)-\rgqkt(w;\rxin)\right \|}\right] \notag \\
     &\stackrel{(c)}{\leq} \frac{\al^{2}L\sigg^{2}}{2\xinsize}+\frac{B\al\sigg}{\sqrt{\xinsize}}
\end{align}
where \( (a) \) follows from Assumption \ref{assumption bounded gradients}, \( (b) \) follows from inserting \( \widetilde{\w}_1 \) and \( \widetilde{\w}_2 \) expressions, and \( (c) \) follows from Assumption \ref{assumption fourth order data expectation}.

\subsection{Proof of Lemma \ref{perturbation analysis gradient formula} }\label{appendix perturb grad analysis}
Recall \eqref{real risk grad} and \eqref{adjusted objective grad formula}:
\begin{align}
    \gjkb (w) &= \etk\left[(I-\al \rhjkt(w)) \rgjkt(w-\al \rgjkt(w)) \right] \notag \\
    &=\etk \left [\rgjkt(w-\al \rgjkt(w)) - \al \rhjkt(w)\rgjkt(w-\al \rgjkt(w)) \right]  \\
    \gjkbh(w) &= \etk\exin\left[ (I-\al\rhqkt(w;\rxin))\rgjkt(w-\al\rgqkt(w;\rxin))\right] \notag \\
    &= \etk \exin \left [\rgjkt(w-\al\rgqkt(w;\rxin)) - \al\rhqkt(w;\rxin)\rgjkt(w-\al\rgqkt(w;\rxin)) \right] 
\end{align}
The norm of the disagreement then follows:
\begin{align}
    \left \| \gjkb (w) - \gjkbh (w) \right \| =& \Big \| \e \Big [\rgjkt(w-\al \rgjkt(w))- \al \rhjkt(w)\rgjkt(w-\al \rgjkt(w)) \notag \\  &- \rgjkt(w-\al\rgqkt(w;\rxin)) + \al\rhqkt(w;\rxin)\rgjkt(w-\al\rgqkt(w;\rxin))  \Big ]  \Big \| \notag \\
    \stackrel{(a)}{\leq}& \e \Big [ \Big \| \rgjkt(w-\al \rgjkt(w))- \rgjkt(w-\al\rgqkt(w;\rxin)) \notag \\ &+ \al\rhqkt(w;\rxin)\rgjkt(w-\al\rgqkt(w;\rxin)) - \al \rhjkt(w)\rgjkt(w-\al \rgjkt(w))  \Big \| \Big ] \notag \\
    \stackrel{(b)}{\leq}& \e \Big [ \left \| \rgjkt(w-\al \rgjkt(w))- \rgjkt(w-\al\rgqkt(w;\rxin)) \right \| \notag \\ &+\al \left \| \rhqkt(w;\rxin)\rgjkt(w-\al\rgqkt(w;\rxin)) -  \rhjkt(w)\rgjkt(w-\al \rgjkt(w))  \right \|  \Big ]  \label{gradient disagreement full term}
\end{align}
where \( (a) \) follows from applying Jensen's inequality and rearranging the terms, and \( (b) \) follows from applying triangle inequality. We bound the terms in \eqref{gradient disagreement full term} separately. For the first term we have:
\begin{align}
   \e \left [ \left \| \rgjkt(w-\al \rgjkt(w))- \rgjkt(w-\al\rgqkt(w;\rxin)) \right \| \right ]  &\stackrel{(a)}{\leq} L \al \e  \left [ \left \| \rgjkt(w)-\rgqkt(w;\rxin) \right \|  \right] \notag \\
   &\stackrel{(b)}{\leq} L\al \frac{\sigg}{\sqrt{\xinsize}}
\end{align}
where \( (a) \) follows from Assumption \ref{assumption lipshtiz gradient}, and \( (b) \) follows from Assumption \ref{assumption fourth order data expectation}.

Rewriting the second term in \eqref{gradient disagreement full term}:
\begin{align}
   \e \Big [ \Big \| &\rhqkt(w;\rxin)\rgjkt(w-\al\rgqkt(w;\rxin)) -  \rhjkt(w)\rgjkt(w-\al \rgjkt(w))  \Big \| \Big ] \notag \\ \stackrel{(a)}{\leq}& \e \Big [ \Big \| \rhqkt(w;\rxin)\rgjkt(w-\al\rgqkt(w;\rxin)) - \rhjkt(w) \rgjkt(w-\al \rgqkt (w;\rxin)) \Big \| \notag \\ &+ \Big \| \rhjkt(w) \rgjkt(w-\al \rgqkt (w;\rxin)) - \rhjkt(w)\rgjkt(w-\al \rgjkt(w)) \Big \| \Big] \label{gradient disagreement second term}
\end{align}
where \( (a) \) follows from adding and subtracting the term \( \rhjkt(w) \rgjkt(w-\al \rgqkt (w;\rxin)) \) and applying the triangle inequality. We bound the terms in \eqref{gradient disagreement second term} separately. For the first term:
\begin{align}
    \e \Big [ \Big \| \rhqkt(w;\rxin)&\rgjkt(w-\al\rgqkt(w;\rxin)) - \rhjkt(w) \rgjkt(w-\al \rgqkt (w;\rxin)) \Big \| \Big ] \notag \\
    &\stackrel{(a)}{\leq} \e \Big [ \Big \| \rhqkt(w;\rxin)-\rhjkt(w) \Big \| \Big \| \rgjkt(w-\al\rgqkt(w;\rxin)) \Big \| \Big ] \notag \\
    &\stackrel{(b)}{\leq} \e \Big [ \Big \| \rhqkt(w;\rxin)-\rhjkt(w) \Big \| \Big] B \notag \\
    &\stackrel{(c)}{\leq} B \frac{\sigh}{\sqrt{\xinsize}}
\end{align}
where \( (a) \) follows from sub-multiplicity of the norm, \( (b) \) follows from Assumption \ref{assumption bounded gradients}, and \( (c) \) follows from Assumption \ref{assumption fourth order data expectation}. For the second term in \eqref{gradient disagreement second term}:
\begin{align}
    \e \Big [ \Big \| \rhjkt(w) &\rgjkt(w-\al \rgqkt (w;\rxin)) - \rhjkt(w)\rgjkt(w-\al \rgjkt(w)) \Big \| \Big] \notag \\
    &\stackrel{(a)}{\leq}  \e \Big [\Big \| \rhjkt(w) \Big \| \Big \| \rgjkt(w-\al \rgqkt (w;\rxin))-\rgjkt(w-\al \rgjkt(w)) \Big \| \Big] \notag \\
    &\stackrel{(b)}{\leq} L \e \Big [ \Big \| \rgjkt(w-\al \rgqkt (w;\rxin))-\rgjkt(w-\al \rgjkt(w)) \Big \| \Big] \notag \\
    &\stackrel{(c)}{\leq} \al L^2 \e \Big [ \Big \| \rgjkt(w)-\rgqkt(w;\rxin) \Big \| \Big ] \notag \\
    &\stackrel{(d)}{\leq} \al L^2 \frac{\sigg}{\sqrt{\xinsize}}
\end{align}
where \( (a) \) follows from sub-multiplicity of the norm, \( (b) \) and \( (c) \) follow from Assumption \ref{assumption lipshtiz gradient}, and \( (d) \) follows from Assumption \ref{assumption fourth order data expectation}. Combining the results completes the proof.
 
\subsection{Proof of Lemma \ref{adj objective bounded grad formula}}\label{appendix bounded gradient adj objective proof}
Recall the formula for the gradient of the adjusted objective \eqref{adjusted objective grad formula}:
\begin{align}
{\left \| \gjkbh(w) \right \|}&={\left \| \e \left [(I-\al \rhqkt(w;\rxin))\rgqkt\Big(w-\al \rgqkt(w;\rxin);\rxout\Big)\right ]\right \|} \notag \\
&\stackrel{(a)}{\leq} \e {\left \| (I-\al \rhqkt(w;\rxin))\rgqkt\Big(w-\al \rgqkt(w;\rxin);\rxout\Big) \right \|} \notag \\
&\stackrel{(b)}{\leq} \e \left [{\left \| (I-\al \rhqkt(w;\rxin))\right \|} {\left \| \rgqkt\Big(w-\al \rgqkt(w;\rxin);\rxout\Big)\right \|}\right ] \notag \\
&\stackrel{(c)}{\leq}   \etk\left [\exin \left [ \exout \left [{\left \| (I-\al \rhqkt(w;\rxin)) \right \| } {\left \| \rgqkt\Big(w-\al \rgqkt(w;\rxin);\rxout\Big) \right \|} \ | \rxin \right] \right ]\right ] \notag \\
&\stackrel{(d)}{\leq} \etk \left [\exin \left[{\left \| (I-\al \rhqkt(w;\rxin)) \right \|}B \right] \right] \notag \\
&\stackrel{(e)}{\leq} (1+\al L)B
\end{align}
where \( (a) \) follows from Jensen's inequality, \( (b) \) follows from sub-multiplicity of the norm, \( (c) \) follows from conditioning on \( \rxin \), \( (d) \) follows from Assumption \ref{assumption bounded gradients}, and \( (e) \) follows from Assumption \ref{assumption lipshtiz gradient}.

\subsection{Proof of Lemma \ref{adj objective lipschitz formula}}\label{appendix adj obj lipschitz}
Define the following variables:
\begin{align}{}
     \widetilde{\w}_2 &\triangleq w - \al \rgqkt(w;\rxin) \\
     \widetilde{\boldsymbol{u}}_2 &\triangleq u - \al \rgqkt(u;\rxin)
\end{align}
Recall the formula for the gradient of the adjusted objective \eqref{adjusted objective grad formula}:
\begin{align}
    \gjkbh(w) &= \e\left[(I-\al \rhqkt(w;\rxin))\rgqkt(\widetilde{\w}_2;\rxout)\right ] \notag \\
    &= \e \left [\rgqkt(\widetilde{\w}_2;\rxout)-\al \rhqkt(w;\rxin)\rgqkt(\widetilde{\w}_2;\rxout) \right ]  \\
    \gjkbh(u) &= \e\left[(I-\al \rhqkt(u;\rxin))\rgqkt(\widetilde{\boldsymbol{u}}_2;\rxout)\right ] \notag \\
    &= \e \left [\rgqkt(\widetilde{\boldsymbol{u}}_2;\rxout)-\al \rhqkt(u;\rxin)\rgqkt(\widetilde{\boldsymbol{u}}_2;\rxout)\right ] 
\end{align}
Bounding the disagreement:
\begin{align}{}
\Big \| \gjkbh(w)-\gjkbh(u) \Big \| =& \Big \| \e \Big [\rgqkt(\widetilde{\w}_2;\rxout)-\rgqkt(\widetilde{\boldsymbol{u}}_2;\rxout)-\al \Big(\rhqkt(w;\rxin)\rgqkt(\widetilde{\w}_2;\rxout) \notag \\ &-\rhqkt(u;\rxin)\rgqkt(\widetilde{\boldsymbol{u}}_2;\rxout)\Big) \Big ] \Big \|  \notag \\
\stackrel{(a)}{\leq}& \e\Big [\Big \| \rgqkt(\widetilde{\w}_2;\rxout)-\rgqkt(\widetilde{\boldsymbol{u}}_2;\rxout)-\al \Big(\rhqkt(w;\rxin)\rgqkt(\widetilde{\w}_2;\rxout) \notag \\ &- \rhqkt(u;\rxin)\rgqkt(\widetilde{\boldsymbol{u}}_2;\rxout)\Big)\Big \|\Big ] \notag \\
\stackrel{(b)}{\leq}& \e \left [\Big \| \rgqkt(\widetilde{\w}_2;\rxout)-\rgqkt(\widetilde{\boldsymbol{u}}_2;\rxout) \Big \| \right ] +\al\e\Big [ \Big \| \rhqkt(w;\rxin)\rgqkt(\widetilde{\w}_2;\rxout) \notag \\&- \rhqkt(u;\rxin)\rgqkt(\widetilde{\boldsymbol{u}}_2;\rxout)\Big \|\Big ] \label{gradlip two exp}
\end{align}

where \( (a) \) follows from Jensen's inequality, and \( (b) \) follows from the triangle inequality. We bound the terms in \eqref{gradlip two exp} separately. For the first term,
\begin{align}{}
 \e\left [{\left \| \rgqkt(\widetilde{\w}_2;\rxout)-\rgqkt(\widetilde{\boldsymbol{u}}_2;\rxout) \right \|} \right ] &\stackrel{(a)}{\leq} \e \left[ L^{\rxout} \norm{\widetilde{\w}_2-\widetilde{\boldsymbol{u}}_2} \right] \notag \\
 &\stackrel{(b)}{\leq} \e \left[ L^{\rxout} \Big(\norm{w-u}+\al {\left \| \rgqkt(w;\rxin)-\rgqkt(u;\rxin) \right \|}\Big) \right ] \notag \\
 &\stackrel{(c)}{\leq} \e \left[ L^{\rxout} \Big(\norm{w-u}+\al L^{\rxin} \norm{w-u}\Big) \right] \label{first lip bound to be used later} \\
 &\stackrel{(d)}{\leq}  L(1+\al L) \norm{w-u} 
\end{align}
where \( (a) \) follows from Assumption \ref{assumption lipshtiz gradient}, \( (b) \) follows from replacing \( \widetilde{\w}_2,\widetilde{\boldsymbol{u}}_2 \) and applying triangle inequality, \( (c) \) follows from Assumption \ref{assumption lipshtiz gradient}, \( (d) \) follows from the independence assumption on \( \rxin,\rxout \) and taking the expectation. For the second term we have:
\begin{align}
    \e \Big \| \rhqkt(w;\rxin)&\rgqkt(\widetilde{\w}_2;\rxout)-\rhqkt(u;\rxin)\rgqkt(\widetilde{\boldsymbol{u}}_2;\rxout) \Big \|  \notag \\
    \stackrel{(a)}{\leq}& \e\left [{\left \| \rhqkt(w;\rxin)\rgqkt(\widetilde{\w}_2;\rxout)-\rhqkt(w;\rxin)\rgqkt(\widetilde{\boldsymbol{u}}_2;\rxout) \right \|} \right ] \notag \\
    &+ \e \left [ \left \| \rhqkt(w;\rxin)\rgqkt(\widetilde{\boldsymbol{u}}_2;\rxout)-\rhqkt(u;\rxin)\rgqkt(\widetilde{\boldsymbol{u}}_2;\rxout) \right \| \right ] \label{lip proof same hessian and hessian dif} 
\end{align}{}

where \( (a) \) follows from adding and subtracting the same term and triangle inequality. For the first term in \eqref{lip proof same hessian and hessian dif}, we have:

\begin{align}
    \e \Big\| \rhqkt(w;\rxin)\rgqkt(&\widetilde{\w}_2;\rxout)-\rhqkt(w;\rxin)\rgqkt(\widetilde{\boldsymbol{u}}_2;\rxout) \Big\| \notag \\
    &\stackrel{(a)}{\leq} \e\left [{\left \| \rhqkt(w;\rxin)\right \|}{\left \| \rgqkt(\widetilde{\w}_2;\rxout)-\rgqkt(\widetilde{\boldsymbol{u}}_2;\rxout)\right \| }\right ] \notag \\
    &\stackrel{(b)}{\leq} \e\left [(L^{\rxin})\Big(L^{\rxout}(1+\al L^{\rxin})\norm{w-u}\Big)\right ] \notag \\
    &\stackrel{(c)}{\leq} L^{2}(1+\al L) \norm{w-u}
\end{align}{}

where \( (a) \) follows from sub-multiplicity of the norm, \( (b) \) follows from Assumption \ref{assumption lipshtiz gradient} and \eqref{first lip bound to be used later}, \( (c) \) follows from the independence assumption on \( \rxin,\rxout \) and taking the expectation.

For the second term in \eqref{lip proof same hessian and hessian dif}, we have:
 
 \begin{align}
     \e \Big \| \rhqkt(w;\rxin)&\rgqkt(\widetilde{\boldsymbol{u}}_2;\rxout)-\rhqkt(u;\rxin)\rgqkt(\widetilde{\boldsymbol{u}}_2;\rxout)\Big \| \notag \\
     &\stackrel{(a)}{\leq} \e\Big [\Big \| \rhqkt(w;\rxin)-\rhqkt(u;\rxin)\Big \| \Big \| \rgqkt(\widetilde{\boldsymbol{u}}_2;\rxout)\Big \|\Big ] \notag \\
     &\stackrel{(b)}{\leq} \etk\left [\exin\left [ \exout \left [{\left \| \rhqkt(w;\rxin)-\rhqkt(u;\rxin) \right \| } {\left \| \rgqkt(\widetilde{\boldsymbol{u}}_2;\rxout) \right \|} \ |  \rxin \right ]\right ] \right] \notag \\
     &\stackrel{(c)}{\leq} \etk\left [\exin\left [{\left \| \rhqkt(w;\rxin)-\rhqkt(u;\rxin) \right \| } B \right] \right] \notag \\
     &\stackrel{(d)}{\leq} \rho B \norm{w-u}
 \end{align}{}
 
where \( (a) \) follows from sub-multiplicity of the norm, \( (b) \) follows from conditioning on \( \rxin  \), \( (c) \) follows from Assumption \ref{assumption bounded gradients} and \( (d) \) follows from Assumption \ref{assumption lipschitz hessians}. Combining the results completes the proof.

\subsection{Proof of Lemma \ref{adj objective noise bound}}\label{appendix adj obj noise process}
We will first prove three intermediate lemmas, then conclude the proof.

First defining the task-specific meta-gradient and task-specific meta-stochastic gradient:
\begin{align}
    &\rgqktb(w) \triangleq(I-\al \rhqkt(w;\rxin))\rgqkt\Big(w-\al \rgqkt(w;\rxin);\rxout\Big) \label{c1 proof rewriting maml grad} \\
        &\rgjktb (w) \triangleq (I-\al \rhjkt(w)) \rgjkt(w-\al \rgjkt(w)) 
\end{align}{}
\begin{lemma}\label{lemma:c1}
Under assumptions \ref{assumption lipshtiz gradient},\ref{assumption bounded gradients},\ref{assumption fourth order data expectation}, for each agent \( k \), the disagreement between \( \rgqktb ( \cdot ) \) and \( \rgjktb ( \cdot ) \) is bounded in expectation, namely, for any \( w \in \mathds{R}^{M} \):
\begin{equation}
\begin{aligned}{}
    \e\norm{\rgqktb(w)-\rgjktb(w)}^{2}\leq C_{1}^{2}
    \end{aligned}
\end{equation}
where \( C_{1}^{2} \triangleq 6(1+\al L)^{2}\sigg^{2}(\frac{1}{\xosize}+\frac{L^{2}\al^{2}}{\xinsize})+ \frac{6\al^{2}\sigh^{2}}{\xinsize}(B^{2}+\frac{\sigg^{2}}{\xosize})+\frac{9\al^{4}}{\xinsize^{2}}(\sigh^{4}+L^{4}\sigg^{4}) \) is a non-negative constant.

\begin{proof}
Defining the error terms:
\begin{align}
    &\erhxt \triangleq \al \rhjkt(w)- \al \rhqkt(w;\rxin) \\
    &\ergot \triangleq \rgqkt\Big(w-\al\rgqkt(w;\rxin);\rxout\Big)-\rgjkt(w-\al\rgqkt(w;\rxin))\\
    &\ergxt \triangleq \rgjkt(w-\al\rgqkt(w;\rxin))-\rgjkt(w-\al\rgjkt(w))
\end{align}{}
Rewriting \eqref{c1 proof rewriting maml grad}:
\begin{align}
    \rgqktb(w)=&(I-\al \rhjkt(w)+\erhxt)(\rgjkt(w-\al \rgjkt(w))+\ergot+\ergxt) \notag \\
    \rgqktb(w)-\rgjktb(w)=&(I-\al \rhjkt(w))\ergot+(I-\al \rhjkt(w))\ergxt+\erhxt \rgjkt(w-\al \rgjkt(w))\notag \\&+\erhxt\ergot+\erhxt\ergxt 
\end{align}{}
Bounding the disagreement:
\begin{align}
    \norm{\rgqktb(w)-\rgjktb(w)}\stackrel{(a)}{\leq}& \norm{(I-\al \rhjkt(w))}\norm{\ergot}+\norm{(I-\al \rhjkt(w))}\norm{\ergxt} \notag \\&+\norm{\erhxt}\norm{\rgjkt(w-\al \rgjkt(w))}+\norm{\erhxt}\norm{\ergot}+\norm{\erhxt}\norm{\ergxt}
  \notag \\\stackrel{(b)}{\leq}& \norm{(I-\al \rhjkt(w))}\norm{\ergot}+\norm{(I-\al \rhjkt(w))}\norm{\ergxt} \notag \\&+ \norm{\erhxt}\norm{\rgjkt(w-\al \rgjkt(w))}+\norm{\erhxt}\norm{\ergot}+\frac{\norm{\erhxt}^{2}}{2}+\frac{\norm{\ergxt}^{2}}{2}  
\end{align}{}
where \( (a) \) follows from sub-multiplicity of the norm and triangle inequality, \( (b) \) follows from $ab\leq \frac{a^{2}+b^{2}}{2}$ .

Taking the square of the norm and using $(\sum_{i=1}^{6}x_{i})^{2}\leq 6(\sum_{i=1}^{6}x_{i}^{2})$ (Cauchy-Schwarz) yield:
\begin{align}{}
    \norm{\rgqktb(w)-&\rgjktb(w)}^{2}\leq6\norm{(I-\al \rhjkt(w))}^{2}\norm{\ergot}^{2}+6\norm{(I-\al \rhjkt(w))}^{2}\norm{\ergxt}^{2} \notag \\&+6\norm{\erhxt}^{2}\norm{\rgjkt(w-\al \rgjkt(w))}^{2}+6\norm{\erhxt}^{2}\norm{\ergot}^{2}+3\norm{\erhxt}^{4}+3\norm{\ergxt}^{4}  
\end{align}
Taking the expectation with respect to inner and outer batch of data yields:
\begin{align}
    \eds\norm{\rgqktb&(w)-\rgjktb(w)}^{2}\leq6\norm{(I-\al \rhjkt(w))}^{2}\eds\Big[\norm{\ergot}^{2}\Big]\notag \\ &+6 \norm{(I-\al \rhjkt(w))}^{2}  \eds\Big[\norm{\ergxt}^{2}\Big]+6\eds\Big[\norm{\erhxt}^{2}\Big]\norm{\rgjkt(w-\al \rgjkt(w))}^{2}\notag \\&+ 6\eds\Big[\norm{\erhxt}^{2}\norm{\ergot}^{2}\Big]+3\eds\Big[\norm{\erhxt}^{4}\Big]+3\eds\Big[\norm{\ergxt}^{4}\Big]  
    \label{c1 proof long formula}
\end{align}

We bound the terms of \eqref{c1 proof long formula} one by one.
\begin{align}
    {\Big \| (I-\al \rhjkt(w))\Big \|}    &\stackrel{(a)}{\leq} (1+\al L)  \\
    {\Big \| (I-\al \rhjkt(w)) \Big \|}^{2}&\leq(1+\al L)^{2} \label{I-hessian bound}
\end{align}{}
where \( (a) \) follows from Assumption \ref{assumption lipshtiz gradient}.

\begin{align}{}
    \eds\norm{\ergot}^{2} &= \eds{\left\|\rgqkt\Big(w-\al\rgqkt(w;\rxin);\rxout\Big)-\rgjkt(w-\al\rgqkt(w;\rxin))\right \|}^{2} \notag \\
    &\stackrel{(a)}{=}\eds{\left \| \rgqkt(\widetilde{\w}_2;\rxout)-\rgjkt(\widetilde{\w}_2)\right \|}^{2} \notag \\
    &\stackrel{(b)}{\leq} \frac{\sigg^{2}}{\xosize} \label{ergo2 bound}
\end{align}
where \( (a) \) follows from defining \( \widetilde{\w}_2 \triangleq w-\al\rgqkt(w;\rxin) \), and \( (b) \) follows from conditioning on \( \rxin \) and Assumption \ref{assumption fourth order data expectation}.

\begin{align}{}
    \norm{\ergxt} &= \left \| \rgjkt(w-\al\rgqkt(w;\rxin))-\rgjkt(w-\al\rgjkt(w)) \right \| \notag \\
    &\stackrel{(a)}{\leq} \al L \left \| \rgjkt(w)-\rgqkt(w;\rxin) \right \|  \\
    \norm{\ergxt}^{4} &\leq \al^{4} L^{4} {\left\|\rgjkt(w)-\rgqkt(w;\rxin)\right \|}^{4}  \\
    \exin\left[\norm{\ergxt}^{4}\right] &\leq \al^{4} L^{4} \exin\left[{\left \| \rgjkt(w)-\rgqkt(w;\rxin)\right \|}^{4}\right] 
    \stackrel{(b)}{\leq} \al^{4} L^{4} \frac{3\sigg^{4}}{\xinsize^{2}}  \\
    \exin\left[\norm{\ergxt}^{2}\right] &\stackrel{(c)}{\leq} \al^{2} L^{2} \exin\left[{\left \| \rgjkt(w)-\rgqkt(w;\rxin)\right \|}^{2}\right] \stackrel{(d)}{\leq} \al^{2} L^{2}\frac{\sigg^{2}}{\xinsize} \label{ergxt2 bound}
\end{align}
where \( (a) \) follows from Assumption \ref{assumption lipshtiz gradient}, \( (b) \) follows from Assumption \ref{assumption fourth order data expectation}, \( (c) \) follows from taking square and expectation of \( (a) \), and \( (d) \) follows from Assumption \ref{assumption fourth order data expectation}.

\begin{align}
    \exin\norm{\erhxt}^{4} &= \al^{4} \exin{\left \| \rhjkt(w)- \al \rhqkt(w;\rxin) \right \|}^{4} 
    \stackrel{(a)}{\leq} \al^{4}\frac{3\sigh^{4}}{\xinsize^{2}}  \\
    \exin\norm{\erhxt}^{2} &= \al^{2} \exin{\left \| \rhjkt(w)- \al \rhqkt(w;\rxin)\right \|}^{2} 
    \stackrel{(b)}{\leq} \al^{2}\frac{\sigh^{2}}{\xinsize} \label{erhxt2 bound}
\end{align}
where \( (a) \) and \( (b) \) follow from Assumption \ref{assumption fourth order data expectation}. Moreover,

\begin{align}
    \norm{\rgjkt(w-\al \rgjkt(w))}^{2}\leq B^{2}
\end{align}{}
because of Assumption \ref{assumption bounded gradients}, and

\begin{align}
     \eds\left [\norm{\erhxt}^{2}\norm{\ergot}^{2} \right] &= \exin\left[\exout\left [\norm{\erhxt}^{2}\norm{\ergot}^{2}| \rxin \right ]\right]  \notag \\
    &\stackrel{(a)}{\leq} \exin\left[\norm{\erhxt}^{2}\frac{\sigg^{2}}{\xosize}\right]  \notag \\
    &\stackrel{(b)}{\leq} \al^{2}\frac{\sigh^{2}}{\xinsize} \frac{\sigg^{2}}{\xosize} \label{expectation of inner and outer bound values placed}
\end{align}{}

where \( (a) \) follows from  \eqref{ergo2 bound}, and \( (b) \) follows from \eqref{erhxt2 bound}. Substituting the results into \eqref{c1 proof long formula} completes the proof.
\end{proof}{}

\end{lemma}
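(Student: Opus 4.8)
The plan is to express $\rgqktb(w)$ as a perturbation of $\rgjktb(w)$ by isolating the three distinct sources of stochasticity that separate them: the noisy Hessian $\rhqkt(w;\rxin)$ replacing $\rhjkt(w)$, the noisy inner gradient $\rgqkt(w;\rxin)$ inside the adaptation argument replacing $\rgjkt(w)$, and the noisy outer gradient $\rgqkt(\cdot;\rxout)$ replacing $\rgjkt(\cdot)$. Concretely, I would introduce the error terms $\erhxt\triangleq\al\rhjkt(w)-\al\rhqkt(w;\rxin)$, together with $\ergot$ capturing the outer-gradient deviation (evaluated at the perturbed argument) and $\ergxt$ capturing the inner-gradient deviation as propagated through $\rgjkt$. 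Writing $I-\al\rhqkt(w;\rxin)=(I-\al\rhjkt(w))+\erhxt$ and expanding the product, the leading term $(I-\al\rhjkt(w))\rgjkt(w-\al\rgjkt(w))$ cancels against $\rgjktb(w)$, leaving the difference as a sum of five product terms: $(I-\al\rhjkt(w))\ergot$, $(I-\al\rhjkt(w))\ergxt$, $\erhxt\,\rgjkt(w-\al\rgjkt(w))$, $\erhxt\ergot$, and $\erhxt\ergxt$.

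Next I would bound the norm of this five-term sum through sub-multiplicativity of the induced matrix norm and the triangle inequality, and then take the square. The key device here is to apply $ab\le\tfrac12(a^2+b^2)$ to the pure-product cross term $\norm{\erhxt}\norm{\ergxt}$ before squaring, converting it into $\tfrac12\norm{\erhxt}^2+\tfrac12\norm{\ergxt}^2$; after applying $(\sum_{i=1}^{6}x_i)^2\le6\sum_{i=1}^{6}x_i^2$, this ensures that only the individual fourth moments $\e\norm{\erhxt}^4$ and $\e\norm{\ergxt}^4$ surface, rather than a coupled product. I would then bound the six resulting expectations one at a time: $\norm{I-\al\rhjkt(w)}^2\le(1+\al L)^2$ from Assumption \ref{assumption lipshtiz gradient} (the gradient Lipschitz constant bounds the Hessian norm), $\norm{\rgjkt(w-\al\rgjkt(w))}^2\le B^2$ from Assumption \ref{assumption bounded gradients}, and the noise quantities $\eds\norm{\ergot}^2$, $\exin\norm{\ergxt}^2$, $\exin\norm{\erhxt}^2$ and their fourth-order analogues from Assumption \ref{assumption fourth order data expectation}, invoking its batch-level implications \eqref{sigma g dataset}--\eqref{sigma h dataset} to produce the $1/\xinsize$ and $1/\xosize$ decay factors.

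The main subtlety I anticipate is the careful bookkeeping of the conditional-independence structure, which dictates the order of integration. The outer-gradient error $\ergot$ depends on both $\rxin$ and $\rxout$, yet it is conditionally mean-square bounded by $\sigg^2/\xosize$ only after conditioning on $\rxin$, whereas $\erhxt$ and $\ergxt$ depend on $\rxin$ alone. Thus for the mixed expectation $\eds\big[\norm{\erhxt}^2\norm{\ergot}^2\big]$ I would first condition on $\rxin$, apply the $\sigg^2/\xosize$ bound to the inner conditional expectation of $\norm{\ergot}^2$, and only then integrate over $\rxin$ to bring in the $\al^2\sigh^2/\xinsize$ factor. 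Assembling the six bounded contributions reproduces the stated constant $C_1^2$; crucially, no term requires a moment of order higher than four, which is exactly what the Assumption \ref{assumption fourth order data expectation} bounds supply and why the AM--GM split in the squaring step is the right move.
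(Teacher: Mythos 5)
Your proposal is correct and follows essentially the same route as the paper's proof: the identical three error terms, the same five-term expansion after cancelling the leading product, the same AM--GM split of $\norm{\erhxt}\norm{\ergxt}$ before applying the factor-of-six Cauchy--Schwarz step, and the same conditioning on $\rxin$ to handle the mixed term $\eds\big[\norm{\erhxt}^2\norm{\ergot}^2\big]$. The term-by-term bounds you list reproduce the stated constant $C_1^2$ exactly as in the paper.
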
{}

Defining $\jksb (w) := \jk (w-\al \gjk (w))$ where $\jk (w)=\etk [\rjkt (w)]$, we have the following two lemmas.

\begin{lemma}\label{lemma:c2}
Under assumptions \ref{assumption lipshtiz gradient},\ref{assumption bounded gradients},\ref{assumption fourth order task expectation}, for each agent \( k \), the disagreement between \( \rgjktb ( \cdot ) \) and \( \gjksb ( \cdot ) \) is bounded in expectation, namely, for any \( w \in \mathds{R}^{M} \):
\begin{equation}
    \e {\left \| \rgjktb (w)- \gjksb (w) \right \|}^{2} \leq C_{2}^{2}
    \end{equation}{}
where \( C_{2}^{2} \triangleq 8(1+\al L)^{2}(1+\al^{2}L^{2})\gamg^{2}+4B^{2}\al^{2}\gamh^{2}+2\al^{4}\gamh^{4}+16(1+\al^{4}L^{4})\gamg^{4} \) is a non-negative constant.

\begin{proof}
Recall the definitions:
\begin{align}
    \rgjktb (w) &= (I-\al\rhjkt (w))\rgjkt(w-\al\rgjkt(w))  \\
    \gjksb (w) &= (I-\al \hjk (w))\gjk (w-\al \gjk (w)) 
\end{align}
Defining the error terms:
\begin{align}
    &\erhtt \triangleq \al \hjk (w) - \al \rhjkt (w) \\
    &\ergtt \triangleq \rgjkt (w-\al \rgjkt (w)) - \gjk (w-\al \gjk (w))
\end{align}{}
Placing the new error definitions we have:
\begin{align}
    \rgjktb (w) &= (I-\al \hjk (w) + \erhtt) (\gjk (w-\al \gjk (w))+\ergtt)  \\
    \rgjktb (w) - \gjksb (w) &= (I-\al \hjk (w))\ergtt+\erhtt\gjk (w-\al \gjk (w))+\erhtt\ergtt  \\
    {\left \| \rgjktb (w) - \gjksb (w)\right \|}&\stackrel{(a)}{\leq} \norm{(I-\al \hjk (w))}\norm{\ergtt}+\norm{\erhtt}\norm{\gjk (w-\al \gjk (w))}+\norm{\erhtt}\norm{\ergtt} \notag \\
    &\stackrel{(b)}{\leq} \norm{(I-\al \hjk (w))}\norm{\ergtt}+\norm{\erhtt}\norm{\gjk (w-\al \gjk (w))}+\frac{\norm{\erhtt}^{2}}{2}+\frac{\norm{\ergtt}^{2}}{2} 
\end{align}

where \( (a) \) follows from sub-multiplicity and triangle inequality, \( (b) \) follows from \( ab\leq \frac{a^{2}+b^{2}}{2} \) . Using $(\sum_{i=1}^{4}x_{i})^{2}\leq 4(\sum_{i=1}^{4}x_{i}^{2})$ (Cauchy-Schwarz) and taking expectation yield:
\begin{align}
        \e{\left \| \rgjktb (w) - \gjksb (w) \right \|}^{2}\leq& 4 \norm{(I-\al \hjk (w))}^{2} \e\left [\norm{\ergtt}^{2}\right ]+ 4 \norm{\gjk (w-\al \gjk (w))}^{2} \e\left [\norm{\erhtt}^{2}\right ]\notag \\&+2\e\left [\norm{\erhtt}^{4}\right ]+2\e\left [\norm{\ergtt}^{4}\right ]
        \label{c2 expectation all the terms}
\end{align}{}

We bound terms in \eqref{c2 expectation all the terms} one by one. Note that
\begin{equation}
    {\left \| (I-\al \hjk(w)) \right \|}^{2}\leq(1+\al L)^{2} 
\end{equation}{}
by Assumption \ref{assumption lipshtiz gradient}, while

\begin{align}
    \norm{\ergtt}&={\left \| \rgjkt(w-\al \rgjkt (w))-\gjk(w-\al \gjk(w))\right \|} \notag \\
    &\stackrel{(a)}{=} \left \| \rgjkt(w-\al \rgjkt (w))-\rgjkt(w-\al \gjk(w)) + \rgjkt(\widetilde{w}_3)-\gjk(\widetilde{w}_3) \right \| \notag \\
    &\stackrel{(b)}{\leq} \left \| \rgjkt(w-\al \rgjkt (w))-\rgjkt(w-\al \gjk(w)) \right \| + \left \| \rgjkt(\widetilde{w}_3)-\gjk(\widetilde{w}_3) \right \| \notag \\
    &\stackrel{(c)}{\leq} \al L \left \| \rgjkt(w)-\gjk(w) \right \|+\left \| \rgjkt(\widetilde{w}_3)-\gjk(\widetilde{w}_3)\right \| \label{ergt norm inequality}
\end{align}

where (a) follows from the definition $\widetilde{w}_3 \triangleq w-\al \gjk(w)$, \( (b) \) follows from triangle inequality, and \( (c) \) follows from Assumption \ref{assumption lipshtiz gradient}.

For second-order moment of \( \ergtt \), using $(a+b)^{2}\leq 2(a^{2}+b^{2})$ (Cauchy-Schwarz) and taking expectation result in:
\begin{align}
    \etk\norm{\ergtt}^{2}&\leq 2\al^{2}L^{2}\etk\left [{\left \| \rgjkt(w)-\gjk(w)\right \|}^{2}\right ]+2\etk\left [{\left \| \rgjkt(\widetilde{w}_3)-\gjk(\widetilde{w}_3) \right \|}^{2}\right ] \notag \\
    &\stackrel{(a)}{\leq} 2\al^{2}L^{2}\gamg^{2}+2\gamg^{2} \notag \\
    &=2\gamg^{2}(1+\al^{2}L^{2}) \label{ergtt2 bound}
\end{align}

where \( (a) \) follows from Assumption \ref{assumption fourth order task expectation}.

For fourth-order moment of \( \ergtt \), using $(a+b)^{4}\leq 8(a^{4}+b^{4})$ (Cauchy-Schwarz) and taking expectation result in:
\begin{align}
    \etk\left [\norm{\ergtt}^{4}\right ]&\leq 8\al^{4}L^{4}\etk\left [{\left \| \rgjkt(w)-\gjk(w) \right \|}^{4}\right ]+8\etk\left [{\left \| \rgjkt(\widetilde{w}_3)-\gjk(\widetilde{w}_3) \right \|}^{4} \right ] \notag \\
    &\stackrel{(a)}{\leq} 8\al^{4}L^{4}\gamg^{4}+8\gamg^{4} \notag \\
    &\leq8\gamg^{4}(1+\al^{4}L^{4})
\end{align}

where \( (a) \) follows from Assumption \ref{assumption fourth order task expectation}. Also,

\begin{equation}
    \norm{\gjk (w-\al \gjk (w))}^{2}\leq B^{2}
\end{equation}{}
by Assumption \ref{assumption bounded gradients}, and

\begin{align}
    \etk\norm{\erhtt}^{4}&=\etk{\left \| \al\hjk (w) - \al \rhjkt (w) \right \|}^{4} \notag \\
    &=\al^{4}\etk{\left \| \hjk (w) - \rhjkt (w)\right \|}^{4} \notag \\
    &\stackrel{(a)}{\leq} \al^{4} \gamh^{4} \\
    \etk\norm{\erhtt}^{2} &\stackrel{(b)}{\leq} \sqrt{\etk\norm{\erhtt}^{4}} 
    \stackrel{(c)}{\leq} \al^{2}\gamh^{2} \label{erhtt2 bound}
\end{align}

where \( (a) \) follows from Assumption \ref{assumption fourth order task expectation}, \( (b) \) follows from Jensen's inequality, and \( (c) \) follows from taking square root of \( (a) \). Inserting all the results into \eqref{c2 expectation all the terms} completes the proof.
\end{proof}{}    
\end{lemma}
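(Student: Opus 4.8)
The plan is to follow the template of Lemma~\ref{lemma:c1}, but with the data-sampling noise replaced by task-sampling variability, so the randomness is now the draw $t\sim\pi_k$. First I would introduce two error terms measuring how a single task's exact quantities deviate from their task-averaged counterparts, namely the scaled Hessian error $\erhtt \triangleq \al\hjk(w)-\al\rhjkt(w)$ and the adapted-gradient error $\ergtt \triangleq \rgjkt(w-\al\rgjkt(w))-\gjk(w-\al\gjk(w))$. Since $I-\al\rhjkt(w)=I-\al\hjk(w)+\erhtt$ and $\rgjkt(w-\al\rgjkt(w))=\gjk(w-\al\gjk(w))+\ergtt$, substituting into the definition of $\rgjktb(w)$ and subtracting $\gjksb(w)$ cancels the leading $(I-\al\hjk(w))\gjk(w-\al\gjk(w))$ product and leaves
\[
\rgjktb(w)-\gjksb(w) = (I-\al\hjk(w))\,\ergtt + \erhtt\,\gjk(w-\al\gjk(w)) + \erhtt\,\ergtt .
\]

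Next I would pass to norms via the triangle inequality and sub-multiplicativity, bound the cross term $\norm{\erhtt}\norm{\ergtt}$ using $ab\le(a^2+b^2)/2$, square with the Cauchy--Schwarz bound $(\sum_{i=1}^{4}x_i)^2\le 4\sum_i x_i^2$, and take expectation over $t\sim\pi_k$. This reduces the claim to controlling $\norm{I-\al\hjk(w)}^2$, $\norm{\gjk(w-\al\gjk(w))}^2$, the second moment $\etk\norm{\ergtt}^2$, and the moments $\etk\norm{\erhtt}^2$, $\etk\norm{\erhtt}^4$, $\etk\norm{\ergtt}^4$. The operator-norm factor is at most $(1+\al L)^2$ because each task Hessian has norm bounded by $L$ under Assumption~\ref{assumption lipshtiz gradient}, and the adapted gradient has norm at most $B$ under Assumption~\ref{assumption bounded gradients}. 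The Hessian-error moments are immediate from the task-variability bound (Assumption~\ref{assumption fourth order task expectation}): $\etk\norm{\erhtt}^4\le\al^4\gamh^4$, and $\etk\norm{\erhtt}^2\le\al^2\gamh^2$ by Jensen.

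The main obstacle is bounding the moments of $\ergtt$, since Assumption~\ref{assumption fourth order task expectation} controls the task deviation of $\rgjkt$ only at a \emph{fixed} point, whereas $\ergtt$ compares $\rgjkt$ at the \emph{task-dependent} point $w-\al\rgjkt(w)$ against $\gjk$ at the common point $w-\al\gjk(w)$. To handle this I would add and subtract $\rgjkt(\widetilde{w}_3)$ with $\widetilde{w}_3\triangleq w-\al\gjk(w)$, splitting $\ergtt$ into $\rgjkt(w-\al\rgjkt(w))-\rgjkt(\widetilde{w}_3)$, which is $\le \al L\norm{\rgjkt(w)-\gjk(w)}$ by the Lipschitz gradient property, and $\rgjkt(\widetilde{w}_3)-\gjk(\widetilde{w}_3)$, which is exactly a fixed-point task deviation. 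Applying $(a+b)^2\le 2(a^2+b^2)$ and $(a+b)^4\le 8(a^4+b^4)$ followed by Assumption~\ref{assumption fourth order task expectation} (with Jensen to pass from fourth to second moments) yields $\etk\norm{\ergtt}^2\le 2\gamg^2(1+\al^2L^2)$ and $\etk\norm{\ergtt}^4\le 8\gamg^4(1+\al^4L^4)$. Substituting these six bounds into the squared-expectation inequality and collecting terms reproduces the stated constant $C_2^2$, completing the argument.
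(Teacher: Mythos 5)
Your proposal is correct and follows essentially the same route as the paper's own proof: the identical error terms $\erhtt$ and $\ergtt$, the same three-term decomposition after cancellation, the same $ab\le(a^2+b^2)/2$ and Cauchy--Schwarz steps, and the same add-and-subtract of $\rgjkt(\widetilde{w}_3)$ to reduce the moments of $\ergtt$ to fixed-point task deviations. All six component bounds match the paper's, so the argument goes through as written.
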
{}
Next, we prove the last intermediate lemma.

\begin{lemma}\label{lemma:c3}
Under assumptions \ref{assumption lipshtiz gradient},\ref{assumption bounded gradients},\ref{assumption fourth order data expectation},\ref{assumption fourth order task expectation}, for each agent \( k \), the disagreement between \( \gjksb ( \cdot ) \) and \( \gjkbh ( \cdot ) \) is bounded, namely, for any \( w \in \mathds{R}^{M} \):
\begin{equation}
    {\left \| \gjksb (w) - \gjkbh (w) \right \|} \leq C_{3}
\end{equation}{}
where \( C_{3} \triangleq (1+\al L)\al L \frac{\sigg}{\sqrt{\xinsize}}+(1+\al L)^{2} \gamg+B\al \frac{\sigh}{\sqrt{\xinsize}}+ B\al\gamh+\al^{2}\frac{\sigh^{2}}{\xinsize}+\al^{2}\gamh^{2}+\al^{2}L^{2}\frac{\sigg^{2}}{\xinsize}+2(1+\al^{2}L^{2})\gamg^{2} \) is a non-negative constant.

\begin{proof}
Recall the definitions:
\begin{align}
        \gjksb (w) &= (I-\al \hjk (w))\gjk (w-\al \gjk (w))  \\
        \gjkbh (w) &= \e \left [(I-\al \rhqkt(w;\rxin))\rgqkt\Big(w-\al \rgqkt(w;\rxin);\rxout\Big)\right ] \notag \\
        &= \etk\left [\exin \left [(I-\al \rhqkt(w;\rxin))\rgjkt(w-\al\rgqkt(w;\rxin)) \right ]\right ] 
\end{align}

Defining the error terms:
\begin{align}
        \erhxt &\triangleq \al \rhjkt(w)- \al \rhqkt(w;\rxin) \\
        \erhtt &\triangleq \al \hjk (w) - \al \rhjkt (w)  \\
        \ergxt &\triangleq \rgjkt(w-\al\rgqkt(w;\rxin))-\rgjkt(w-\al\rgjkt(w))  \\
        \ergtt &\triangleq \rgjkt (w-\al \rgjkt (w)) - \gjk (w-\al \gjk (w)) 
\end{align}

Then, we can rewrite the components of the adjusted objective gradient as:
\begin{align}
    &I-\al \rhqkt(w;\rxin)=I-\al \hjk (w)+\erhxt+\erhtt  \\
    &\rgjkt(w-\al\rgqkt(w;\rxin))=\gjk (w-\al \gjk (w))+\ergxt+\ergtt 
\end{align}

and we can write the distance as:

\begin{align}
    {\left \| \gjkbh (w)-\gjksb (w) \right \|}=&\Big \| \e\Big [(I-\al \hjk (w))(\ergxt+\ergtt)+\gjk (w-\al \gjk (w)) (\erhxt+\erhtt)\notag \\ &+(\ergxt+\ergtt)(\erhxt+\erhtt) \Big] \Big \| \notag \\
    \stackrel{(a)}{\leq}& \e\Big[\Big \| (I-\al \hjk (w))(\ergxt+\ergtt)+\gjk (w-\al \gjk (w))(\erhxt+\erhtt)\notag\\&+(\ergxt+\ergtt)(\erhxt+\erhtt) \Big \| \Big] \notag \\
\stackrel{(b)}{\leq}& \norm{(I-\al \hjk (w))}\e\left [\norm{\ergxt}\right ]+\norm{(I-\al \hjk (w))}\e\left [\norm{\ergtt} \right ]\notag \\&+\norm{\gjk (w-\al \gjk (w))}\e\left [\norm{\erhxt}\right ]+\norm{\gjk (w-\al \gjk (w))}\e\left[\norm{\erhtt}\right]\notag \\&+\e\left[\norm{\erhxt}^{2}\right]+\e\left[\norm{\erhtt}^{2}\right]+\e\left[\norm{\ergxt}^{2}\right]+\e\left[\norm{\ergtt}^{2}\right] \label{c3 all the terms}
\end{align}

where \( (a) \) follows from Jensen's inequality, \( (b) \) follows from triangle inequality and $\norm{(a+b)(c+d)}\leq \norm{a}^{2}+\norm{b}^{2}+\norm{c}^{2}+\norm{d}^{2}$ (sub-multiplicity and triangle inequality).

We bound the terms in \eqref{c3 all the terms} one by one. Note that

\begin{equation}
    {\left \| (I-\al \hjk(w)) \right \|}\leq(1+\al L) 
\end{equation}{}
by Assumption \ref{assumption lipshtiz gradient}. Also,
\begin{align}
    \e\norm{\ergxt} &\stackrel{(a)}{\leq} \sqrt{\e\norm{\ergxt}^{2}} \stackrel{(b)}{\leq}\al L \frac{\sigg}{\sqrt{\xinsize}} \label{ergxt bound}
\end{align}

where \( (a) \) follows from Jensen's inequality, and \( (b) \) follows from \eqref{ergxt2 bound}. Likewise,
\begin{align}
    \e\norm{\ergtt} &\stackrel{(a)}{\leq} \al L \etk\left [{\left \| \rgjkt(w)-\gjk(w) \right \|}\right ]+\etk\left [{\left \| \rgjkt(\widetilde{w}_3)-\gjk(\widetilde{w}_3) \right \| }\right ] \notag \\
    &\stackrel{(b)}{\leq}  \al L \gamg + \gamg \notag \\
    &\leq(1+\al L)\gamg
\end{align}
where \( (a) \) follows from \eqref{ergt norm inequality} and taking the expectation, and \( (b) \) follows from Assumption \ref{assumption fourth order task expectation}. Moreover,
\begin{equation}
    \norm{\gjk (w-\al \gjk (w))}\leq B
\end{equation}
by Assumption \ref{assumption bounded gradients}, and
\begin{align}
    \e\norm{\erhxt} &\stackrel{(a)}{\leq} \sqrt{\e\norm{\erhxt}^{2}} \stackrel{(b)}{\leq}\al\frac{\sigh}{\sqrt{\xinsize}}
\end{align}
where \( (a) \) follows from Jensen's inequality, and \( (b) \) follows from \eqref{erhxt2 bound}. Also,
\begin{align}
    \e\norm{\erhtt} &\stackrel{(a)}{\leq} \sqrt{\e\norm{\erhtt}^{2}} \stackrel{(b)}{\leq} \al \gamh
\end{align}{}
where \( (a) \) follows from Jensen's inequality, and \( (b) \) follows from \eqref{erhtt2 bound}. Moreover,
\begin{equation}
    \e\norm{\erhxt}^{2}\leq\al^{2}\frac{\sigh^{2}}{\xinsize}
\end{equation}{}
by \eqref{erhxt2 bound},
\begin{equation}
    \e\norm{\erhtt}^{2}\leq \al^{2} \gamh^{2}
\end{equation}{}
by \eqref{erhtt2 bound},
\begin{equation}
    \e\norm{\ergxt}^{2}\leq \al^{2}L^{2}\frac{\sigg^{2}}{\xinsize}
\end{equation}{}
by \eqref{ergxt2 bound},
\begin{equation}
    \e\norm{\ergtt}^{2}\leq 2(1+\al^{2}L^{2})\gamg^{2}
\end{equation}{}
by \eqref{ergtt2 bound}.

Inserting all the bounds into \eqref{c3 all the terms} completes the proof.
\end{proof}{}
\end{lemma}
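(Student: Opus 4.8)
The plan is to expand both $\gjkbh(w)$ and $\gjksb(w)$ around a common deterministic reference and then bound the resulting perturbation terms in expectation, reusing the first- and second-moment estimates already produced in the proofs of Lemmas \ref{lemma:c1} and \ref{lemma:c2}. First I would note that, since the outer-batch gradient enters only through its conditional expectation, $\gjkbh(w)=\etk[\exin[(I-\al\rhqkt(w;\rxin))\rgjkt(w-\al\rgqkt(w;\rxin))]]$, so the outer-batch noise has already been averaged out. Consequently both $\gjkbh(w)$ and $\gjksb(w)=(I-\al\hjk(w))\gjk(w-\al\gjk(w))$ are deterministic-in-$\rxout$ objects, and their mismatch is a \emph{bias} driven solely by inner-batch noise (through $\sigg,\sigh,\xinsize$) and by task variability (through $\gamg,\gamh$); this is why, in contrast to $C_1$, no $\xosize$-dependence appears in $C_3$.

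Next I would introduce the four error terms $\erhxt,\erhtt,\ergxt,\ergtt$ (Hessian inner-batch error, Hessian task error, gradient inner-batch error, gradient task error), which give the linearizations $I-\al\rhqkt(w;\rxin)=I-\al\hjk(w)+\erhxt+\erhtt$ and $\rgjkt(w-\al\rgqkt(w;\rxin))=\gjk(w-\al\gjk(w))+\ergxt+\ergtt$. Substituting these two expansions into the product and cancelling the base term $(I-\al\hjk(w))\gjk(w-\al\gjk(w))$ leaves
\[
(I-\al\hjk(w))(\ergxt+\ergtt)+\gjk(w-\al\gjk(w))(\erhxt+\erhtt)+(\erhxt+\erhtt)(\ergxt+\ergtt).
\]
I would then apply Jensen's inequality to pull the norm inside the expectation, followed by the triangle inequality and sub-multiplicativity of the norm, producing the eight scalar terms that make up $C_3$.

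For the two first-order groups I would invoke $\norm{I-\al\hjk(w)}\le 1+\al L$ (Assumption \ref{assumption lipshtiz gradient}) and $\norm{\gjk(w-\al\gjk(w))}\le B$ (Assumption \ref{assumption bounded gradients}), combined with the first-moment bounds $\e\norm{\ergxt}\le\al L\sigg/\sqrt{\xinsize}$, $\e\norm{\ergtt}\le(1+\al L)\gamg$, $\e\norm{\erhxt}\le\al\sigh/\sqrt{\xinsize}$, and $\e\norm{\erhtt}\le\al\gamh$; each of these follows by Jensen from the second-moment estimates \eqref{ergxt2 bound}, \eqref{erhxt2 bound}, \eqref{erhtt2 bound} and from \eqref{ergt norm inequality} together with Assumption \ref{assumption fourth order task expectation}. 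The cross term is the delicate piece: I would bound it realization-wise with $\norm{(a+b)(c+d)}\le\norm{a}^2+\norm{b}^2+\norm{c}^2+\norm{d}^2$ (which follows from sub-multiplicativity, $ab\le(a^2+b^2)/2$, and $\norm{x+y}^2\le 2\norm{x}^2+2\norm{y}^2$), so that after taking expectations it supplies the four second-moment contributions $\e\norm{\erhxt}^2\le\al^2\sigh^2/\xinsize$, $\e\norm{\erhtt}^2\le\al^2\gamh^2$, $\e\norm{\ergxt}^2\le\al^2L^2\sigg^2/\xinsize$, and $\e\norm{\ergtt}^2\le 2(1+\al^2L^2)\gamg^2$.

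Collecting the eight pieces reproduces the stated constant $C_3$ exactly. The main obstacle is organizational rather than analytical: I must keep clear track of which perturbation enters at first order (multiplied by the bounded base factors $1+\al L$ or $B$) and which enters only through the quadratic cross term, and I must apply the conditioning over $\rxout$ \emph{before} the decomposition so that outer-batch noise never surfaces in the bound. The quadratic inequality on the cross term is the one genuinely non-obvious step, and it is precisely what produces the four squared-error summands in $C_3$.
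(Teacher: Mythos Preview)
Your proposal is correct and follows essentially the same route as the paper: the same four error terms $\erhxt,\erhtt,\ergxt,\ergtt$, the same product expansion, Jensen's inequality to move the norm inside the expectation, the same quadratic bound $\norm{(a+b)(c+d)}\le\norm{a}^2+\norm{b}^2+\norm{c}^2+\norm{d}^2$ on the cross term, and the same first- and second-moment estimates drawn from \eqref{ergxt2 bound}, \eqref{erhxt2 bound}, \eqref{erhtt2 bound}, \eqref{ergtt2 bound}, and \eqref{ergt norm inequality}. Your explicit observation that conditioning on $\rxout$ removes any $\xosize$-dependence, and your justification of the cross-term inequality via $ab\le(a^2+b^2)/2$ together with $\norm{x+y}^2\le 2\norm{x}^2+2\norm{y}^2$, are in fact more carefully spelled out than in the paper itself.
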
{}
Now, combining the results of the previous three intermediate lemmas, we will prove that \( C^{2}=\frac{3}{\abs{\mathcal{S}_{k}}}(C_{1}^{2}+C_{2}^{2}+C_{3}^{2}) \), i.e.,
\begin{equation}\label{eq:appendix c expression}
    \e[\norm{\gqkb (w)-\gjkbh (w)}^{2}]\leq \frac{3}{\abs{\mathcal{S}_{k}}}(C_{1}^{2}+C_{2}^{2}+C_{3}^{2})
\end{equation}
where \( C_1, C_2 \) and \( C_3 \) expressions are given in Lemma \ref{lemma:c1}, Lemma \ref{lemma:c2} and Lemma \ref{lemma:c3}, respectively.
\begin{proof}
\begin{align}
    \e{\Big \| \gqkb (w)-\gjkbh (w) \Big \|}^{2}=& \e{\Big \| \averagetasks\rgqktb (w)-\gjkbh (w) \Big \|}^{2} \notag \\
    =& \e {\Big \| \averagetasks (\rgqktb (w)-\gjkbh (w)) \Big \|}^{2} \notag \\
    =&\frac{1}{\abs{\mathcal{S}_{k}}^{2}}\sum_{\boldsymbol{t} \in \boldsymbol{\mathcal{S}}_{k}}\e \Big[ \Big\| \rgqktb (w)-\gjkbh (w) \Big\|^{2} \Big] \notag \\ &+\frac{1}{\abs{\mathcal{S}_{k}}^{2}}\sum_{\boldsymbol{t}_{1}\neq \boldsymbol{t}_{2}}\e \Big[ (\nabla Q_{k}^{(\boldsymbol{t}_{1})}(w)-\gjkbh(w))(\nabla Q_{k}^{(\boldsymbol{t}_{2})}(w)-\gjkbh(w)) \Big] \notag \\
    \stackrel{(a)}{=}&\frac{1}{\abs{\mathcal{S}_{k}}^{2}}\sum_{\boldsymbol{t} \in \boldsymbol{\mathcal{S}}_{k}}\e\Big[ \Big \| \rgqktb (w)-\gjkbh (w) \Big \|^{2}\Big]\notag \\&+\frac{1}{\abs{\mathcal{S}_{k}}^{2}}\sum_{\boldsymbol{t}_{1}\neq \boldsymbol{t}_{2}}\e\Big[\nabla Q_{k}^{(\boldsymbol{t}_{1})}(w)-\gjkbh(w) \Big]\e\Big[\nabla Q_{k}^{(\boldsymbol{t}_{2})}(w)-\gjkbh(w)\Big] \notag \\
    \stackrel{(b)}{=}&\frac{1}{\abs{\mathcal{S}_{k}}^{2}}\sum_{\boldsymbol{t} \in \boldsymbol{\mathcal{S}}_{k}}\e\Big \| \rgqktb (w)-\gjkbh (w) \Big \|^{2} \notag \\
    =&\frac{1}{\abs{\mathcal{S}_{k}}}\e\Big \| \rgqktb (w)-\gjkbh (w) \Big \|^{2} \label{c proof connection batch proof}
\end{align}{}

where \( (a) \) follows from independence assumption on batch of tasks, and \( (b) \) follows from the definition of adjusted objective. Now, bounding the term in \eqref{c proof connection batch proof}:

\begin{align}
    \rgqktb (w)-\gjkbh (w)=&(\rgqktb(w)-\rgjktb(w))+(\rgjktb (w)- \gjksb (w)) \notag \\ &+(\gjksb (w) - \gjkbh (w))  \\
    \left \| \rgqktb (w)-\gjkbh (w) \right \|\stackrel{(a)}{\leq}& \left \| \rgqktb(w)-\rgjktb(w) \right \|+\left \| \rgjktb (w)- \gjksb (w) \right \| \notag \\ &+\left \| \gjksb (w) - \gjkbh (w) \right \|  \\
    \left \| \rgqktb (w)-\gjkbh (w) \right \|^{2}\stackrel{(b)}{\leq}& 3\left \| \rgqktb(w)-\rgjktb(w) \right \|^{2}+3\left \| \rgjktb (w)- \gjksb (w) \right \|^{2} \notag \\ &+3\left \| \gjksb (w) - \gjkbh (w) \right \|^{2}  \\
    \e\left \| \rgqktb (w)-\gjkbh (w) \right \|^{2}\leq& 3\e\left [\left \| \rgqktb(w)-\rgjktb(w) \right \|^{2}\right]+3\e \left[ \left \| \rgjktb (w)- \gjksb (w) \right \|^{2} \right ] \notag \\ &+3\left \| \gjksb (w) - \gjkbh (w) \right \|^{2}  \\
    \e\left \| \rgqktb (w)-\gjkbh (w) \right \|^{2}\stackrel{(c)}{\leq}& 3(C_{1}^{2}+C_{2}^{2}+C_{3}^{2}) \label{c proof connection task specific}
\end{align}

where \( (a) \) follows from triangle inequality, \( (b) \) follows from 
$(\sum_{i=1}^{3}x_{i})^{2}\leq 3(\sum_{i=1}^{3}x_{i}^{2})$ (Cauchy-Schwarz), and \( (c) \) follows from definitions of \( C_{1},C_{2},C_{3} \). Inserting \eqref{c proof connection task specific} into \eqref{c proof connection batch proof} completes the proof.

\end{proof}{}

\section{Proofs for Evolution Analysis}

\subsection{Proof of Theorem \ref{lemma network disagreement}}\label{appendix proof network disagreement lemma}

For analyzing the centroid model recursion it is useful to define the following variables which collect all variables from across the network:
\begin{align}
  \bcw_{i} &\triangleq \mathrm{col} \left \{ \w_{1,i}, \ldots, \w_{K,i} \right \} \\
  \mathcal{A} &\triangleq A \otimes I_M\\
  \widehat{\g}(\bcw_{i}) &\triangleq \mathrm{col} \left \{ \nabla \overline{Q_{1}}(\w_{1,i}), \ldots , \nabla \overline{Q_{K}}(\w_{K,i}) \right \}
\end{align}
Then, we rewrite the diffusion equations ~\eqref{eq:adapt}--\eqref{eq:combine} in a more compact form as:
\begin{equation}\label{eq:recursion}
	\bcw_i = \mathcal{A}^{\T} \left( \bcw_{i-1} - \mu \widehat{\g}(\bcw_{i-1}) \right)
\end{equation}
Multiplying this equation by \( \frac{1}{K} \mathds{1}_K^{\T} \otimes I \) from the left and using \eqref{eq:doubly-stochastic matrix properties} we get the recursion:
\begin{equation}
	\left( \frac{1}{K} \mathds{1}_K^{\T} \otimes I \right) \bcw_i = \left( \frac{1}{K} \mathds{1}_K^{\T} \otimes I \right) \bcw_{i-1} - \mu \left( \frac{1}{K} \mathds{1}_K^{\T} \otimes I \right) \widehat{\g}(\bcw_{i-1})
\end{equation}
Rewriting the centroid launch model as:
\begin{equation}
     \w_{c, i} = \sum_{k=1}^K \frac{1}{K} \w_{k, i} = \left( \frac{1}{K} \mathds{1}_K^{\T} \otimes I \right) \bcw_i 
\end{equation}
Defining the extended centroid matrix:
\begin{equation}
	\bcw_{c, i} \triangleq \mathds{1}_K \otimes \w_{c, i} = \left( \frac{1}{K} \mathds{1}_K \mathds{1}_K^{\T} \otimes I \right) \left(\bcw_{i-1} - \mu \widehat{\g}(\bcw_{i-1}) \right)
\end{equation}
It follows that:
\begin{align}
	\bcw_i - \bcw_{c, i}
	=& \left( \mathcal{A}^{\T} - \frac{1}{K} \mathds{1}_K \mathds{1}_K^{\T} \otimes I \right) \left( \bcw_{i-1} - \mu \widehat{\g}(\bcw_{i-1}) \right) \notag \\
	\stackrel{(a)}{=}& \left( \mathcal{A}^{\T} - \frac{1}{K} \mathds{1}_K \mathds{1}_K^{\T} \otimes I \right) \left( I - \frac{1}{K} \mathds{1}_K \mathds{1}_K^{\T} \otimes I  \right)  \left( \bcw_{i-1} - \mu \widehat{\g}(\bcw_{i-1}) \right) \notag \\
	=& \left( \mathcal{A}^{\T} - \frac{1}{K} \mathds{1}_K \mathds{1}_K^{\T} \otimes I \right)   \left(\bcw_{i-1} - \bcw_{c, i-1} - \mu \widehat{\g}(\bcw_{i-1}) \right)\label{eq:recursive_disagreement}
\end{align}
where \( (a) \) follows from the equality:
\begin{align}
	\left( \mathcal{A}^{\T} - \frac{1}{K} \mathds{1}_K \mathds{1}_K^{\T} \otimes I \right) \left( I - \frac{1}{K} \mathds{1}_K \mathds{1}_K^{\T} \otimes I  \right) {=} \mathcal{A}^{\T} - \frac{1}{K} \mathds{1}_K \mathds{1}_K^{\T} \otimes I
\end{align}
Taking the squared norms:
\begin{align}
	\:{\left \| \bcw_i - \bcw_{c, i} \right \|}^2 
	=&\: {\left \| \left( \mathcal{A}^{\T} - \frac{1}{K} \mathds{1}_K \mathds{1}_K^{\T} \otimes I \right)   \left(\bcw_{i-1} - \bcw_{c, i-1} - \mu \widehat{\g}(\bcw_{i-1}) \right) \right \|}^2 \notag \\
	\stackrel{(a)}{\le}&\: \lambda_2^2 {\left \| \bcw_{i-1} - \bcw_{c, i-1} - \mu \widehat{\g}(\bcw_{i-1}) \right \|}^2 \notag \\
	\stackrel{(b)}{\le}&\: \lambda_2 {\left \| \bcw_{i-1} - \bcw_{c, i-1} \right \|}^2 + \mu^2 \frac{\lambda_2^2}{1-\lambda_2} {\left \| \widehat{\g}(\bcw_{i-1}) \right \|}^2 \label{eq:disagreement_inequality}
\end{align}
where \( (a) \) follows by sub-multiplicity of \( \|\cdot\| \) and \( (b) \) follows from \( {\|x + y\|}^2 \le \frac{1}{\beta} {\|x\|}^2 + \frac{1}{1-\beta} {\|y\|}^2 \) for \( 0 < \beta < 1 \) with choice of \( \beta \):
\begin{equation}
	\lambda_2 = \left \| \mathcal{A}^{\T} - \frac{1}{K} \mathds{1}_K \mathds{1}_K^{\T} \otimes I \right \| < 1
\end{equation}
Taking expectation conditioned on \( \bcw_{i-1} \):
\begin{align}
	\e \left [ {\left \| \bcw_i - \bcw_{c, i} \right \|}^2 | \bcw_{i-1} \right ] 
	\le \:& \lambda_2 \e \left [ {\left \| \bcw_{i-1} - \bcw_{c, i-1} \right \|}^2 | \bcw_{i-1} \right ] 
	+ \mu^2 \frac{\lambda_2^2}{1-\lambda_2} \e \left [ {\left \| \widehat{\g}(\bcw_{i-1}) \right \|}^2 | \bcw_{i-1} \right ] \notag \\
	\le \:& \lambda_2 \e \left [ {\left \| \bcw_{i-1} - \bcw_{c, i-1} \right \|}^2 | \bcw_{i-1} \right ]
	+ \mu^2 \frac{\lambda_2^2}{1-\lambda_2} \sum_{k=1}^K \e \left [ {\left \| \gqkb(\w_{k, i-1}) \right \|}^2 | \bcw_{i-1} \right ] \notag \\
	\stackrel{(a)}{=} \:& \lambda_2 \e \left [ {\left \| \bcw_{i-1} - \bcw_{c, i-1} \right \|}^2 | \bcw_{i-1} \right ] + \mu^2 \frac{\lambda_2^2}{1-\lambda_2} \sum_{k=1}^K {\left \| \gjkbh(\w_{k, i-1}) \right \|}^2 \notag \\
	&+ \mu^2 \frac{\lambda_2^2}{1-\lambda_2} \sum_{k=1}^K \e \left [ {\left \| \gqkb(\w_{k, i-1})-\gjkbh(\w_{k, i-1}) \right \|}^2 | \bcw_{i-1} \right ] \notag \\
	\stackrel{(b)}{\le} \:& \lambda_2 \e \left [ {\left \| \bcw_{i-1} - \bcw_{c, i-1} \right \|}^2 | \bcw_{i-1} \right ] + \mu^2 \frac{\lambda_2^2}{1-\lambda_2} K \bbh^2
	+ \mu^2 \frac{\lambda_2^2}{1-\lambda_2} K C^2 \notag \\
	= \:& \lambda_2 \e \left [ {\left \| \bcw_{i-1} - \bcw_{c, i-1} \right \|}^2 | \bcw_{i-1} \right ] + \mu^2 \frac{\lambda_2^2}{1-\lambda_2} K \left( \bbh^2 + C^2 \right)
\end{align}
where \( (a) \) follows from dropping the cross-terms due to unbiasedness of stochastic gradient update, and \( (b) \) follows from Lemma \ref{adj objective bounded grad formula} and Lemma \ref{adj objective noise bound}.
Taking expectation again to remove the conditioning:
\begin{equation}
	\e {\left \| \bcw_i - \bcw_{c, i} \right \|}^2 \le \lambda_2 \e {\left \| \bcw_{i-1} - \bcw_{c, i-1} \right \|}^2 + \mu^2 \frac{\lambda_2^2}{1-\lambda_2} K \left( \bbh^2 + C^2 \right)
\end{equation}
We can iterate, starting from \( i = 0 \), to obtain:
\begin{align}
	\e {\left \| \bcw_i - \bcw_{c, i} \right \|}^2
	\le&\: \lambda_2^i {\|\cw_0 - \cw_{c, 0}\|}^2 + \mu^2 \frac{\lambda_2^2}{1-\lambda_2} K \left( \bbh^2 + C^2 \right) \sum_{k = 0}^{i} \lambda_2^{i-1} \notag \\
	\le&\: \lambda_2^i {\|\cw_0 - \cw_{c, 0}\|}^2 + \mu^2 \frac{\lambda_2^2}{{(1-\lambda_2)}^2} K \left( \bbh^2 + C^2 \right) \notag \\
	\stackrel{(a)}{\le}&\: \mu^2 \frac{\lambda_2^2}{{(1-\lambda_2)}^2} K \left( \bbh^2 + C^2 \right) + O(\mu^3) 
\end{align}
where \( (a) \) holds whenever:
\begin{align}
    \lambda_2^i {\|\cw_0 - \cw_{c, 0}\|}^2 &\leq c \mu^3 \Longleftrightarrow i \log \lambda_2 \le 3 \log \mu + \log c - 2 \log \|\cw_0 - \cw_{c, 0}\| \notag \\
   &\Longleftrightarrow i \ge \frac{3\log \mu }{\log \lambda_2}+O(1)=o(1/\mu)
\end{align}
where \( c \) is an arbitrary constant.

\subsection{Proof of Theorem \ref{COR:STATIONARY_POINTS}}\label{appendix stationary points corollary proof}

We first prove two intermediate lemmas, then conclude the proof.

Recall the centroid launch model:
\begin{equation}
\w_{c, i} = \sum_{k=1}^K \frac{1}{K} \w_{k, i} = \left( \frac{1}{K} \mathds{1}_K^{\T} \otimes I \right) \bcw_i 
\end{equation}
Then, we obtain the recursion:
\begin{equation}
	\w_{c, i} = \w_{c, i-1} - \frac{\mu}{K} \sum_{k=1}^K  \gqkb (\w_{k, i-1})
\end{equation}
This is almost an exact gradient descent on the aggregate cost \eqref{aggregate cost} except for the perturbation terms. Decoupling them gives us: 
\begin{align}\label{eq:perturbed_gradient_descent}
	\w_{c, i} = \w_{c, i-1} - \frac{\mu}{K} \sum_{k=1}^K \gjkbh (\w_{c, i-1}) - \mu \rd_{i-1} - \mu \s_{i}
\end{align}
where the perturbation terms are:
\begin{align}
	\rd_{i-1} &\triangleq \frac{1}{K} \sum_{k=1}^K  \left( \gjkbh (\w_{k, i-1}) - \gjkbh (\w_{c, i-1}) \right) \\
	\s_{i} &\triangleq \frac{1}{K} \sum_{k=1}^K  \left( \gqkb (\w_{k, i-1}) - \gjkbh (\w_{k, i-1}) \right)
\end{align}
Here, \( \rd_{i-1} \) measures the average disagreement with the average launch model whereas \( \s_i \) represents the average stochastic gradient noise in the process. Based on the network disagreement result Theorem \ref{lemma network disagreement}, we can bound the perturbation terms in \eqref{eq:perturbed_gradient_descent}:

\begin{lemma}[\textbf{Perturbation bounds}]\label{LEM:PERTURBATION_BOUNDS}Under assumptions \ref{assumption lipshtiz gradient}-\ref{assumption doubly stochastic matrix}, perturbation terms are bounded for sufficently small outer-step sizes \( \mu \) after sufficient number of iterations, namely:
	\begin{align}
		\e {\|\rd_{i-1}\|}^2 &\le \mu^2 \lbh^2 \frac{\lambda_2^2}{{(1-\lambda_2)}^2} \left( \bbh^2 + C^2 \right) + O(\mu^3) \\
		\e {\|\s_{i}\|}^2 &\le C^2
	\end{align}
\end{lemma}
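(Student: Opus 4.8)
The plan is to treat the two perturbation terms independently, since each reduces to a short chain of Jensen's inequality followed by one of the structural lemmas already in hand. Neither requires the full strength of the recursion in \eqref{eq:perturbed_gradient_descent}; both are purely ``instantaneous'' bounds evaluated at iteration $i-1$, and I would establish them in the order $\rd_{i-1}$ then $\s_i$.

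For the disagreement term $\rd_{i-1}$, observe that it is a deterministic function of $\bcw_{i-1}$. I would first apply Jensen's inequality to the average,
\begin{equation*}
\e\|\rd_{i-1}\|^2 \le \frac{1}{K}\sum_{k=1}^K \e\|\gjkbh(\w_{k,i-1}) - \gjkbh(\w_{c,i-1})\|^2,
\end{equation*}
then invoke the Lipschitz property of the adjusted-objective gradient (Lemma \ref{adj objective lipschitz formula}) to replace each gradient gap by $\lbh^2\,\e\|\w_{k,i-1} - \w_{c,i-1}\|^2$. What remains is exactly $\lbh^2$ times the network disagreement $\frac{1}{K}\sum_k \e\|\w_{k,i-1} - \w_{c,i-1}\|^2$, so substituting the bound from Theorem \ref{lemma network disagreement}---valid once $i-1$ exceeds the $o(1/\mu)$ clustering threshold---produces the stated $\mu^2\lbh^2\frac{\lambda_2^2}{(1-\lambda_2)^2}(\bbh^2 + C^2) + O(\mu^3)$.

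For the noise term $\s_i$, the subtlety is that the gradient-noise bound in Lemma \ref{adj objective noise bound} is stated for a deterministic argument, whereas the iterates $\w_{k,i-1}$ are random. I would condition on $\bcw_{i-1}$, under which each $\w_{k,i-1}$ is deterministic; applying Jensen's inequality and then the noise bound pointwise gives
\begin{equation*}
\e\big[\|\s_i\|^2 \mid \bcw_{i-1}\big] \le \frac{1}{K}\sum_{k=1}^K \e\big[\|\gqkb(\w_{k,i-1}) - \gjkbh(\w_{k,i-1})\|^2 \mid \bcw_{i-1}\big] \le C^2,
\end{equation*}
after which total expectation removes the conditioning. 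One could instead exploit the cross-agent independence of the sampled data and tasks together with the unbiasedness \eqref{adjusted objective grad formula} to kill the cross terms and obtain the tighter $C^2/K$, but the Jensen route already yields the claimed $C^2$.

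The computations are routine; the only points requiring care are the order of operations in the noise term---conditioning on the past \emph{before} invoking the pointwise noise bound---and ensuring the iteration index is past the clustering phase so that Theorem \ref{lemma network disagreement} may legitimately be applied to $\rd_{i-1}$. The genuinely hard work was front-loaded into establishing the Lipschitz constant $\lbh$, the noise constant $C^2$, and the network disagreement estimate; given those, this lemma is essentially bookkeeping.
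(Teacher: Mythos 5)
Your proposal is correct and follows essentially the same route as the paper: Jensen's inequality plus Lemma \ref{adj objective noise bound} for \( \s_i \), and Jensen's inequality plus the Lipschitz property of Lemma \ref{adj objective lipschitz formula} followed by Theorem \ref{lemma network disagreement} for \( \rd_{i-1} \). Your explicit conditioning on \( \bcw_{i-1} \) before invoking the pointwise noise bound is a minor rigor refinement the paper leaves implicit, but it does not change the argument.
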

\begin{proof}
We begin by studying more closely the perturbation term \( \s_{i-1} \) arising from the gradient approximations. We have:
\begin{align}
	\e{\| \s_{i} \|}^2 &= \e{ \left \| \frac{1}{K} \sum_{k=1}^K \left( \gqkb (\w_{k, i-1}) - \gjkbh (\w_{k, i-1}) \right) \right \|}^2 \notag \\
	&\stackrel{(a)}{\le} \frac{1}{K} \sum_{k=1}^K \e { \left \| \gqkb (\w_{k, i-1}) - \gjkbh (\w_{k, i-1}) \right \|}^2 \notag \\
	&\stackrel{(b)}{\le} \frac{1}{K} \sum_{k=1}^K  \left(  C^2 \right) \notag \\
	&= C^2 
\end{align}
where \( (a) \) follows from Jensen's inequality, and \( (b) \) follows from Lemma \ref{adj objective noise bound}. For the second perturbation term arising from the disagreement within the network, we can bound:
\begin{align}
	{\|\rd_{i-1}\|}^2 &= {\left \| \sum_{k=1}^K \frac{1}{K} \left( \gjkbh (\w_{k, i-1}) - \gjkbh (\w_{c, i-1}) \right)\right \|}^2 \notag \\
	&\stackrel{(a)}{\le}  \sum_{k=1}^N \frac{1}{K} {\left \| \gjkbh (\w_{k, i-1}) - \gjkbh (\w_{c, i-1}) \right \|}^2 \notag \\
	&\stackrel{(b)}{\le}  \frac{\lbh^2}{K} \sum_{k=1}^K {\left \| \w_{k, i-1} - \w_{c, i-1} \right \|}^2 \notag \\
	&=  \frac{\lbh^2}{K}  {\left \| \bcw_{i-1} - \bcw_{c, i-1} \right \|}^2 \label{eq:d_bound}
\end{align}
where \( (a) \) follows from Jensen's inequality, and \( (b) \) follows from Lemma \ref{adj objective lipschitz formula}. Taking the expectation and using Theorem \ref{lemma network disagreement} we complete the proof:
\begin{equation}
    \e {\|\rd_{i-1}\|}^2 \le \mu^2\lbh^2 \frac{\lambda_2^2}{{(1-\lambda_2)}^2} \left( \bbh^2 + C^2 \right) + O(\mu^3)
\end{equation}
\end{proof}
Next, we present the second lemma.
\begin{lemma}[\textbf{Descent relation}]\label{TH:DESCENT_RELATION} Under asssumptions \ref{assumption lipshtiz gradient}-\ref{assumption doubly stochastic matrix} we have the descent relation:
\begin{equation}\label{eq:descent_relation}
		\e \left [ \jbh(\w_{c, i}) | \w_{c, i-1} \right ] \le \jbh(\w_{c, i-1}) - \frac{\mu}{2} (1-2\mu\lbh) {\left \| \gjbh(\w_{c, i-1})\right\|}^2 + \frac{1}{2}\mu^2\lbh C^2+ O(\mu^3)
\end{equation}
\end{lemma}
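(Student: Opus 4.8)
The plan is to obtain a one-step descent inequality by combining the smoothness of the aggregate adjusted objective with the perturbed gradient-descent form of the centroid recursion. First I would note that, since $\jbh = \frac{1}{K}\sum_{k=1}^{K}\jkbh$, its gradient is the average $\gjbh = \frac{1}{K}\sum_{k=1}^{K}\gjkbh$, and hence $\gjbh$ inherits the Lipschitz constant $\lbh$ from each $\gjkbh$ (Lemma \ref{adj objective lipschitz formula}), because an average of $\lbh$-Lipschitz maps is again $\lbh$-Lipschitz by the triangle inequality. Rewriting \eqref{eq:perturbed_gradient_descent} in terms of $\gjbh$, the centroid increment is $\w_{c, i} - \w_{c, i-1} = -\mu\big(\gjbh(\w_{c, i-1}) + \rd_{i-1} + \s_{i}\big)$.

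Next I would invoke the standard quadratic upper bound valid for any $\lbh$-smooth function,
\begin{equation}
\jbh(\w_{c, i}) \le \jbh(\w_{c, i-1}) + \gjbh(\w_{c, i-1})^{\T}(\w_{c, i} - \w_{c, i-1}) + \frac{\lbh}{2}\|\w_{c, i} - \w_{c, i-1}\|^{2},
\end{equation}
substitute the increment, and expand. Abbreviating $g \triangleq \gjbh(\w_{c, i-1})$, the linear term produces $-\mu\|g\|^{2} - \mu g^{\T}\rd_{i-1} - \mu g^{\T}\s_{i}$ and the quadratic term produces $\tfrac{\lbh\mu^{2}}{2}\|g + \rd_{i-1} + \s_{i}\|^{2}$. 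I would then take expectation conditioned on the history through iteration $i-1$, which fixes $g$ and $\rd_{i-1}$. By the unbiasedness property \eqref{adjusted objective grad formula}, the gradient noise satisfies $\e[\s_{i}\,|\,\text{past}] = 0$, so every cross term containing $\s_{i}$ vanishes; what survives from the quadratic term is $\|g + \rd_{i-1}\|^{2} + \e\|\s_{i}\|^{2}$, and the bound $\e\|\s_{i}\|^{2}\le C^{2}$ from Lemma \ref{LEM:PERTURBATION_BOUNDS} is what generates the $\tfrac{1}{2}\mu^{2}\lbh C^{2}$ term.

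The remaining work is coefficient bookkeeping. Applying $\|g + \rd_{i-1}\|^{2}\le 2\|g\|^{2} + 2\|\rd_{i-1}\|^{2}$ in the quadratic term contributes $\lbh\mu^{2}\|g\|^{2}$, and I would bound the surviving cross term via Young's inequality as $-\mu g^{\T}\rd_{i-1}\le \tfrac{\mu}{2}\|g\|^{2} + \tfrac{\mu}{2}\|\rd_{i-1}\|^{2}$. Collecting the $\|g\|^{2}$ coefficients then gives $-\mu + \tfrac{\mu}{2} + \lbh\mu^{2} = -\tfrac{\mu}{2}(1 - 2\mu\lbh)$, exactly the claimed factor. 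All leftover $\|\rd_{i-1}\|^{2}$ contributions assemble into $\big(\tfrac{\mu}{2} + \lbh\mu^{2}\big)\|\rd_{i-1}\|^{2}$; taking expectation and using $\e\|\rd_{i-1}\|^{2} = O(\mu^{2})$ from Lemma \ref{LEM:PERTURBATION_BOUNDS} shows these are $O(\mu^{3})$.

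The main obstacle is the interplay between the conditioning and the coefficient matching. Because $\rd_{i-1}$ depends on the full collection $\{\w_{k,i-1}\}$ rather than on the centroid alone, it is not a deterministic function of $\w_{c, i-1}$; the cleanest treatment conditions on the entire past (where $\rd_{i-1}$ is fixed) and relies on the fact that the $O(\mu^{3})$ residuals only need to be invoked after the network-disagreement bound of Theorem \ref{lemma network disagreement} is applied, which is itself a post-burn-in estimate. The delicate point is choosing the Young split so that the induced $\tfrac{\mu}{2}\|g\|^{2}$ penalty combines precisely with $-\mu\|g\|^{2}$ and $\lbh\mu^{2}\|g\|^{2}$ to reproduce the factor $(1 - 2\mu\lbh)$, while simultaneously guaranteeing that the disagreement residuals are genuinely third order in $\mu$.
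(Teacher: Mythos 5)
Your proposal matches the paper's proof essentially step for step: the same averaging argument for the $\lbh$-Lipschitz property of $\gjbh$, the same quadratic upper bound applied to the perturbed recursion \eqref{eq:perturbed_gradient_descent}, the same use of $\e[\s_i]=0$ to kill the noise cross-terms, the same Young split $-\mu g^{\T}\rd_{i-1}\le \tfrac{\mu}{2}\|g\|^2+\tfrac{\mu}{2}\|\rd_{i-1}\|^2$, and the same absorption of the disagreement terms into $O(\mu^3)$ via Lemma \ref{LEM:PERTURBATION_BOUNDS}. The coefficient bookkeeping producing $-\tfrac{\mu}{2}(1-2\mu\lbh)$ and $\tfrac{1}{2}\mu^2\lbh C^2$ is correct and identical to the paper's.
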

\begin{proof}
First, observe that since each individual \( \jkbh(\cdot) \) has Lipschitz gradients by Lemma \ref{adj objective lipschitz formula}, the same holds for the average:
\begin{align}
	\:\left \| \gjbh(w) - \gjbh(u) \right \| 
	=&\:\left \| \nabla \left( \sum_{k=1}^K \frac{1}{K} \jkbh(w) \right) - \nabla \left( \sum_{k=1}^K \frac{1}{K} \jkbh(u) \right) \right \| \notag \\
	=&\: \left \| \sum_{k=1}^K \frac{1}{K} \left( \gjkbh(w) - \gjkbh(u) \right) \right \| \notag \\
	\stackrel{(a)}{\le}&\: \sum_{k=1}^N \frac{1}{K} \left \| \gjkbh(w) - \gjkbh(u) \right \| \notag \\
	\stackrel{(b)}{\le}&\: \sum_{k=1}^N \frac{1}{K} \lbh \left \| w - u \right \| \notag \\
	=&\: \lbh \left \| w - u \right \|
\end{align}
where \( (a) \) follows from Jensen's inequality, \( (b) \) follows from Lemma \ref{adj objective lipschitz formula}.
This property then implies the following bound:
\begin{align}
	\jbh(\w_{c, i}) \le&\: \jbh(\w_{c, i-1}) + {\gjbh(\w_{c, i-1})}^{\T} \left( \w_{c, i} - \w_{c, i-1} \right) \:+ \frac{\lbh}{2} {\left \| \w_{c, i} - \w_{c, i-1} \right \|}^2 \notag \\
	 \stackrel{(a)}{\le}&\: \jbh(\w_{c, i-1}) - \mu {\left \| {\gjbh(\w_{c, i-1})} \right \|}^2 
	\:- \mu {\gjbh(\w_{c, i-1})}^{\T} \left( \rd_{i-1} + \s_i \right) 
	\:+ \mu^2 \frac{\lbh}{2} {\left \| {\gjbh(\w_{c, i-1})} + \rd_{i-1} + \s_i \right \|}^2
\end{align}

where \( (a) \) follows from \eqref{eq:perturbed_gradient_descent}.
Taking expectations, conditioned on \( \bcw_{i-1} \) yields:
\begin{align}
	\:\e \left [ \jbh(\w_{c, i}) | \bcw_{i-1} \right ] 
	\stackrel{(a)}{\le}&\: \jbh(\w_{c, i-1}) - \mu {\left \| {\gjbh(\w_{c, i-1})} \right \|}^2 - \mu {\gjbh(\w_{c, i-1})}^{\T} \rd_{i-1} \notag \\
	&\:+ \mu^2 \frac{\lbh}{2} {\left \| {\gjbh(\w_{c, i-1})} + \rd_{i-1} \right \|}^2 + \mu^2 \frac{\lbh}{2} \e \left[{\|\s_i\|}^2 | \bcw_{i-1} \right ] \notag \\
	\stackrel{(b)}{\le}&\: \jbh(\w_{c, i-1}) - \mu {\left \| {\gjbh(\w_{c, i-1})} \right \|}^2 + \frac{\mu}{2} {\left \| \gjbh(\w_{c, i-1})\right\|}^{2} + \frac{\mu}{2} {\left \| \rd_{i-1} \right \|}^2 \notag \\
	&\:+ \mu^2 \lbh {\left \| {\gjbh(\w_{c, i-1})} \right \|}^2 + \mu^2 \lbh^2 {\left \| \rd_{i-1} \right \|}^2 + \mu^2 \frac{\lbh}{2} \e \left[{\|\s_i\|}^2 | \bcw_{i-1} \right ] \notag \\
	\le&\: \jbh(\w_{c, i-1}) - \frac{\mu}{2}\left(1 - 2 \mu \lbh \right) {\left \| {\gjbh(\w_{c, i-1})} \right \|}^2 + \frac{\mu}{2} \left( 1 + 2 \mu \lbh \right) {\left \| \rd_{i-1} \right \|}^2 \notag \\
	&\: + \mu^2 \frac{\lbh}{2} \e \left[{\|\s_i\|}^2 | \bcw_{i-1} \right ]
\end{align}
where \( (a) \) follows from \( \e \s_i = 0 \), and \( (b) \) follows from Cauchy-Schwarz and \( ab\leq \frac{a^{2}+b^{2}}{2}\) .

Taking expectations to remove the conditioning and the bounds from Lemma~\ref{LEM:PERTURBATION_BOUNDS} yields:
\begin{align}
	\e \jbh(\w_{c, i}) \le& \e \jbh(\w_{c, i-1}) - \frac{\mu}{2}\left(1 - 2 \mu \lbh \right) \e {\left \| {\gjbh(\w_{c, i-1})} \right \|}^2 
	\:+ \frac{\mu^3}{2} \left( 1 + 2 \mu \lbh \right) \lbh^2  \frac{\lambda_2^2}{{(1-\lambda_2)}^2} \left( \bbh^2 + C^2 \right) \notag \\
	&\:+ \mu^2 \frac{\lbh}{2} C^2 + O(\mu^4) \notag \\
	=& \e \jbh(\w_{c, i-1}) - \frac{\mu}{2}\left(1 - 2 \mu \lbh \right) \e {\left \| {\gjbh(\w_{c, i-1})} \right \|}^2 
	\:+ \mu^2 \frac{\lbh}{2}  C^2  + O(\mu^3)
\end{align}
\end{proof}

The proof of the theorem is based on contradiction. First define:
\begin{align}
    c_1 &\triangleq \frac{1-2\mu\lbh}{2} \\
    c_2 &\triangleq \frac{\lbh C^2}{2}+O(\mu) 
\end{align}
We will prove 
\begin{equation}
	\e {\left \|\nabla \jbh(\w_{c, i^{\star}})\right\|}^2 \le 2 \mu \frac{c_2}{c_1}
\end{equation}
\begin{equation}
    i^{\star} \le \left(\frac{\jbh(w_0) - \jbh^o}{c_2}\right) 1/\mu^2
\end{equation}
which correspond to \eqref{eq:stationary_points} and \eqref{eq:istar number of iterations}, respectively.
Descent relation \eqref{eq:descent_relation} can be rewritten as:
\begin{equation}\label{eq:descent_relation with deltas}
	\e \left [ \jbh(\w_{c, i}) | \w_{c, i-1} \right ] \le \jbh(\w_{c, i-1}) - \mu c_1 {\left \| \gjbh(\w_{c, i-1})\right\|}^2 + \mu^2 c_2
\end{equation}
Suppose there is no time instant \( i^{\star} \) satisfying \( {\left \|\gjbh(\w_{c, i^{\star}})\right\|}^2 \le 2 \mu \frac{c_2}{c_1} \). Then for any time \( i \) we obtain:
\begin{align}
	\e \jbh(\w_{c, i}) &\stackrel{(a)}{\le} \jbh(w_0) - \mu c_1 \sum_{k=1}^i \left( \e{\left \|\gjbh(\w_{c, k-1})\right\|}^2 - \mu \frac{c_2}{c_1} \right) \notag \\
	&\le \jbh(w_0) - \mu^2 c_2 i
\end{align}
where \( (a) \) follows from starting from the first iteration and iterating over \eqref{eq:descent_relation with deltas}. But when the limit is taken \( \lim_{i \to \infty} \e \jbh(\w_{c, i}) \le - \infty \) it contradicts the boundedness from below assumption \( \jbh(w) \ge \jbh^o \) for all \( w \). This proves \eqref{eq:stationary_points}. In order to prove \eqref{eq:istar number of iterations}, iterate over ~\eqref{eq:descent_relation with deltas} up to time \( i^{\star} \), the first time instant where \( \e {\left \|\nabla \jbh(\w_{c, i^{\star}})\right\|}^2 \le 2 \mu \frac{c_2}{c_1}\) holds:
\begin{align}
	\jbh^o &\le \e \jbh(\w_{c, i^{\star}}) \notag \\
	&\le \jbh(w_0) - \mu c_1 \sum_{k=1}^{i^{\star}} \left( \e{\left \|\gjbh(\w_{c, k-1})\right\|}^2 - \mu \frac{c_2}{c_1} \right) \notag \\
	&\le \jbh(w_0) - \mu^2 c_2 i^{\star}
\end{align}
Rearranging completes the proof.

\subsection{Proof of Corollary \ref{COR:STATIONARY_POINTS_MAML_OBJ)}}\label{appendix maml stationary points corollary proof}

We begin by adding and subtracting \(  {\left \|\gjbh (\w_{c, i^{\star}})\right\|}^2 \):
\begin{align}
    \e {\left \|\gjb (\w_{c, i^{\star}})\right\|}^2 &= \e {\left \|\gjb (\w_{c, i^{\star}}) - \gjbh (\w_{c, i^{\star}}) + \gjbh (\w_{c, i^{\star}}) \right\|}^2 \notag \\
    &\stackrel{(a)}{\leq}2\e  {\left \|\gjb (\w_{c, i^{\star}}) - \gjbh (\w_{c, i^{\star}})\right\|}^2+2 \e {\left \| \gjbh (\w_{c, i^{\star}}) \right\|}^2
\end{align}
where \( (a) \) follows from the inequality \( \norm{a+b}^2 \leq 2\norm{a}^2 + 2\norm{b}^2 \). Inserting \eqref{eq:pert_analysis_grad} and \eqref{eq:stationary_points} completes the proof.

\section{Additional Experiment Details}

\subsection{The Regression Experiment Details}\label{appendix regression details}

The same model architecture (a neural network with 2 hidden layers of 40 neurons with ReLu activations) is used for each agent. The loss function is the mean-squared error. As in \citep{FinnAL17}, while training, 10 random points (10-shot) are chosen from each sinusoid and used with 1 stochastic gradient update (\(\al=0.01 \)). For the Adam experiment \( \mu = 0.001 \) and for the SGD experiment \( \mu=0.005 \). Each agent is trained on 4000 tasks over 6 epochs (total number of iterations = 24000). As in training, 10 data points from each sinusoid with 1 gradient update is used for adaptation.

\subsection{The Classification Experiment Dataset Details}\label{appendix dataset details}
The Omniglot dataset comprises 1623 characters from 50 different alphabets. Each character has 20 samples, which were hand drawn by 20 different people. Therefore, it is suitable for few-shot learning scenarios as there is small number of data per class.

\begin{figure}[H]
\centering
\subfloat[][]{
  \includegraphics[width=.1\linewidth]{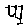}%
}
\hfil
\subfloat[][]{
  \includegraphics[width=.1\linewidth]{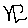}%
}
\hfil
\subfloat[][]{
  \includegraphics[width=.1\linewidth]{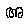}
}
\hfil
\subfloat[][]{
  \includegraphics[width=.1\linewidth]{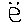}
}
\hfil
\subfloat[][]{
  \includegraphics[width=.1\linewidth]{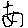}
}
\hfil
\subfloat[][]{
  \includegraphics[width=.1\linewidth]{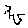}
}
\caption{The Omniglot dataset: Samples from 6 different characters}\label{omniglot samples}
\end{figure}

The MiniImagenet dataset consists of 100 classes from ImageNet \citep{imagenet} with 600 samples from each class. It captures the complexity of ImageNet samples while not working on the full dataset which is huge.

\begin{figure}[H]
\centering
\subfloat[][]{
  \includegraphics[width=.25\linewidth]{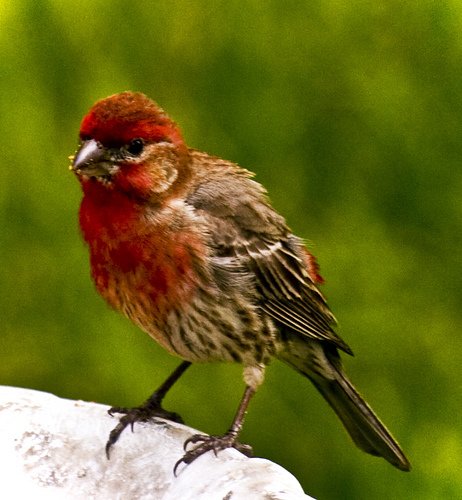}%
}
\hfil
\subfloat[][]{
  \includegraphics[width=.25\linewidth]{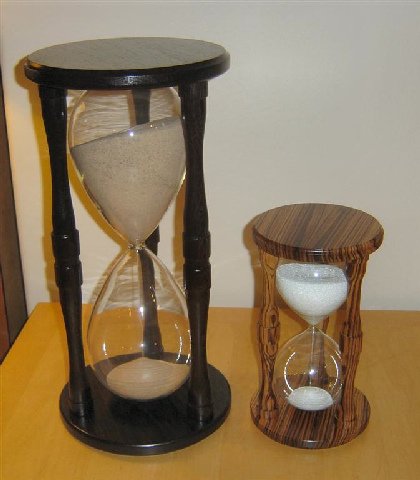}%
}
\hfil
\subfloat[][]{
  \includegraphics[width=.25\linewidth]{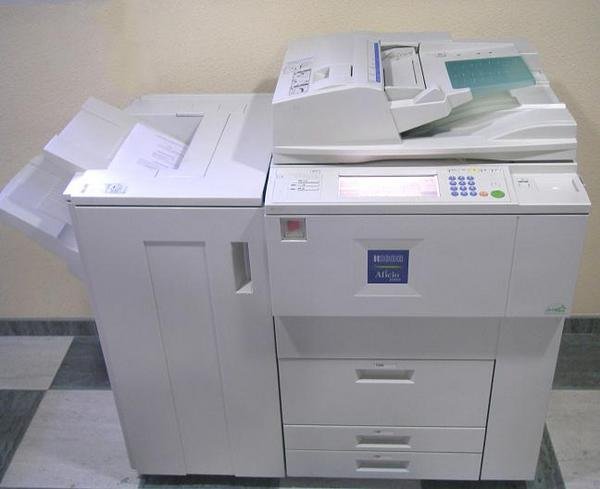}
}
\caption{The MiniImagenet dataset: Samples from 3 different classes}\label{omniglot samples}
\end{figure}

\subsection{The Classification Experiment Details}\label{appendix classification details}
Following \citep{Santoro16} and \citep{FinnAL17}, Omniglot is augmented with multiples of 90 degree rotations of the images. All agents are equipped with the same convolutional neural network architecture. Convolutional neural network architectures are the same as the architectures in \citep{FinnAL17} which are based on \citep{VinyalsMatchingNetworks}. The only difference is that we use max-pooling instead of strided convolutions for Omniglot. 

In all simulations, each agent runs over 1000 batches of tasks over 3 epochs. For the Adam experiments \( \mu = 0.001 \) and for the SGD experiments \( \mu=0.1 \). For Omniglot experiments, single gradient step is used for adaptation in both training and testing and \( \al=0.4 \). Training meta-batch size is equal to 16 for 5-way 1-shot and 8 for 5-way 5-shot. The plots are showing an average result of 100 tasks as testing meta-batch consists of 100 tasks. For MiniImagenet experiments, 10-query examples are used, testing meta-batch consists of 25 tasks and \( \al=0.01\). The number of gradient updates is equal to 5 for training, 10 for testing.  For 5-way 1-shot, training meta-batch has 4 tasks whereas 5-way 5-shot training meta-batch has 2 tasks. Note that the first testings occur after the first training step. In other words, the first data of all classification plots are at 1st iteration, not at 0th iteration.
\subsection{Additional Plots}\label{appendix additional plots}
In this section, we provide additional plots.

In Figure \ref{fig:reg_sgd}, the results of the SGD experiment on regression setting can be found. Evidently, Dif-MAML is matching the centralized solution and outperforming the non-cooperative solution as our analysis suggested. Also, similar to the Adam experiment, the relative performances stay the same with the number of gradient updates.

In Figures \ref{fig:min_sgd_5w5sr},\ref{fig:min_5w1s},\ref{fig:omn_5w1s},\ref{fig:omn_20w1s} additional plots for MiniImagenet 5-way 5-shot, MiniImagenet 5-way 1-shot, Omniglot 5-way 1-shot and Omniglot 20-way 1-shot can be found, respectively. The results confirm that our conclusions are valid for different task distributions, and they extend to Adam as well as multi-step adaptation in the inner loop.

\begin{figure}[H]
\centering
\subfloat[][]{
  \includegraphics[width=.45\linewidth]{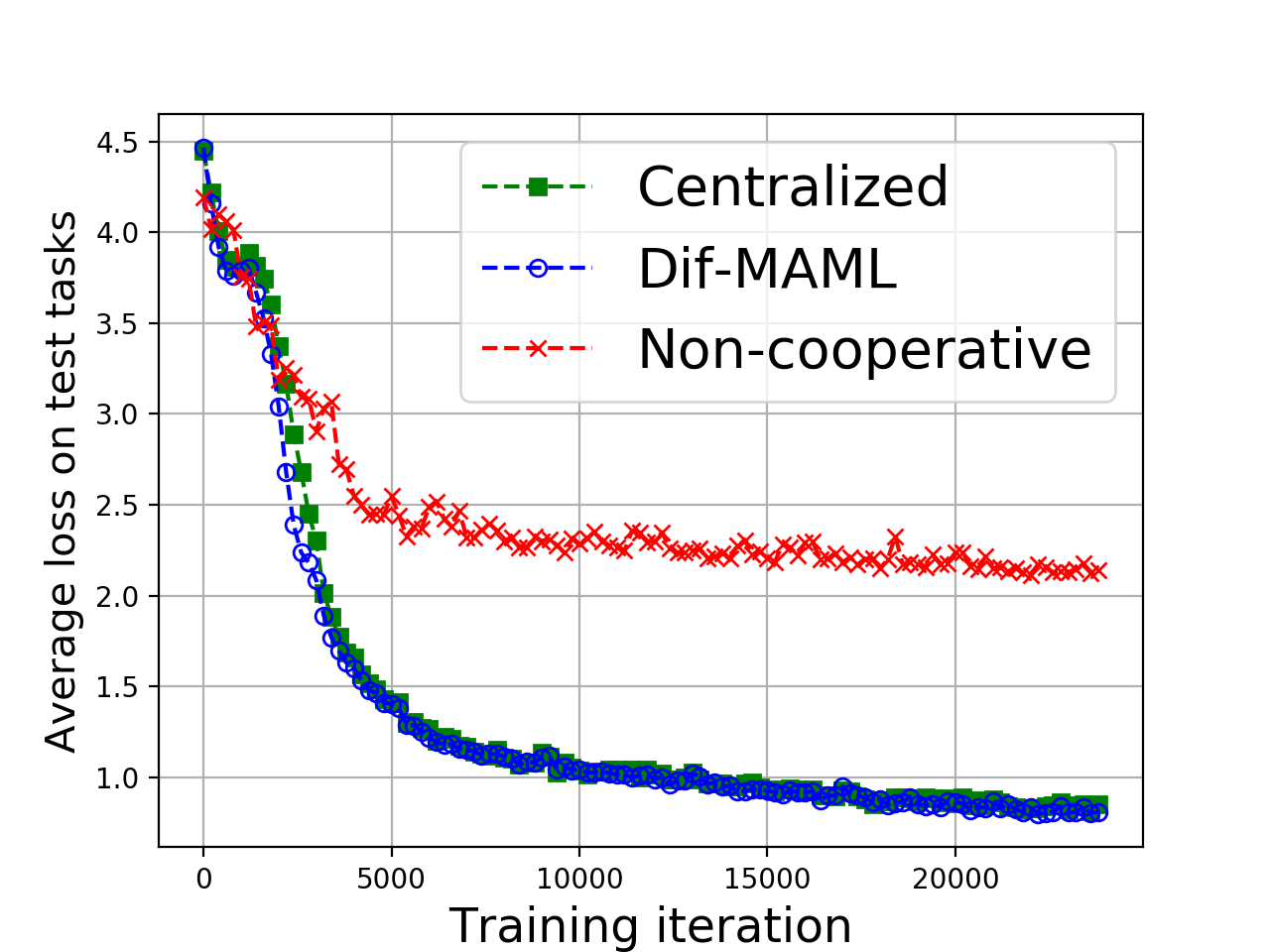}\label{fig:reg_sgd_train}%
}
\hfil
\subfloat[][]{
  \includegraphics[width=.45\linewidth]{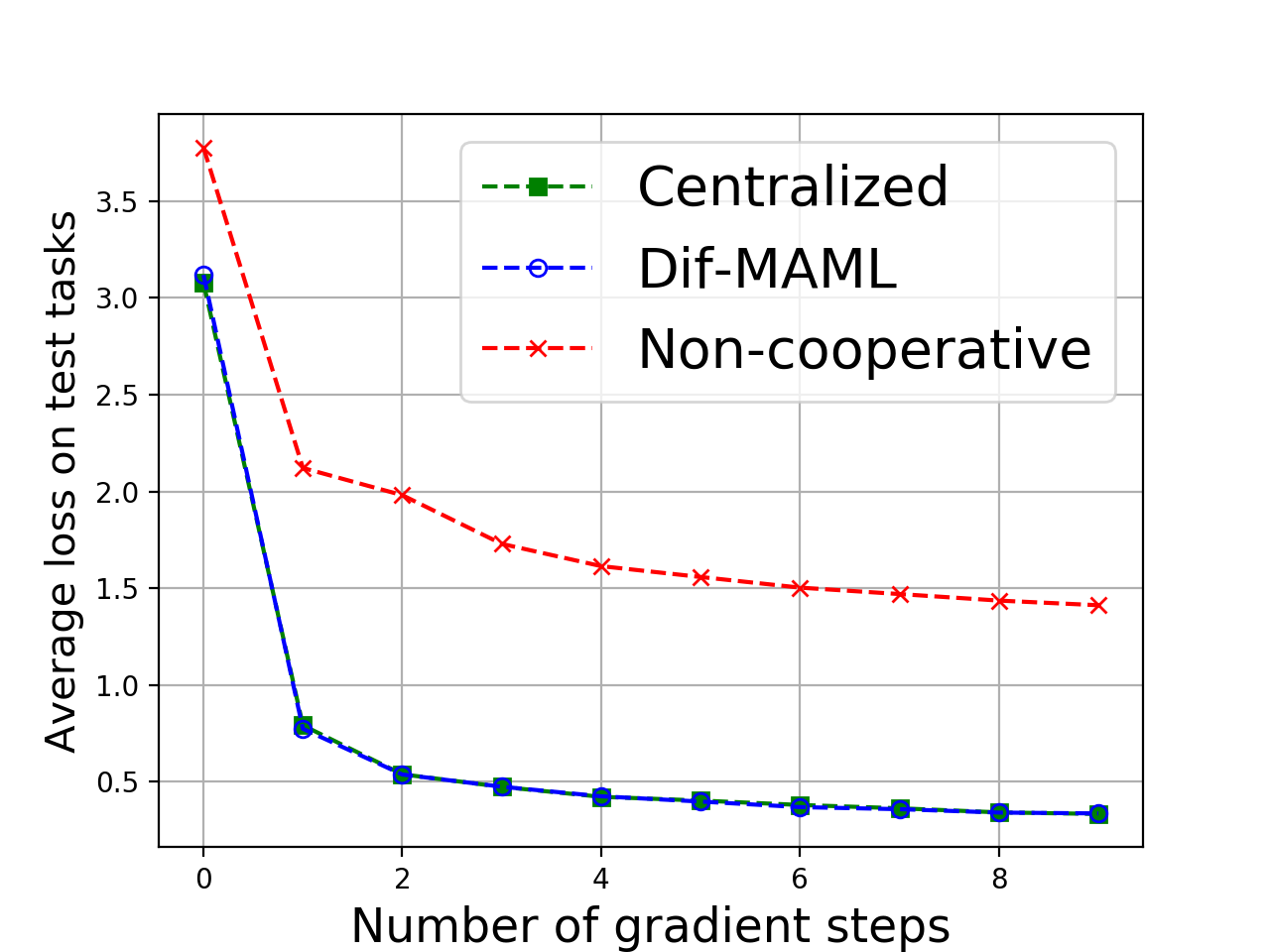}\label{fig:reg_sgd_test}%
}
\caption{(a) Regression-test losses during training -SGD (b) Regression- test losses with respect to number of gradient steps after training-SGD}\label{fig:reg_sgd}
\end{figure}

\begin{figure}[H]
\centering
  \includegraphics[width=.45\linewidth]{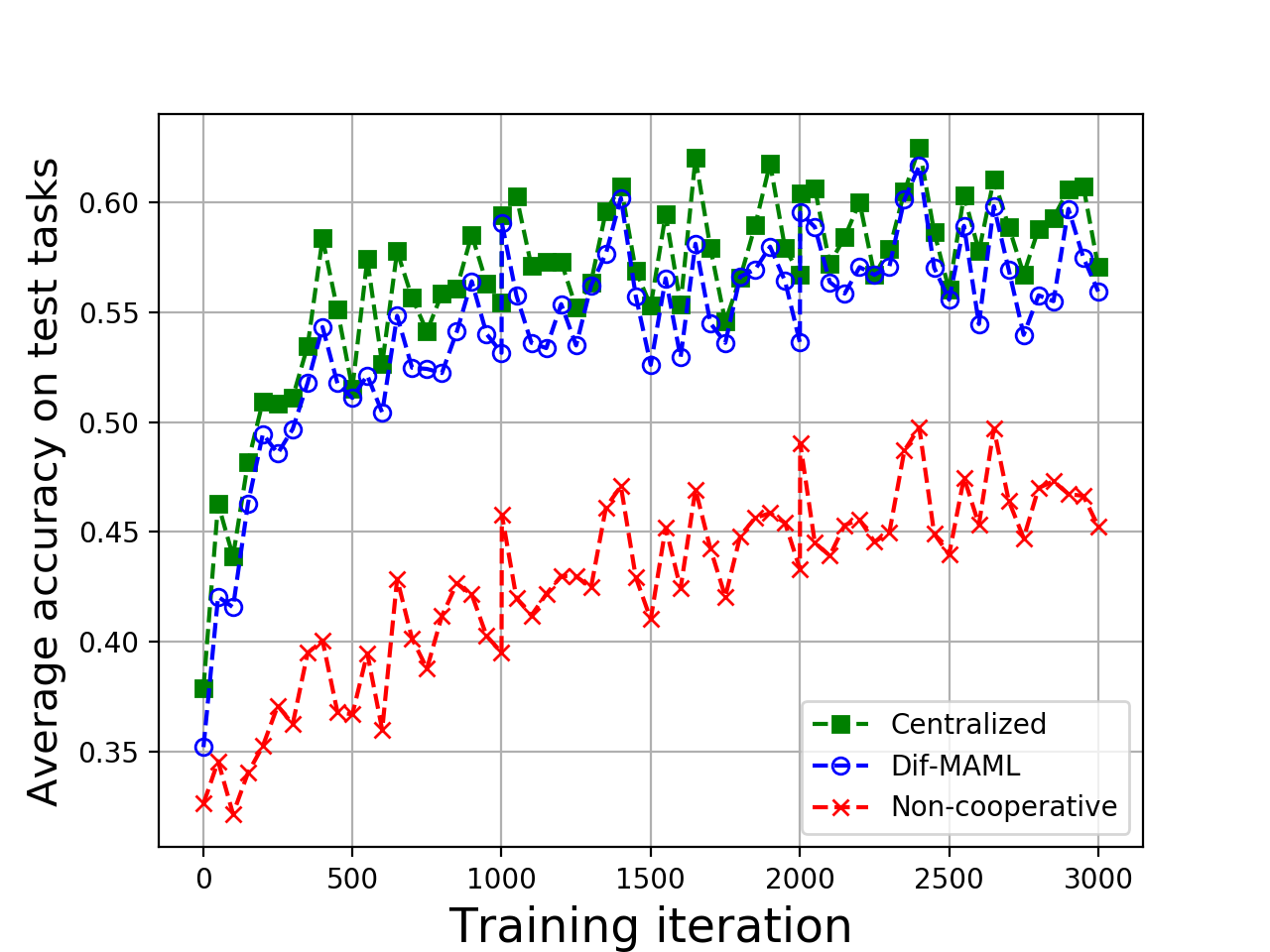}\label{fig:min_sgd_5w5s}%
\caption{MiniImagenet test accuracies during training process: 5-way 5-shot SGD}\label{fig:min_sgd_5w5sr}
\end{figure}

\begin{figure}[H]
\centering
\subfloat[][]{
  \includegraphics[width=.45\linewidth]{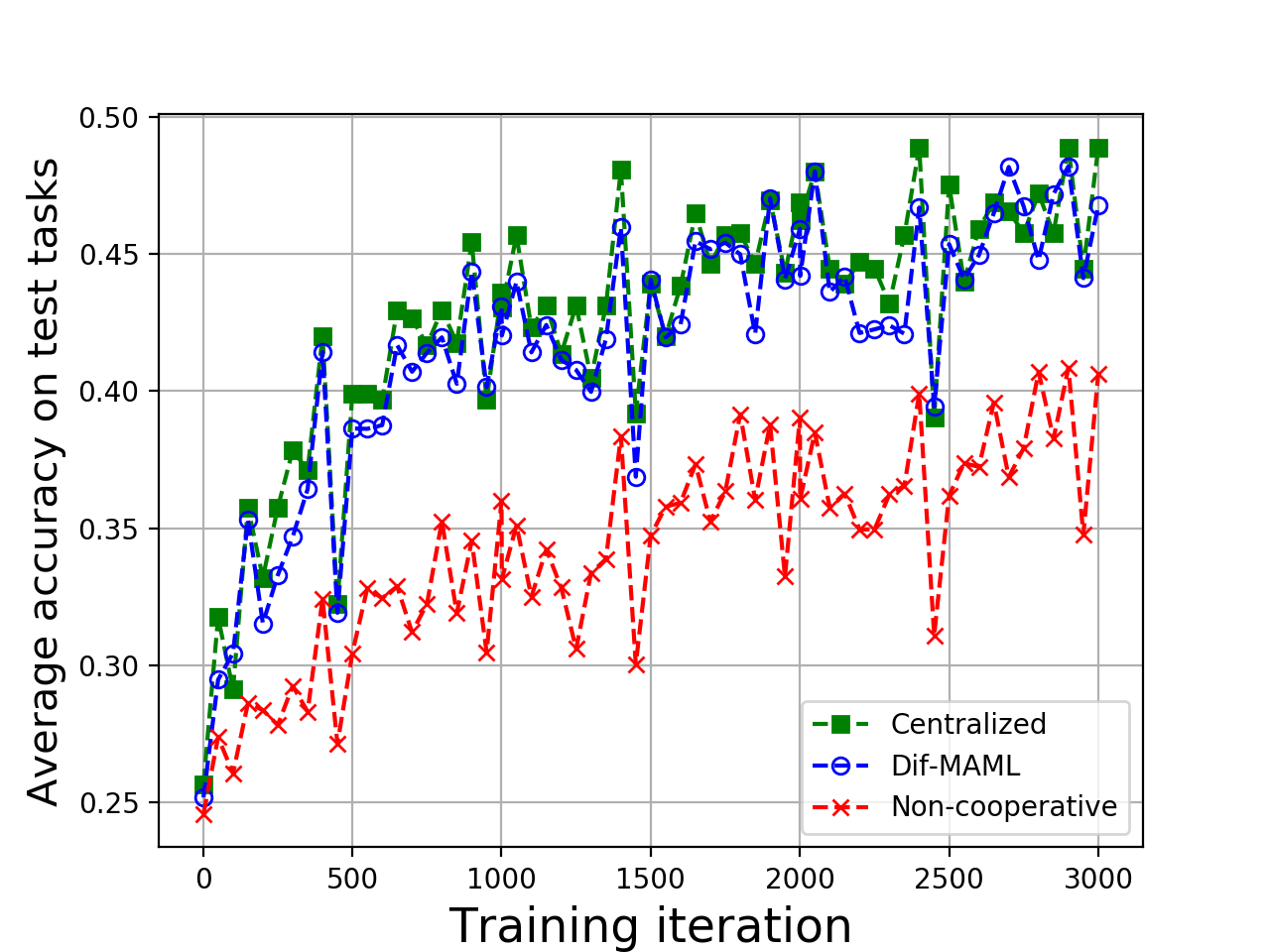}\label{fig:min_adam_5w1s}%
}
\hfil
\subfloat[][]{
  \includegraphics[width=.45\linewidth]{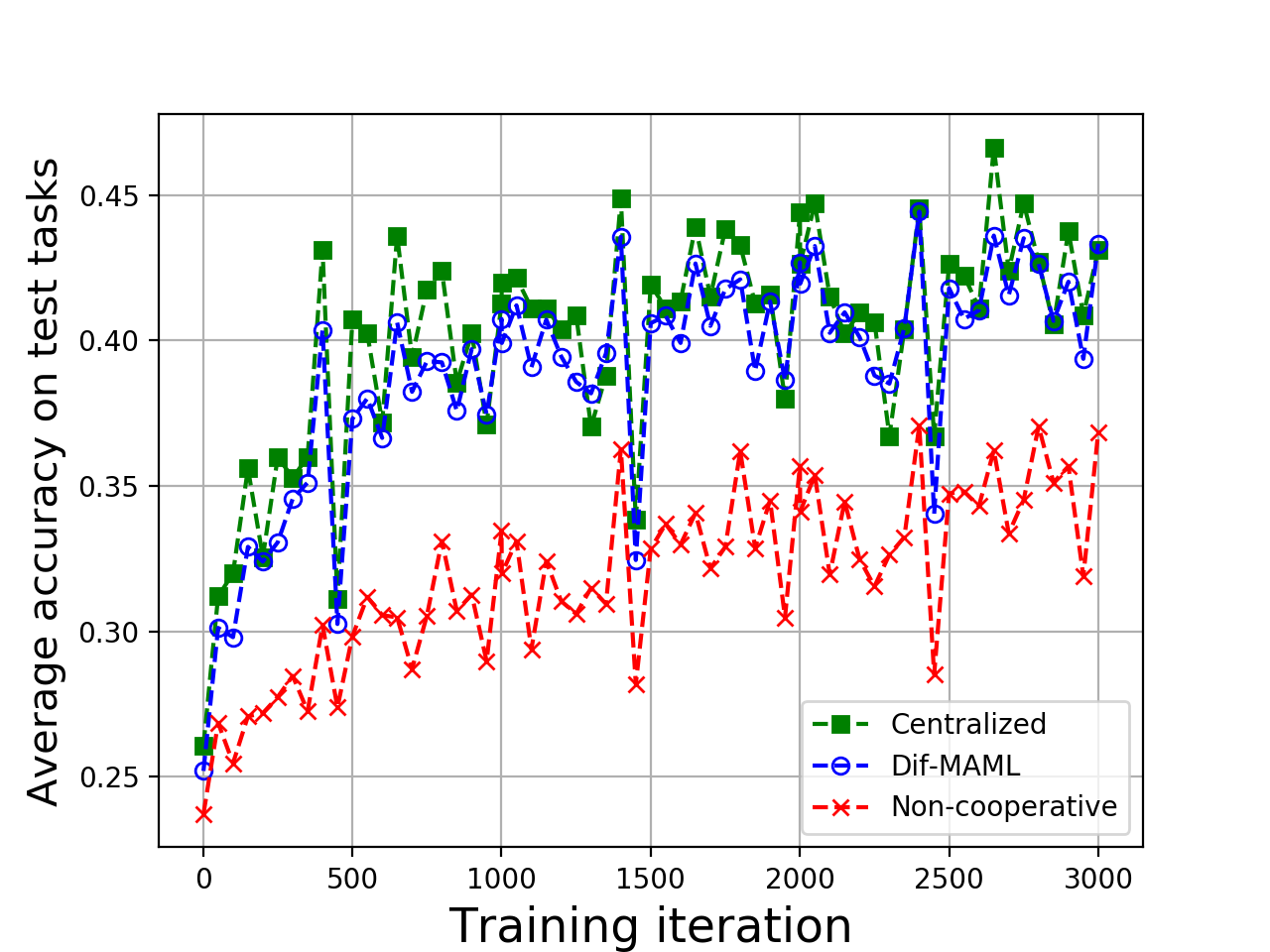}\label{fig:min_sgd_5w1s}%
}
\caption{MiniImagenet test accuracies during training process 5-way 1-shot (a) Adam (b) SGD}\label{fig:min_5w1s}
\end{figure}

\begin{figure}[H]
\centering
\subfloat[][]{
  \includegraphics[width=.45\linewidth]{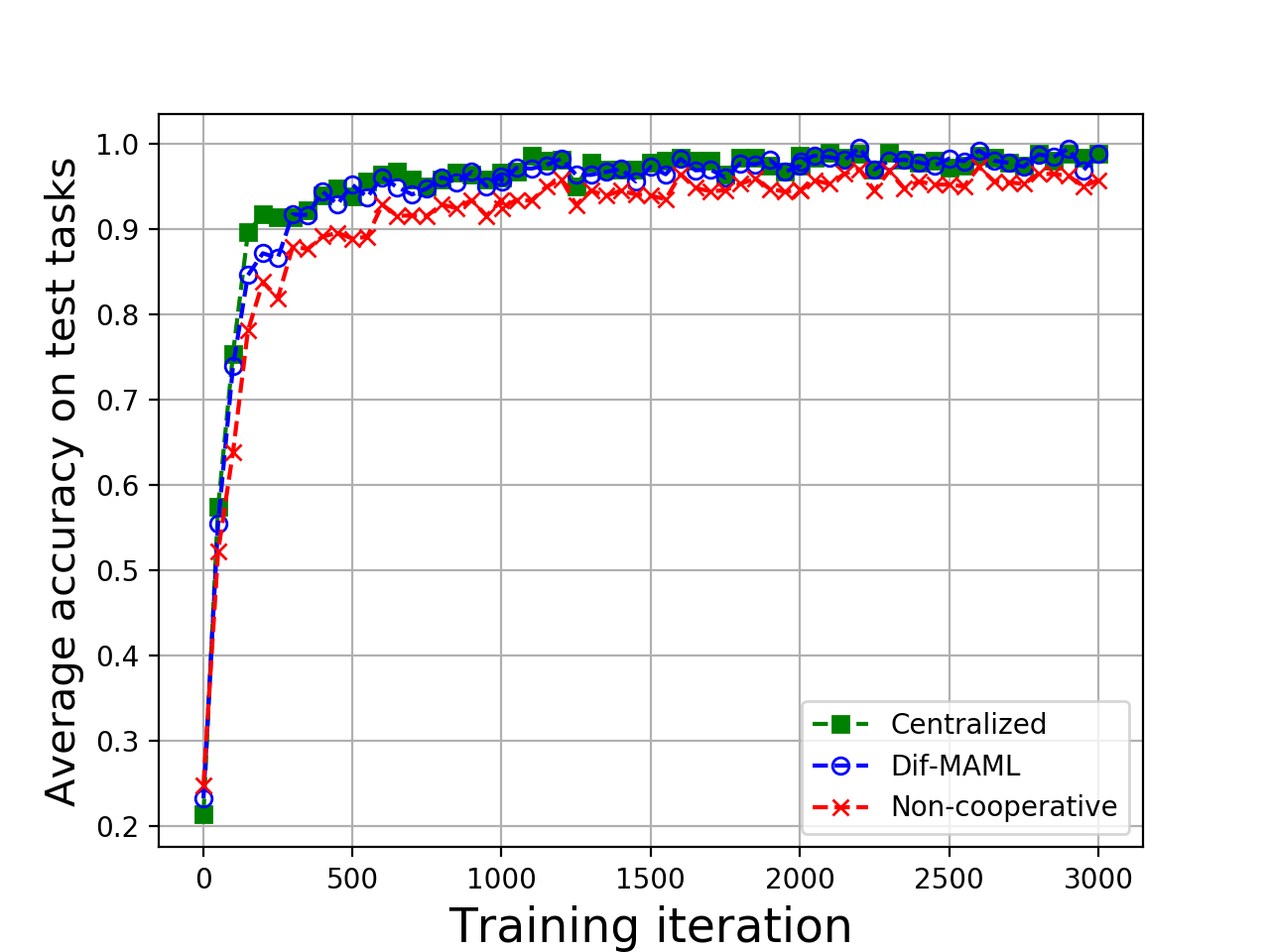}\label{fig:omn_adam_5w1s}%
}
\hfil
\subfloat[][]{
  \includegraphics[width=.45\linewidth]{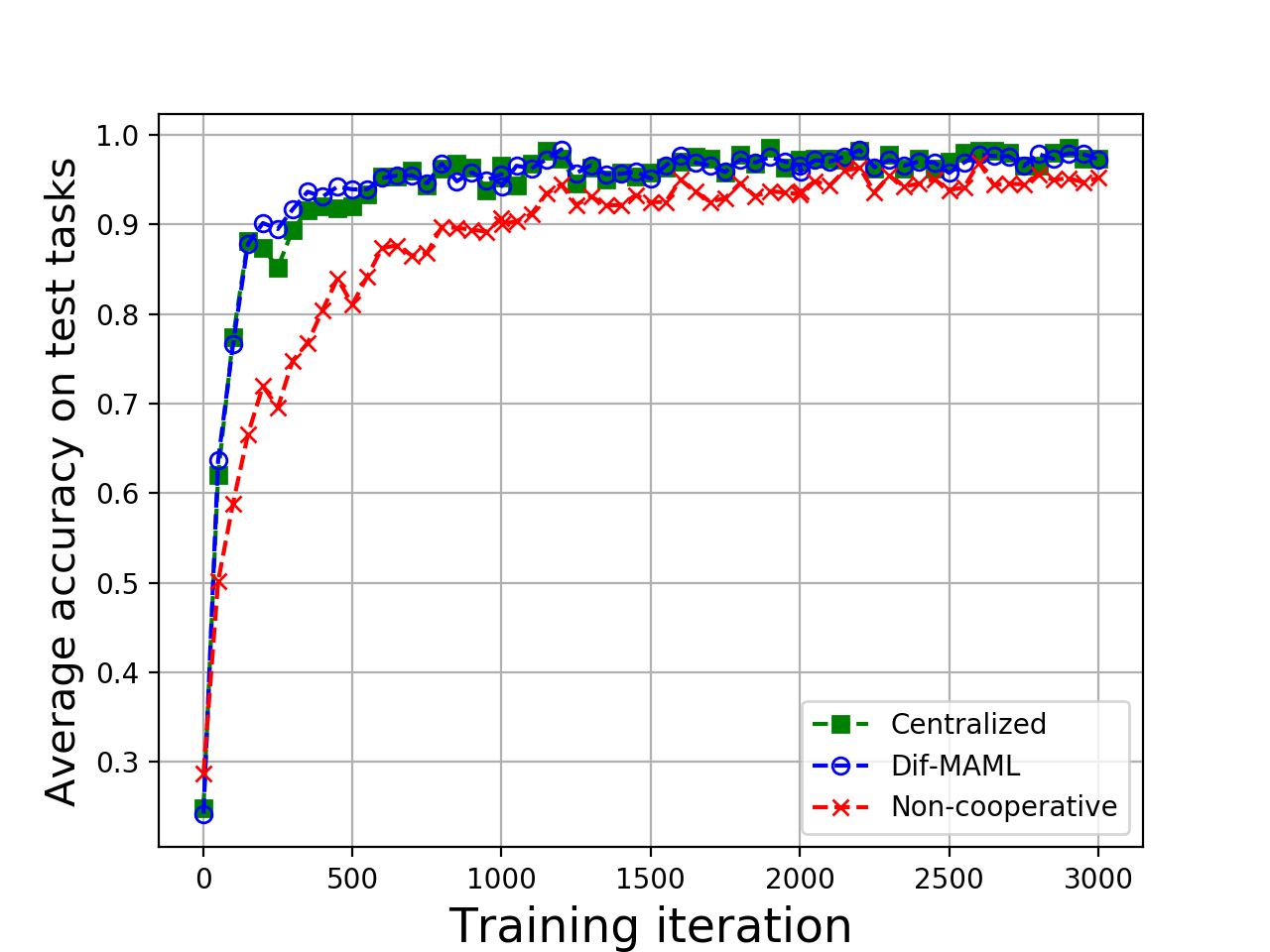}\label{fig:omn_sgd_5w1s}%
}
\caption{Omniglot test accuracies during training process 5-way 1-shot (a) Adam (b) SGD}\label{fig:omn_5w1s}
\end{figure}

\begin{figure}[H]
\centering
\subfloat[][]{
  \includegraphics[width=.45\linewidth]{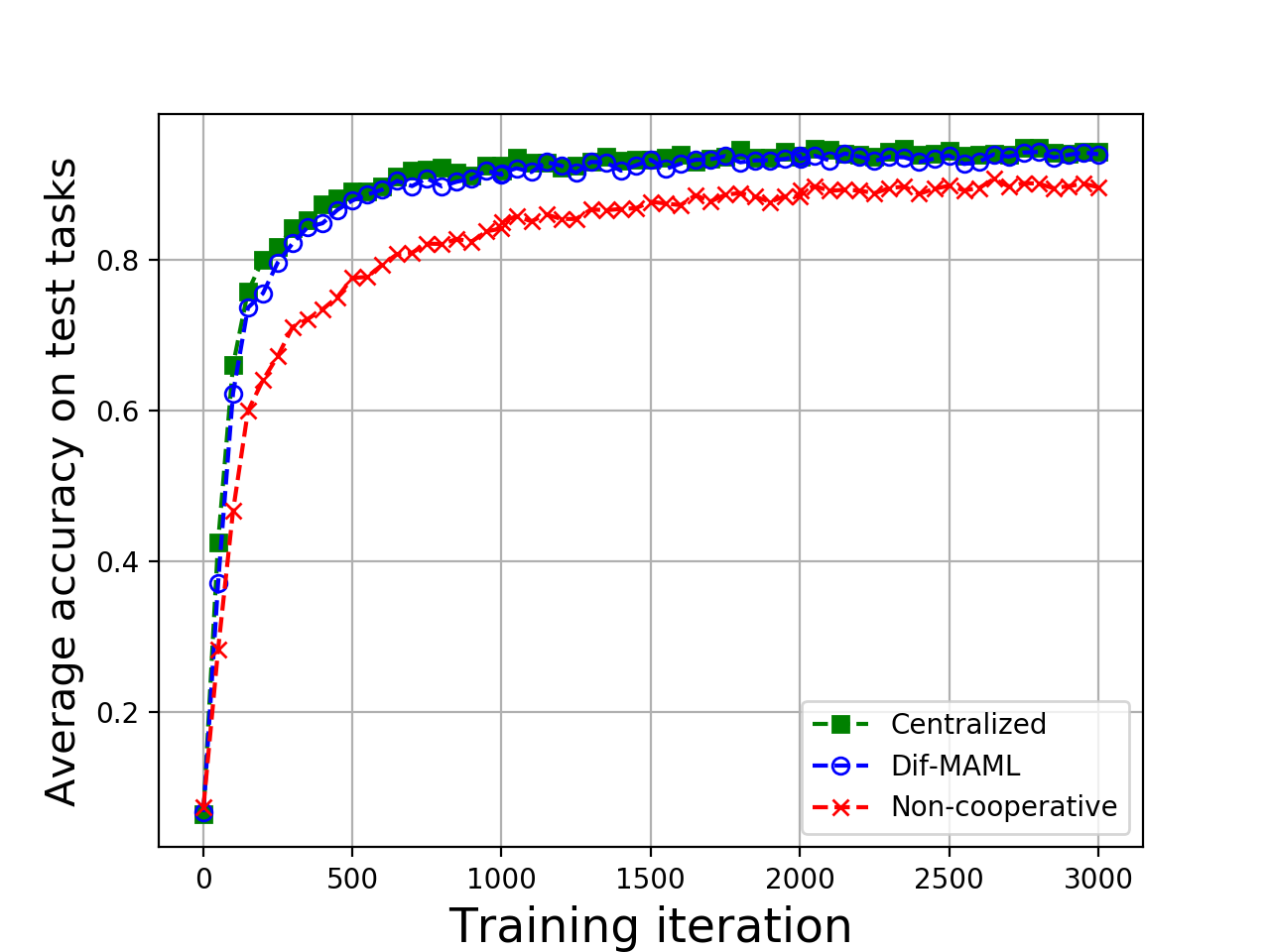}\label{fig:omn_adam_20w1s}%
}
\hfil
\subfloat[][]{
  \includegraphics[width=.45\linewidth]{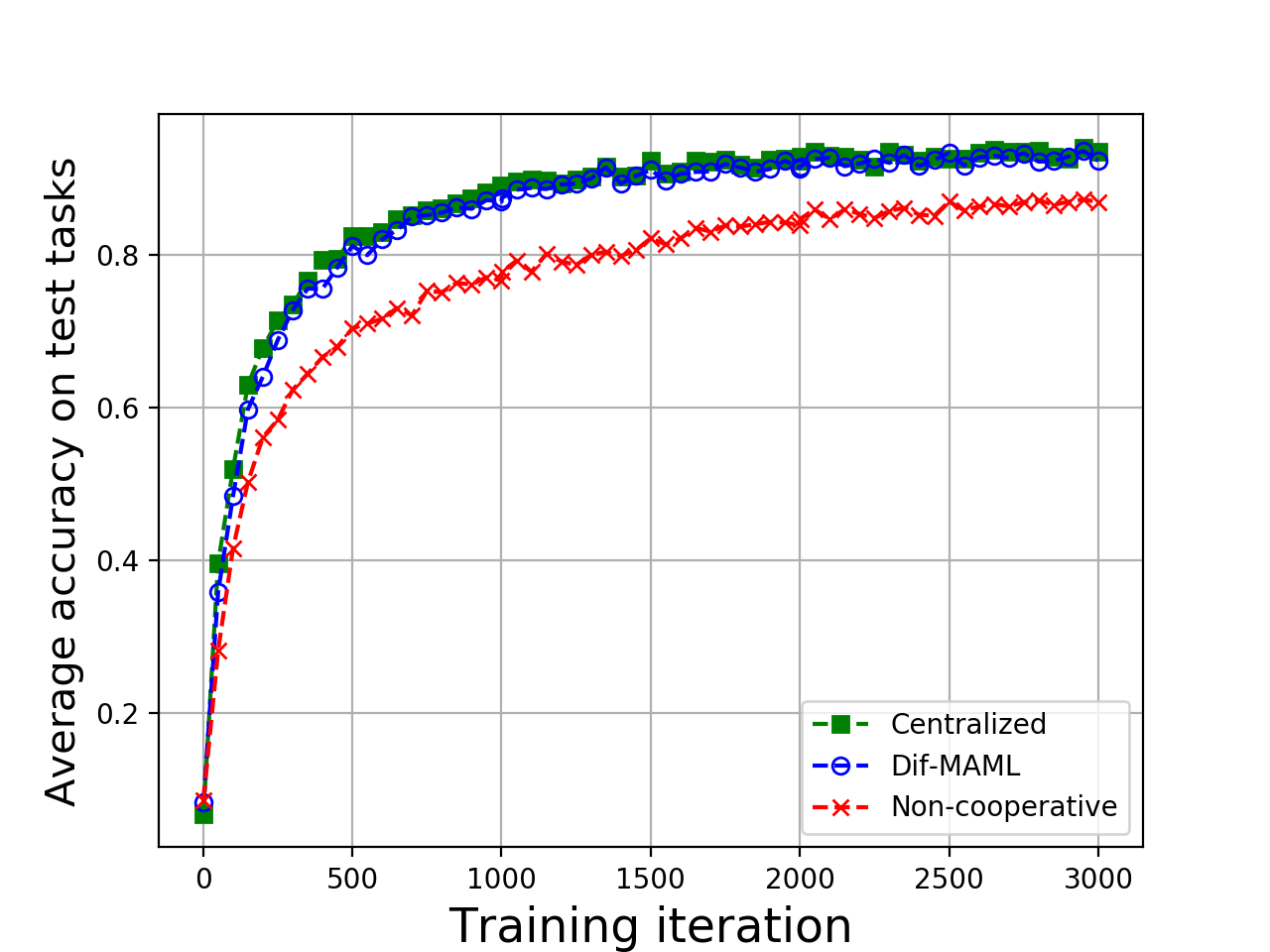}\label{fig:omn_sgd_20w1s}%
}
\caption{Omniglot test accuracies during training process 20-way 1-shot (a) Adam (b) SGD}\label{fig:omn_20w1s}
\end{figure}

\end{document}